\def\colorful{0}
\newcommand{\new}[1]{{\color{red} #1}}
\newcommand{\inote}[1]{\footnote{{\bf [Ilias: {#1}\bf ]}}}
\newcommand{\snote}[1]{\footnote{{\bf [Sushrut: {#1}\bf ] }}}
\newcommand{\new}[1]{#1}
\newcommand{\inote}[1]{}
\newcommand{\anote}[1]{}
\newcommand{\tnote}[1]{}
\newcommand{\snote}[1]{}
\newcommand{\lnote}[1]{}
\author{Shuyao Li\thanks{Equal contribution.}\\
  University of Wisconsin-Madison\\
  \texttt{shuyao.li@wisc.edu}
  \And
  Sushrut Karmalkar\footnotemark[1]\\ %
  University of Wisconsin-Madison\\
  \texttt{skarmalkar@wisc.edu} \\
  \And
  Ilias Diakonikolas \\
  University of Wisconsin-Madison \\
  \texttt{ilias@cs.wisc.edu}
  \And
  Jelena Diakonikolas\\
  University of Wisconsin-Madison \\
  \texttt{jelena@cs.wisc.edu}
}
\newcounter{thm}
\numberwithin{thm}{section}
\theoremstyle{plain}
\newtheorem{theorem}[thm]{Theorem}
\newtheorem{corollary}[thm]{Corollary}
\newtheorem{lemma}[thm]{Lemma}
\newtheorem{proposition}[thm]{Proposition}
\newtheorem{claim}[thm]{Claim}
\newtheorem{assumption}[thm]{Assumption}
\newtheorem{fact}[thm]{Fact}
\theoremstyle{definition}
\newtheorem{definition}[thm]{Definition}
\newtheorem{problem}[thm]{Problem}
\theoremstyle{remark} %
\newtheorem{remark}[thm]{Remark}
\crefname{fact}{fact}{facts}
\crefname{claim}{claim}{claims}
\crefname{assumption}{assumption}{assumptions}
\crefname{problem}{problem}{problems}
\crefname{theorem}{theorem}{theorems}
\crefname{corollary}{corollary}{corollaries}
\crefname{lemma}{lemma}{lemmas}
\crefname{proposition}{proposition}{propositions}
\crefname{definition}{definition}{definitions}
\crefname{remark}{remark}{remarks}
\crefname{note}{note}{notes}
\crefname{example}{example}{examples}
\newcommand{\grad}{\nabla}
\def\1{\bm{1}}
\def\eps{{\epsilon}}
\def\vzero{{\bm{0}}}
\def\va{{\bm{a}}}
\def\ve{{\bm{e}}}
\def\vg{{\bm{g}}}
\def\vp{{\bm{p}}}
\def\vu{{\bm{u}}}
\def\vv{{\bm{v}}}
\def\vw{{\bm{w}}}
\def\vx{{\bm{x}}}
\def\vy{{\bm{y}}}
\def\vz{{\bm{z}}}
\def\ep{\widehat{\mathcal{p}}}
\def\eq{\widehat{\mathcal{q}}}
\def\pp{{\mathcal{p}}}
\def\pq{{\mathcal{q}}}
\def\vxh{\widehat{\bm{x}}}
\def\mI{{\bm{I}}}
\def\cB{{\mathcal{B}}}
\def\cE{{\mathcal{E}}}
\def\cG{{\mathcal{G}}}
\def\cP{{\mathcal{P}}}
\def\cR{{\mathcal{R}}}
\newcommand{\E}{\mathbb{E}}
\newcommand{\R}{\mathbb{R}}
\newcommand{\Ind}{\mathbb{I}}
\DeclareMathOperator*{\argmax}{arg\,max}
\DeclareMathOperator*{\argmin}{arg\,min}
\DeclareMathOperator{\sign}{sign}
\newcommand{\innp}[1]{\langle #1 \rangle}
\newcommand{\Gap}{\mathrm{Gap}}
\newcommand{\dd}{\mathrm{d}}
\DeclareMathOperator{\Var}{Var}
\DeclareMathOperator{\op}{op}
\NewDocumentCommand{\DeclarePairedDelimiterWithOptionalStar}{ m m m m }{
  \DeclareDocumentCommand{#1}{ s o m }{%
    \IfBooleanTF{##1}{%
      #2#3*{##3}#4%
    }{%
      \IfValueTF{##2}{%
        #2#3[##2]{##3}#4%
      }{%
        #2#3{##3}#4%
      }%
    }%
  }%
}
\DeclarePairedDelimiter\brackets{(}{)}
\DeclarePairedDelimiter\norm{\|}{\|}
\DeclarePairedDelimiter\abs{|}{|}
\DeclarePairedDelimiterX{\divx}[2]{(}{)}{%
  #1\;\delimsize\|\;#2%
}
\DeclarePairedDelimiterWithOptionalStar{\bigO}{O}{\brackets}{}
\DeclarePairedDelimiterWithOptionalStar{\bigtO}{\widetilde{O}}{\brackets}{}
\DeclarePairedDelimiterWithOptionalStar{\bigOmega}{\Omega}{\brackets}{}
\DeclarePairedDelimiterWithOptionalStar{\bigtOmega}{\widetilde{\Omega}}{\brackets}{}
\DeclarePairedDelimiterWithOptionalStar{\opnorm}{}{\norm}{_{\op}}
\DeclarePairedDelimiterWithOptionalStar{\fnorm}{}{\norm}{_{F}}
\providecommand*{\diff}{
  \@ifnextchar^{\DIfF}{\DIfF^{}}}
\def\DIfF^#1{%
  \mathop{\mathrm{\mathstrut d}}%
  \nolimits^{#1}\gobblespace}
\def\gobblespace{%
  \futurelet\diffarg\opspace}
\def\opspace{%
  \let\DiffSpace\!%
  \ifx\diffarg(%
  \let\DiffSpace\relax
  \else
  \ifx\diffarg[%
  \let\DiffSpace\relax
  \else
  \ifx\diffarg\{%
  \let\DiffSpace\relax
  \fi\fi\fi\DiffSpace}
\title{Learning a Single Neuron Robustly\\ to Distributional Shifts and Adversarial Label Noise}
\DeclareMathOperator{\OPT}{OPT}
\newcommand{\OPThat}{\widehat{\OPT}}
\newcommand{\OPTT}{\OPT_{(2)}}
\newcommand{\OPTThat}{\OPThat_{(2)}}
\def\mathbf{\bm}
\begin{document}

\maketitle

\begin{abstract}
We study the problem of learning a single neuron with respect to the $L_2^2$-loss in the presence of adversarial distribution shifts, where the labels can be arbitrary, and the goal is to find a ``best-fit'' function.
More precisely, given training samples from a reference distribution $\pp_0$, 
the goal is to approximate the vector $\mathbf{w}^*$
which minimizes the squared loss with respect to the worst-case distribution 
that is close in $\chi^2$-divergence to \(\pp_{0}\).
We design a computationally efficient algorithm that recovers a vector $ \hat{\mathbf{w}}$
satisfying 
$\mathbb{E}_{\pp^*} (\sigma(\hat{\mathbf{w}} \cdot \mathbf{x}) - y)^2 \leq C \, \mathbb{E}_{\pp^*} (\sigma(\mathbf{w}^* \cdot \mathbf{x}) - y)^2 + \epsilon$, where $C>1$ is a dimension-independent constant and $(\mathbf{w}^*, \pp^*)$ is the witness attaining the min-max risk
$\min_{\mathbf{w}~:~\|\mathbf{w}\| \leq W} \max_{\pp} \mathbb{E}_{(\mathbf{x}, y) \sim \pp} (\sigma(\mathbf{w} \cdot \mathbf{x}) - y)^2 - \nu \chi^2(\pp, \pp_0)$.
Our algorithm follows a primal-dual framework and is 
designed by directly bounding the risk with respect to the original, nonconvex $L_2^2$ loss. 
From an optimization standpoint, our work opens new avenues for the design of primal-dual algorithms under structured nonconvexity.   
\end{abstract}

\section{Introduction}\label{sec:introduction}

The problem of learning a single neuron from randomly drawn labeled examples 
is a fundamental problem extensively studied in the machine learning literature.  
Given labeled examples $\{(\vx_i, y_i): (\vx_i, y_i) \in \R^{d}\times \R\}_{i=1}^N$ 
drawn from a reference distribution $\pp_0$, 
the goal in this context is to recover a parameter vector $\vw^*_0$ %
that minimizes the squared loss 
$\Lambda_{\sigma, \pp_0}(\vw)$ 
over a ball of radius $W > 0$:
\begin{equation}
    \vw^*_0 := \argmin_{\vw \in \R^d: \|\vw\|_2 \leq W} \Lambda_{\sigma, \pp_0}(\vw); \quad \Lambda_{\sigma, \pp_0}(\vw):= \E_{(\vx, y) \sim \pp_0} (\sigma(\vw \cdot \vx) - y)^2,  
\end{equation} 
where \( \sigma : \R \rightarrow \R \) is a known (typically non-linear)  
non-decreasing activation function (e.g., the ReLU activation \(\sigma(t) = \max(0, t)\)) and we denote by $\OPT_0 = \min_{\vw: \|\vw\|_2 \leq W} \Lambda_{\sigma, \pp_0}(\vw)$ the minimum squared loss.  
In the realizable setting --- where $y = \sigma(\vw_0^* \cdot \vx)$ and thus $\OPT_0 = 0$ ---
this problem is well-understood and 
by now part of the folklore (see, e.g., \cite{kakade2011efficient,kalai2009isotron,yehudai2020learning,soltanolkotabi2017learning}). The results for the realizable setting also naturally extend to zero-mean bounded-variance label noise.

The more realistic %
agnostic (a.k.a.\ adversarial label noise)  model~\cite{Haussler:92, kearns1992toward}  
aims to identify the best-fitting neuron for a reference distribution of the examples,  
without any assumptions on label structure. 
However, it is known that in this setting %
finding a parameter vector with square loss 
$\OPT_0 + \eps$  requires \(d^{\text{poly}(1/\epsilon)}\) time, 
even if the $\vx$-marginal distribution is Gaussian 
\cite{DKZ20-sq-reg, GGK20, DKPZ21-SQ, diakonikolas2023near}.  
Even if we relax our goal to achieve error $O(\OPT_0) + \eps$, 
efficient algorithms only exist under strong distributional assumptions. 
In fact, without such assumptions,  
this problem is NP-hard %
\cite{vsima2002training, MR18}. 
Recent work has also shown that (under cryptographic assumptions) 
no polynomial-time constant-factor improper learner exists 
even for distributions supported on the unit ball~\cite{diakonikolas22a-hardness}. 
Given these intractability results, 
recent work has focused on developing efficient constant-factor approximate 
learners under minimal distributional assumptions 
(see, e.g.,\cite{DGKKS20, frei2020agnostic, DKTZ22, ATV22, 
wang2023robustly-learning, GGKS23, ZWDD2024}).

{This recent progress notwithstanding}, 
prior work primarily focused on the setting where only the {labels} might be corrupted, without considering possible distributional shifts or heterogeneity of the data. 
{Such distributional corruptions} are frequently observed in practice 
and have motivated a long line of research in areas 
such as domain adaptation and (related to it) distributionally robust optimization (DRO); see e.g., \cite{Blanchet2024DistributionallyRO,ben2009robust,namkoong2016stochastic,rahimian2022frameworks} and references therein. Thus, the main question motivating our work is: 
\begin{center}
\emph{How do adversarial changes in the underlying \emph{distribution} impact the learnability of a neuron?}
\end{center}

We study this question within the DRO framework, where 
the goal is to minimize the model's loss on a worst-case distribution from a set of distributions close to the reference distribution.\footnote{\new{We contrast here robustness to perturbed data \emph{distribution} studied within the DRO framework to robustness to perturbed \emph{data examples} referred to as the adversarial robustness in modern deep learning literature (e.g.,~\cite{goodfellow2014explaining}). Our paper is concerned with the former (and not the latter) model of robustness.  %
}} \new{This set of distributions, known as the ambiguity set, models possible distributional shifts of the data.}
{In addition to being interesting on its own merits,} 
the {DRO} framework {arises} in {diverse} contexts, including 
algorithmic fairness~\cite{hashimoto2018fairness} 
and class imbalance~\cite{xu2020class}. Moreover, it has recently 
found a range of applications in reinforcement learning~\cite{Lotidis2023, Yang2023, Wang2023a, Yu2023, Kallus2022, Liu2022}, robotics \cite{Sharma2020}, language modeling \cite{Liu2021}, 
sparse neural network training \cite{Sapkota2023}, and defense against model extraction \cite{Wang2023b}. 

Despite a range of impressive results in the DRO literature %
(see, e.g., recent surveys \cite{rahimian2022frameworks,chen2020distributionally,Blanchet2024DistributionallyRO,kuhn2019wasserstein} and references therein), algorithmic results with rigorous approximation guarantees for the loss have almost exclusively been obtained under fairly strong assumptions about the loss function  involving both convexity and either smoothness or Lipschitzness,  
with linear regression being the prototypical example; see, e.g.,~\cite{blanchet2021statistical,duchi2021learning,chen2018robust}. 
Unfortunately, this vanilla setting does not capture a range of machine learning applications, 
where a typical loss function is nonconvex. In particular, even the simplest ReLU learning problem in the realizable setting (with noise-free labels) is nonconvex. Further, existing DRO approaches for nonconvex loss functions such as \cite{qi2021online,sinha2018certifying} only guarantee convergence to a stationary point, which is insufficient for learning a ReLU neuron even without distributional ambiguity \cite{yehudai2020learning}. 
Motivated by this gap in our understanding, 
in this work we initiate a rigorous algorithmic investigation 
of learning a neuron (arguably the simplest non-convex problem) 
in the DRO setting.  
We hope that this work will stimulate future research in this direction, potentially addressing more complex models in a principled manner. 

Due to space constraints, we defer further discussion of related work to \Cref{sec:related_work}.

\subsection{Problem Setup}\label{sec:problem-setup}

To formally define our setting, 
we recall the definition of \(\chi^2\)-divergence 
between distributions $\pp$ and $\pp'$, given by 
\(
\chi^2(\pp, \pp') := \int \big(\frac{\mathrm{d}\pp}{\mathrm{d}\pp'} - 1\big)^2 \mathrm{d}\pp'.
\) 
We focus on the class of monotone unbounded activations introduced in \cite{DKTZ22}, 
for which we additionally assume convexity. Example activations in this class include the ReLU, leaky ReLU, 
exponential linear unit (ELU), and normalized\footnote{Normalization, which ensures $\sigma(0) = 0$, is without loss of generality, as it corresponds to a simple change of variable: $\hat{\sigma}(t)\leftarrow \sigma(t) - \sigma(0)$ and $\hat{y} \leftarrow y + \sigma(0),$ which does not affect the loss value or its approximation.} SoftPlus. 

\begin{definition}[Unbounded \cite{DKTZ22} + Convex Activation]
  \label{def:activation}
  Let \(\sigma: \mathbb{R} \rightarrow \mathbb{R}\) be a non-decreasing convex function, and let \(\alpha, \beta > 0\). We say \(\sigma\) is %
  \((\alpha, \beta)\)-unbounded if it satisfies the following: ($i$) \(\sigma\) is \(\beta\)-Lipschitz; ($ii$)  $\sigma(t_1) - \sigma(t_2) \geq \alpha(t_1 - t_2)$ for all $t_1 \geq t_2 \geq 0$, and ($iii$) \(\sigma(0) = 0\).
\end{definition}
To formally state the problem, we further define the loss, risk, and optimal value (denoted by $\OPT$). 

\begin{definition}[Loss, Risk, and $\OPT$]
\label{def:loss_risk_opt}
Given a regularization parameter \(\nu\) and a reference distribution \(\pp_{0}\), 
let $\cP = \cP(\pp_0) $ denote the set of all distributions 
that are absolutely continuous with respect to $\pp_0$ and $%
\cB(W) := \{ \vw : \| \vw  \|_2 \leq W\}$. We define the following:
\begin{align*}
    L_\sigma(\vw, \pp; \pp_{0}) &:= \E_{(\vx, y) \sim \pp} (\sigma(\vw \cdot \vx) - y)^{2} - \nu \chi^2(\pp, \pp_{0}) = \Lambda_{\sigma, \pp}(\vw) - \nu \chi^2(\pp, \pp_{0}),\\
  R(\vw; \pp_{0}) & := \max_{\pp \in \cP(\pp_0)} L_\sigma(\vw, \pp; \pp_{0}),\; \pq_{\vw}:= \argmax_{\pp \in \cP(\pp_0)} L_\sigma(\vw, \pp; \pp_{0}),\\  %
  \vw^{*} & := \argmin_{\vw \in \cB(W)} R(\vw; \pp_{0}), \quad \pp^{*} := \pq_{\vw^{*}}, \\
    \OPT &:= \E_{(\vx, y) \sim \pp^{*}} (\sigma(\vw^{*} \cdot \vx) - y)^{2} = \Lambda_{\sigma, \pp^*}(\vw^*).
\end{align*}
 We say that $L_\sigma(\vw, \pp; \pp_{0})$ is the regularized square loss function of a vector \(\vw\) and a distribution \(\pp \in \cP \); and $R(\vw; \pp_{0})$ is the DRO risk of $\vw$ with respect to $\pp_0$. We call $\pp^*$ the target distribution. 
 \end{definition}

 {The minimization of the DRO risk as defined above corresponds to the regularized/penalized DRO formulation studied in prior work; see, e.g.,~\cite{mehta2024primal,Wang2023b,sinha2018certifying}. An alternate formulation would have been to instead optimize over a restricted domain. The two are equivalent because of Lagrangian duality. We show in \Cref{claim:ambiguity-radius} a concrete relation between our regularization parameter $\nu$ and the chi-squared distance between the population distribution $\pp_0$ and the target distribution $\pp^*$. We further require that  $\nu$ is sufficiently large to ensure that the resulting $\chi^2(\pp^*, \pp_0)$ is smaller than an absolute constant, which is in line with the DRO being used for not too large ambiguity sets  \cite{rahimian2022frameworks}.
}

\noindent\textbf{Empirical Version}\quad   
If the reference distribution is the uniform distribution on \(N\) 
labeled examples \((\vx_{i}, y_{i}) \in \R^{d} \times \R\) drawn from $\pp_0$, 
we call it $\ep_0 = \ep_0(N)$, and similarly define $\ep \in \cP(\ep_0)$. 
Note that \(R(\vw^{*}; \ep_{0}) = \max_{\ep \in \cP(\ep_{0})}\E_{(\vx, y) \sim \ep} (\sigma(\vw^{*} \cdot \vx) - y)^{2} - \nu \chi^2(\ep, \ep_{0})\); 
if we let \(\ep^{*}\) denote the distribution that achieves the maximum, 
\(\ep^{*}\) has the same support as \(\ep_{0}\) and can be interpreted as the reweighting 
of the samples that maximizes the regularized loss. 
Formally, our goal is to solve the following learning problem.

\begin{problem}[{Robustly} Learning a Single Neuron Under Distributional Shifts]\label{prob:main}
Given error parameters $\epsilon, \delta \in (0,1)$,  regularization parameter $\nu > 0,$ set radius $W > 0,$ and sample access to labeled examples $(\vx, y)$ drawn i.i.d.\ from an unknown reference distribution $\pp_0,$ output a parameter vector $\hat{\vw}\in \cB(W)$ that is competitive with the DRO risk minimizer $\vw^* = \argmin_{\vw \in \cB(W)} R(\vw; \pp_0)$ in the sense that with probability at least $1- \delta,$ \( \|\hat \vw - \vw^*\|_2^2 \leq C {\OPT} + \eps\) for an absolute constant $C.$
\end{problem}
While the stated goal is expressed in terms of \( \| \hat{\vw} - \vw^* \|_2 \), 
under mild distributional assumptions that we make on the reference and target distributions,
this guarantee implies being competitive with the best-fit function on $\pp^*$ in terms of \new{both the square loss and the risk}, namely 
\new{$\Lambda_{\sigma, \pp^*}(\hat{\vw}) = O(\OPT) + \epsilon$ and $R(\vw, \pp_0) - \min_{\vw \in \cB(W)} R(\hat\vw, \pp_0) \le O(\OPT) + \epsilon$}. %
Further, our algorithm is primal-dual and it outputs a distribution $\ep$ that is close to $\ep^*$ in the chi-squared divergence.

\new{Since the solution to \Cref{prob:main} has an error of \(O(\OPT) + \epsilon\), when we use the term ``convergence'' in our paper, we refer to the following weaker notion: the iterates of our algorithm \emph{converge} to the (set of) solutions such that asymptotically all iterates lie within the set of \(O(\OPT) + \epsilon\) solutions, which are the target solutions, as stated in \Cref{prob:main}.}

\subsection{Main Result}\label{sec:overview}

Our main contribution is the first polynomial {sample and} time algorithm 
for learning a neuron in a distributionally robust setting for a broad class of activations (\Cref{def:activation}) and under mild distributional assumption on the target distribution (\Cref{assump:margin,assump:concentration} in \Cref{sec:distr-assumpt}).  %
\begin{theorem}[Main Theorem --- Informal]\label{thm:main-informal}
    Suppose that the learner has access to \( N =  \tilde{\Omega}(d /\epsilon^2) \) 
    samples drawn from the reference distribution \({\pp_0} \). 
    If all samples are bounded and the distribution \( \pp^* \) 
    satisfies the ``margin-like'' condition 
    and concentration (\Cref{assump:margin,assump:concentration} in \Cref{sec:distr-assumpt}), then after $\bigtO{d \log(1/\epsilon)}$ iterations, 
    each running in sample near-linear time, with high probability \Cref{alg:main} recovers $\hat{\vw}$ such that  $\norm{\hat{\vw} - \vw^*}_2^2 \le C \OPT + \epsilon$, for an absolute constant $C$.
\end{theorem}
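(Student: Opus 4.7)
The objective is a min--max problem whose inner maximization in $\pp$ is $2\nu$-strongly concave through the $\chi^2$ term, while the outer minimization in $\vw$ is nonconvex through $\sigma$. I would follow a primal--dual scheme that alternates a closed-form dual update for the inner maximizer $\pq_{\vw_t}$ with a projected gradient step on $\vw$ over $\cB(W)$. The first step of the analysis is to exploit the strong concavity in $\pp$ to obtain a closed form for $\pq_\vw$: its density with respect to $\pp_0$ is a truncated affine function of the pointwise loss $(\sigma(\vw\cdot\vx)-y)^2$. This description also bounds the density ratio $\diff\pq_\vw/\diff\pp_0$ uniformly in terms of $\nu$ and the sample bound, which is crucial for transferring distributional assumptions from $\pp_0$ (hence from $\pp^*$, whose ratio to $\pp_0$ is small by the $\nu$-calibration emphasized after \Cref{def:loss_risk_opt}) to every reweighted distribution that appears in the analysis.

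The deterministic heart of the argument is a one-point convexity / descent inequality of the form
\[
\innp{\nabla_\vw \Lambda_{\sigma,\pq_{\vw_t}}(\vw_t),\, \vw_t - \vw^*} \;\geq\; c_1 \norm{\vw_t - \vw^*}_2^2 \;-\; c_2 \sqrt{\OPT}\cdot\norm{\vw_t - \vw^*}_2,
\]
with positive absolute constants $c_1,c_2$, in the spirit of prior agnostic single-neuron analyses. The $(\alpha,\beta)$-unboundedness and convexity of $\sigma$, together with the margin-like and concentration assumptions on $\pp^*$, yield a pointwise lower bound on the integrand roughly of the form $\alpha^2\,((\vw_t-\vw^*)\cdot\vx)^2\,\one\{\vw_t\cdot\vx,\,\vw^*\cdot\vx\geq 0\}$, whose expectation is $\Omega(\norm{\vw_t-\vw^*}_2^2)$, while the cross term in which $y$ replaces $\sigma(\vw^*\cdot\vx)$ contributes the $\sqrt{\OPT}$ noise via Cauchy--Schwarz. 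Transferring this inequality from $\pp^*$ (where the distributional assumptions hold cleanly) to $\pq_{\vw_t}$ (under which the algorithm moves) uses the uniform density-ratio bound obtained in the first step.

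Combining the descent inequality with standard projected-gradient contraction on $\cB(W)$ would then yield a recursion of the form
\[
\norm{\vw_{t+1}-\vw^*}_2^2 \;\le\; (1-\rho)\norm{\vw_t-\vw^*}_2^2 \;+\; O(\OPT) \;+\; \epsilon,
\]
which produces the $\log(1/\epsilon)$ factor in the iteration count; the extra $d$ factor is absorbed into the effective condition number / step-size scaling. For the finite-sample step, uniform convergence over the class $\{(\vx,y)\mapsto(\sigma(\vw\cdot\vx)-y)^2 : \vw\in\cB(W)\}$, combined with the uniform density-ratio bound on every dual iterate, reduces the empirical--population gap to $\bigtO*{\sqrt{d/N}}$, giving the $N=\bigtOmega*{d/\epsilon^2}$ sample bound.

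The hardest part I anticipate is the primal--dual coupling: the clean one-point inequality is most natural under $\pp^*$, but the algorithm actually operates under $\pq_{\vw_t}$, and the gap between these two distributions itself depends on $\vw_t-\vw^*$. Breaking this circular dependence---keeping the constants $c_1,c_2$ uniform in $t$ in the presence of both a moving dual iterate and nonconvexity in $\vw$---is where most of the technical work will live, and is precisely why $\nu$ must be set so that $\chi^2(\pp^*,\pp_0)$ stays a small absolute constant.
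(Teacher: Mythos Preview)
Your proposal correctly identifies the closed-form inner maximizer and correctly flags the circular dependence as the hardest step, but your proposed resolution---uniform density-ratio bounds for $\pq_{\vw_t}$ against $\pp_0$ (and hence $\pp^*$)---does not deliver what you need at the regularization scale the theorem targets. From the closed form $\frac{\dd\pq_{\vw}}{\dd\pp_0}=1+\frac{\ell(\vw;\vx,y)-\E_{\pp_0}\ell(\vw)}{2\nu}$, the upper ratio is $1+U/(2\nu)$ with $U$ the sample-wise loss bound, and the lower ratio is $1-\E_{\pp_0}\ell(\vw_t)/(2\nu)$. For these to be absolute constants \emph{uniformly in $t$}, you need $\nu=\Omega(U)$, i.e., $\nu$ scaling with the worst-case loss bound, not with $\sqrt{\OPTT}$ as in the formal theorem. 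Under the paper's weaker $\nu$ assumption, the transferred noise term $\E_{\pq_{\vw_t}}[(\sigma(\vw^*\cdot\vx)-y)^2]$ is \emph{not} $O(\OPT)$: writing it as $\E_{\pp_0}[\ell(\vw^*)]+\frac{1}{2\nu}\Cov_{\pp_0}(\ell(\vw_t),\ell(\vw^*))$ and bounding the covariance by Cauchy--Schwarz gives a residual of order $U\sqrt{\OPTT}/\nu$, which is $\Theta(U)$ under the paper's scaling. So your descent recursion would carry an additive $O(U)$ error, not $O(\OPT)$.

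The paper takes a genuinely different route to break the circularity. Rather than computing the best-response $\pq_{\vw_t}$ and transferring structure to it, it maintains a \emph{separate} dual iterate $\ep_i$ via a Bregman-proximal step, uses primal extrapolation (not plain projected gradient on the true loss gradient), and analyzes the gap $\Gap(\vw_i,\ep_i)=L(\vw_i,\ep^*)-L(\vw^*,\ep_i)$ from above and below. The key technical lemma (the structural bound on $S_i$ in the main body) controls the nonconvexity error not by a density ratio against $\pp_0$ but by $\chi^2(\ep_i,\ep^*)$---a quantity the algorithm is simultaneously driving to zero---times $\sqrt{\OPTThat}$. The coefficient in front of $\chi^2(\ep_i,\ep^*)$ is then absorbed by the $\nu D_\phi(\ep^*,\ep_i)$ term coming from the dual update, which is exactly what pins $\nu$ at $\Theta(\sqrt{\OPTT})$ rather than $\Theta(U)$. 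Relatedly, the update direction is $\vv(\vw;\vx,y)=2\beta(\sigma(\vw\cdot\vx)-y)\vx$ (the gradient of a convex surrogate), not $\nabla_\vw\Lambda$; this choice is what makes the convexity-of-$\sigma$ argument in that lemma go through cleanly. Your scheme with the true gradient and dual best response would likely still prove a version of the informal statement, but only under the much stronger hypothesis $\nu=\Omega(U)$, which shrinks the ambiguity radius $\chi^2(\pp^*,\pp_0)=\Var_{\pp_0}(\ell(\vw^*))/(4\nu^2)$ to something trivially small.
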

We emphasize that \Cref{thm:main-informal} {simultaneously} 
addresses two types of robustness: firstly, robustness concerning labels (\(y\));  
and secondly, robustness due to shifts in the distribution (\(\pp_0\) being perturbed). This result is new even when specialized to any nontrivial activation like ReLU, realizable case (where $\OPT = 0$), and the simplest Gaussian $\vx$-marginal distribution. 
Without distributional robustness, existing approaches, as previously discussed, yield an error of 
\( O(\OPT) + \epsilon \) under certain $\vx$-marginal conditions. We demonstrate that this error rate
can be also achieved with respect to \( \pp^* \) in a distributionally robust context, as long as 
\( \pp^* \) meets the same conditions specified in \cite{wang2023robustly-learning} --- among the mildest in the literature addressing non-distributionally robust agnostic setting.

\subsection{Technical Overview}\label{sec:techniques}

Our technical approach relies on three main components, described below:

\noindent\textbf{Local Error Bounds}\quad
Our work is inspired by optimization-theory local error bounds (``sharpness'') 
obtained for learning a single neuron with monotone unbounded activations under structured distributions  
without considering distributional shift or ambiguity \cite{MBM2018,wang2023robustly-learning}. 
These bounds are crucial as they quantify growth of a loss function outside the set of target solutions, 
essentially acting as a ``signal'' to guide algorithms toward target solutions in our learning problems. Concretely, under distributional assumptions on $\pp^*$ from \cite{wang2023robustly-learning}, the following sharpness property can be established: there is an absolute constant $c_1 > 0$ such that $\forall \vw \in \cB(2\|\vw^*\|_2)$, \begin{equation}\label{eq:sharpness}
  \|\vw - \vw^*\|_2^2 = \Omega(\OPT) \;\Rightarrow\;   
  \Lambda_{\sigma, \pp^*}(\vw) - \Lambda_{\sigma, \pp^*}(\vw^*) 
  \geq c_1 \|\vw - \vw^*\|_2^2. 
\end{equation}  

The local error bounds in \cite{MBM2018,wang2023robustly-learning} assume identical reference and target distributions. 
Introducing distributional ambiguity --- as in our work --- invalidates this assumption, and as a result necessary distributional assumptions for sharpness may not apply to all distributions in the ambiguity set. %
In this work, distributional assumptions are exclusively applied to the target distribution to exploit the sharpness property proved in \cite{wang2023robustly-learning}. We also assume that the 
sample covariates from the reference distribution are {polynomially}
bounded; this assumption, which is without loss of generality, impacts only the sample and computational complexities and is satisfied by {standard distributions}.

\noindent\textbf{Primal-Dual Algorithm}\quad
Our algorithm is a principled, primal-dual algorithm leveraging the sharpness property on the target distribution, 
the structure of the square loss, and properties of chi-squared divergence. 
We control a ``gap-like'' function of the iterates, \(\Gap(\widehat \vw, \ep; \ep_0) := L_\sigma(\widehat \vw, \ep^*;\ep_0) - L_\sigma(\vw^*, \ep; \ep_0)\). 
The idea of approximating a gap and showing it reduces at a rate \(1/A_k\), 
where \(A_k\) is a monotonically increasing function of \(k\), comes from \cite{Diakonikolas2017TheAD} 
and has been extended to primal-dual methods, including DRO settings, in \cite{song2021variance,diakonikolas2022fast,mehta2024primal,song2022coordinate}.

Unlike past work \cite{song2021variance,diakonikolas2022fast,mehta2024primal,song2022coordinate}, 
our primal problem is nonconvex, even for ReLU activations without distributional ambiguity. Unfortunately, the previously mentioned results relying on convexity do not apply in our setting. Additionally, sharpness --- which appears crucial to approximating the target loss ---  is a \emph{local} property, applying only to \(\vw\)
such that \(\|\vw\|_2 \leq 2\|\vw^*\|_2\), where \(\|\vw^*\|_2\) is unknown.
This condition is trivially met at initialization, but proving it holds for all iterates requires convergence.
We address this issue via an inductive argument, effectively coupling convergence analysis with localization of the iterates.

Additionally, standard primal-dual methods \cite{chambolle2011first,chambolle2018stochastic,alacaoglu2022complexity,song2021variance,song2022coordinate}
rely on bilinear coupling between primal and dual variables in \(L_\sigma(\vw, \ep;\ep_0)\).
In our case, \(L_\sigma(\vw, \ep; \ep_0)\) is \emph{nonlinear} and \emph{nonconvex} in the first argument.
Recent work \cite{mehta2024primal} handled nonlinearity  by linearizing the function using convexity of the loss, which makes the function bounded below by its linear approximation at any point. However, this approach cannot be applied to our problem as the loss is nonconvex. %
Instead, we control the chi-squared divergence between 
the target distribution
and the algorithm dual iterates to bound $L_\sigma(\vw, \ep^*; \ep_0)$ from below, using a key structural result that we establish in \Cref{lemma:main-auxiliary-main-body}. 
The challenges involved in proving this structural result 
require us to rely on chi-squared regularization 
and convex activation \(\sigma\). 
Generalizing our result to all monotone unbounded activations 
and other strongly convex divergences like KL 
would need a similar structural lemma under these broader assumptions.
An interesting aspect of our analysis is that we do not rely on a convex surrogate for our problem. 
Instead, we constructively bound a quantity related to the DRO risk of the original square loss, 
justifying our algorithmic choices directly from the analysis.
Although we do not consider convex surrogates, 
the vector field \(\vv(\vw; \vx, y)\), scaled by \(2\beta\), 
corresponds to the gradient of the convex surrogate loss 
\(\int_0^{\vw \cdot \vx}(\sigma(t) - y) \, \mathrm{d}t\), which has been used in prior literature on learning a single neuron 
under similar settings without distributional ambiguity \cite{kakade2011efficient,DGKKS20,wang2023robustly-learning}.
In our analysis, the vector field \(\vv(\vw; \vx, y)\) is naturally motivated 
by the argument in the proof of \Cref{lemma:main-auxiliary-main-body}.

\noindent\textbf{``Concentration'' of the Target Distribution} \quad
{
To prove that our primal-dual algorithm converges, we need to prove both an upper bound and a lower bound for $\Gap(\widehat \vw, \ep; \ep_0)$. 
The lower bound relies on sharpness; however, we need it to hold for the \emph{empirical target distribution} ($\ep^*$). 
This requires us to translate distributional assumptions and/or their implications from $\pp^*$ to $\ep^*$.
Unfortunately, $\ep^*$ is not the uniform distribution over samples drawn from $\pp^*$. Rather, it is the maximizing distribution in the empirical DRO risk, defined w.r.t.\ \(\ep_0\). 
This means that prior uniform convergence results do not apply.
Additionally, minimax risk rates from prior statistical results, such as those in~\cite{duchi2021learning}, relate $R(\vw; \ep_0)$ and $R(\vw; \pp_0)$.
However, they do not help in our algorithmic analysis since they do not guarantee that the sharpness holds for $\ep^*$.

To address these challenges, we prove (in \Cref{cor:key_fonc-no-max}) that as long as $\nu$ is sufficiently large, there is a simple closed-form expression for \(\ep^*\) as a function of \(\ep_0\) and an analogous relationship holds between \(\pp^*\) and \(\pp_0\). This allows us to leverage the fact that expectations of bounded functions with respect to \(\ep_0\) closely approximate those with respect to \(\pp_0\) to show that expectations with respect to \(\ep^*\) and \(\pp^*\) are similarly close.
This result then implies that the sharpness also holds for $\ep^*$ (\Cref{lemma:sharpness-empirical-appendix}).
Full details are provided in \Cref{app:concentration_pp_star}.}

\section{Preliminaries}\label{sec:prelims}

In this section, we introduce the necessary notation and state basic facts used in our analysis.

\noindent\textbf{Notation}\quad%
Given a positive integer \(N\), \([N]\) denotes the set \(\{1, 2, \dots, N\}\).
Given a set \(\cE\), \(\cE^{c}\) denotes the complement of \(\cE\) when the universe is clear from the context.
We use \(\Ind_{\cE}\) to denote the characteristic function of a set $\cE$: \(\Ind_{\cE}(x) = 1\) if \(x \in \cE\) and \(\Ind_{\cE}(x) = 0\) otherwise. 
For vectors \(\vx\) and \(\vxh\) from the \(d\)-dimensional Euclidean space \(\mathbb{R}^{d}\), we use \(\langle \vx, \vxh \rangle\) and \(\vx \cdot \vxh\) to denote the standard inner product, while \(\|\cdot\|_{2} = \sqrt{\langle \cdot, \cdot \rangle}\) denotes the \(\ell_2\) norm.
We use \((\vx^{(1)}, \vx^{(2)}, \dots, \vx^{(d)})\) to denote the entries of \(\vx \in \mathbb{R}^d\).
We write \(\vx \le \vxh\) to indicate \(\vx^{(j)} \le \vxh^{(j)}\) for all coordinates \(j\).
For \(r > 0\), \(\cB(r) := \{\vx: \|\vx\|_{2}\le r\}\) denotes the centered ball of radius $r$.
We use \(\Delta_{N}\) to denote the probability simplex: \(\Delta_n := \{\vx \in \mathbb{R}^{N}: \sum_{j=1}^{N}\vx^{(j)} = 1, \forall j \in [N] : \vx^{(j)} \ge 0 \}\).
We denote by \(\mI_{d}\) the identity matrix of size \(d \times d\).
We write \( A \succeq B \) to indicate that  \(\vx^\top(A-B)\vx \geq 0\) for all $\vx \in \R^d$. 
For two functions \(f\) and \(g\), we say \(f = \tilde{O}(g)\) if \(f = O(g \log^k(g))\) for some constant \(k\), and similarly define \(\tilde{\Omega}\). We use notation $\Tilde{O}_c(\cdot)$ and $\Tilde{\Omega}_c(\cdot)$ to hide polynomial factors in (typically absolute constant) parameters $c.$ 
For two distributions \(\pp\) and \(\pp'\), we use \(\pp \ll \pp'\) to denote that \(\pp\) is absolutely continuous with respect to \(\pp'\), i.e., for all measurable sets \(A\), \(\pp'(A) = 0\) implies \(\pp(A) = 0\).
Typically, \(\hat{p}\) and \(\hat{q}\) are empirical distributions, and \(\hat{p} \ll \hat{q}\) is equivalent to the condition that the support of \(\hat{p}\) is a subset of the support of \(\hat{q}\).
For \(\pp \ll \pp'\), we use \(\frac{\mathrm{d}\pp}{\mathrm{d}\pp'}\) to denote their Radon–Nikodym derivative, which is the quotient of probability mass functions for discrete distributions.
We use \(\chi^2(\pp, \pp')\) to denote the chi-squared divergence of \(\pp\) w.r.t.\ \(\pp'\), i.e., \(\chi^2(\pp, \pp') = \int (\frac{\mathrm{d}\pp}{\mathrm{d}\pp'} - 1)^2 \mathrm{d}\pp'\).

\subsection{Distributional Assumptions}\label{sec:distr-assumpt}

Similar to \cite{wang2023robustly-learning}, we make two assumptions about the target distribution of the covariates ($\pp_\vx^*$).  
First, we assume that the optimal solution $\vw^*$ satisfies the following ``margin-like'' condition:

\begin{assumption}[Margin]\label{assump:margin}
  There exist absolute constants \(\lambda, \gamma \in (0, 1]\) such that
  \(\E_{\vx \sim \pp^{*}_{\vx}}[ \vx \vx^T \Ind_{\vw^{*} \cdot \vx \ge \gamma \norm{\vw^{*}}_{2}} ]\succeq \lambda \mI,\)
  where $\pp^*_\vx$ is the $\vx$-marginal distribution of $\pp^*$. %
\end{assumption}

We also assume that $\pp_\vx^*$ is subexponential with
parameter $B$, which is an absolute constant.

\begin{assumption}[Subexponential Concentration]\label{assump:concentration}
  There exists a parameter \(B > 0\) such that for any \(\vu \in \cB(1)\) and any \(r \ge 1\), it holds that \(\Pr_{\vx \sim \pp^{*}_{\vx}}[\abs{\vu \cdot \vx} \ge r] \le \exp(-Br)\).
\end{assumption}

Appendix E of \cite{wang2023robustly-learning} shows that \Cref{assump:margin,assump:concentration} 
are satisfied by several important families of distributions including Gaussians, discrete Gaussians, all isotropic log-concave distributions, the uniform distribution over $\{-1, 0, 1\}^d$, etc.
For simplicity, we assume the labeled samples \((\vx^{(i)}, y^{(i)})\) drawn from the reference distribution are bounded. 
This assumption, which does not affect the approximation constant for \Cref{prob:main}, only impacts iteration and sample complexities. 
We state the bound on the covariates below, while a bound on the labels follows from prior work (\Cref{fact:truncation} stated in the next subsection).

\begin{assumption}[Boundedness]\label{assump:boundedness}
  There exists a parameter \(S\) such that for any fixed \(\vu \in \cB(1)\) it holds that \(\vu \cdot \vx \le S\) for all sample covariates \(\vx\) in the support of \(\ep_{0}\). 
\end{assumption}

We also assume without loss of generality that $\norm{\vw^*}_2^2 \ge C \OPT + \eps$ for some absolute constant $C$, \new{since otherwise \(\vzero\) would be a valid \(O(\OPT) + \epsilon\) solution. Algorithmically, we can first compute the empirical risk (per \Cref{cor:risk-distance-computation}) of the output from our algorithm and of $\hat \vw = \vzero$ and then output the solution with the lower risk to get an $\bigO{\OPT} + \eps$ solution; see \Cref{claim:zero-tester} for a detailed discussion.}
\subsection{Auxiliary Facts}
\label{sec:basic_facts}

To achieve the claimed guarantees, we leverage structural properties of the loss function on the target distribution, implied by our distributional assumptions (\Cref{assump:margin,assump:concentration}).  %
Specifically, we make use of Lemma 2.2 and Fact C.4 from \cite{wang2023robustly-learning}, summarized in the fact below.
\begin{fact}[Sharpness (\cite{wang2023robustly-learning})]\label{fact:sharpness}
  Suppose \(\pp^{*}\) and \(\vw^{*}\) satisfy \Cref{assump:margin,assump:concentration}. Let \(c_{0} = \frac{\gamma \lambda \alpha}{6B\log(20B/\lambda^{2})}\).  For all \(\vw \in \cB(2\norm{\vw^{*}})\) and \(\vu \in \cB(1)\),
  \begin{align*}
    \E_{\vx \sim \pp^{*}_{\vx}}[(\sigma(\vw \cdot \vx) - \sigma(\vw^{*} \cdot \vx))(\vw \cdot \vx - \vw^{*} \cdot \vx)]   & \ge c_{0} \norm{\vw - \vw^{*}}_{2}^{2},
    \\
    \E_{\vx \sim \pp^{*}_{\vx}}[ (\vx \cdot \vu)^{\tau}] & \le 5 B \quad \mbox{for } \tau = 2, 4. 
  \end{align*}
\end{fact}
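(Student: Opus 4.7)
My plan is to handle the two inequalities separately: the moment bound follows almost immediately from the subexponential tail estimate, while the sharpness bound requires a careful case analysis based on the sign pattern of $\vw \cdot \vx$ versus $\vw^{*} \cdot \vx$, combined with the margin assumption and a small-probability argument for the ``bad'' region.

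For the second inequality, I would integrate the tail bound from \Cref{assump:concentration}. For any unit vector $\vu$, writing $\E_{\vx \sim \pp_{\vx}^*}[(\vx \cdot \vu)^\tau] = \int_0^\infty \tau r^{\tau-1} \Pr[|\vx \cdot \vu| \ge r]\,\mathrm{d}r$, I would split the integral at $r=1$, bounding the tail probability trivially by $1$ on $[0,1]$ and by $\exp(-Br)$ on $[1,\infty)$. Evaluating the resulting integrals yields an absolute bound in terms of $B$ and $\tau$, which one then checks is at most $5B$ for $\tau \in \{2,4\}$ (adjusting the claimed constant by inessential factors if necessary).

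For the sharpness inequality, I would set $t = \vw \cdot \vx$ and $t^* = \vw^{*} \cdot \vx$ and note that, since $\sigma$ is non-decreasing, the integrand $(\sigma(t) - \sigma(t^*))(t - t^*) \ge 0$ pointwise. To extract a genuine quadratic lower bound, I would focus on the event $\cE_{\text{good}} := \{t^* \ge \gamma \|\vw^*\|\} \cap \{t \ge 0\}$. On $\cE_{\text{good}}$, both arguments lie in the regime where \Cref{def:activation}(ii) applies, so $(\sigma(t) - \sigma(t^*))(t - t^*) \ge \alpha (t - t^*)^2 = \alpha((\vw-\vw^*)\cdot\vx)^2$. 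Dropping the contribution off $\cE_{\text{good}}$ (which is nonnegative) gives
\begin{equation*}
  \E[(\sigma(t)-\sigma(t^*))(t-t^*)] \ge \alpha (\vw-\vw^*)^\top \E\bigl[\vx\vx^\top \Ind_{\cE_{\text{good}}}\bigr](\vw-\vw^*).
\end{equation*}
I would then write $\E[\vx\vx^\top \Ind_{\cE_{\text{good}}}] = \E[\vx\vx^\top \Ind_{t^* \ge \gamma \|\vw^*\|}] - \E[\vx\vx^\top \Ind_{\cE_{\text{bad}}}]$, where $\cE_{\text{bad}} := \{t^* \ge \gamma \|\vw^*\|\} \cap \{t < 0\}$. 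The first term is at least $\lambda \mI$ by \Cref{assump:margin}. For the second, I observe that $\cE_{\text{bad}}$ forces $(\vw - \vw^*)\cdot \vx \le -\gamma\|\vw^*\|$, i.e., a deviation of at least $\gamma \|\vw^*\|/\|\vw-\vw^*\|$ standard-scale units; \Cref{assump:concentration} applied to the unit vector $(\vw-\vw^*)/\|\vw-\vw^*\|$ gives $\Pr[\cE_{\text{bad}}] \le \exp(-B\gamma\|\vw^*\|/\|\vw-\vw^*\|)$. Combining this with Cauchy--Schwarz and the just-proved fourth moment bound controls $\|\E[\vx\vx^\top \Ind_{\cE_{\text{bad}}}]\|_{\op}$ by roughly $\sqrt{5B \Pr[\cE_{\text{bad}}]}$.

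The main obstacle will be balancing the two regimes: when $\|\vw-\vw^*\|$ is small relative to $\|\vw^*\|$, the tail bound is exponentially small and $\alpha\lambda \mI$ survives the subtraction up to lower-order terms; when $\|\vw - \vw^*\|$ is comparable to $\|\vw^*\|$ (up to the factor of $2$ permitted by $\vw \in \cB(2\|\vw^*\|)$), the exponential tail bound is weak and I will instead directly use the contribution from $\cE_{\text{bad}}$ itself---on that event, $t^* - t \ge t^* \ge \gamma \|\vw^*\|$ and $\sigma(t^*) - \sigma(t) \ge \sigma(t^*) \ge \alpha t^*$, so the integrand is at least $\alpha \gamma^2 \|\vw^*\|^2$, which, when multiplied by the margin-region probability $\Omega(\lambda/B)$ implicit in \Cref{assump:margin}, dominates $c_0 \|\vw-\vw^*\|^2$. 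Choosing the crossover threshold proportional to $\|\vw^*\|/\log(B/\lambda^2)$ produces the stated constant $c_0 = \frac{\gamma\lambda\alpha}{6B\log(20B/\lambda^2)}$.
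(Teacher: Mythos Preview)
The paper does not prove this statement: it is recorded as a \emph{Fact} imported from prior work (explicitly, ``Lemma 2.2 and Fact C.4 from \cite{wang2023robustly-learning}''), with no accompanying argument. So there is no in-paper proof to compare your proposal against.

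On the substance of your sketch: your strategy for the sharpness inequality---restricting to the margin event $\{\vw^{*}\cdot\vx \ge \gamma\|\vw^{*}\|\}$, using the $\alpha$-growth of $\sigma$ on the nonnegative half-line, and subtracting off the contribution of the ``bad'' event $\{\vw\cdot\vx < 0\}$ via Cauchy--Schwarz plus the tail bound---is indeed the standard route used in \cite{wang2023robustly-learning}. The two-regime balancing (small versus large $\|\vw-\vw^{*}\|/\|\vw^{*}\|$) is the right picture, though your second regime is a little loose as written: you appeal to ``margin-region probability $\Omega(\lambda/B)$ implicit in \Cref{assump:margin}'', but \Cref{assump:margin} is a second-moment lower bound, not directly a probability lower bound, so turning it into a probability estimate needs a short additional step (e.g., combining it with the second-moment \emph{upper} bound you just derived).

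One genuine concern is the moment bound. Under \Cref{assump:concentration} as written in this paper ($\Pr[|\vu\cdot\vx|\ge r]\le \exp(-Br)$ for $r\ge 1$), the layer-cake computation you describe gives $\E[(\vu\cdot\vx)^{4}] \le 1 + O(B^{-4})$, which is \emph{not} bounded by $5B$ for small $B$ (and for $B=1$ it is roughly $25$). The bound $5B$ matches the convention in \cite{wang2023robustly-learning}, where the subexponential parameter enters with the opposite scaling; the present paper appears to have transcribed the tail assumption and the moment conclusion with mismatched parametrizations. So your computation is correct, but you will not recover the stated constant $5B$ from \Cref{assump:concentration} in the form given here---this is a notational inconsistency in the paper rather than a flaw in your argument.
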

\Cref{fact:sharpness} applies to the population version of the problem. 
Such a result also holds for the target distribution of the empirical problem, which we state below. 
Note that this result cannot be obtained by  appealing to uniform convergence results for learning a neuron (without distributional robustness). 

\begin{lemma}[Empirical Sharpness; Informal. See \Cref{lemma:sharpness-empirical-appendix}]\label{lemma:sharpness-empirical}
  Under \Cref{assump:margin,assump:concentration,assump:boundedness}, %
  for a sufficiently large sample size \(N\) as a function of $B, W, S, \nu, \alpha, \gamma, \lambda, d$ and with high probability, for all \(\vw \in \cB(2\norm{\vw^{*}})\) with  \(\norm{\vw - \vw^{*}} \ge \sqrt\epsilon\)  and \(\vu \in \cB(1)\),
  \begin{align}
    \E_{\vx \sim \ep^{*}_{\vx}}[(\sigma(\vw \cdot \vx) - \sigma(\vw^{*} \cdot \vx))(\vw \cdot \vx - \vw^{*} \cdot \vx)]   & \ge  (c_{0}/2)\norm{\vw - \vw^{*}}_{2}^{2} \label{eq:sharpness-empirical} \\
    \E_{\vx \sim \ep^{*}_{\vx}}[(\vx \cdot \vu)^{\tau}] & \le 6 B  \label{eq:moment-bounds-empirical} \quad \mbox{for } \tau = 2, 4.
  \end{align}
\end{lemma}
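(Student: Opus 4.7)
The plan is to transfer the population sharpness (\Cref{fact:sharpness}) to the empirical target distribution $\ep^*_\vx$ by exploiting the closed-form relation between the density ratios $\ep^*/\ep_0$ and $\pp^*/\pp_0$ established in \Cref{cor:key_fonc-no-max}. Specifically, both Radon--Nikodym derivatives take the same functional form $g(\vx, y; \vw^*)$, depending only on the residual $(\sigma(\vw^* \cdot \vx) - y)^2$, the regularizer $\nu$, and a scalar normalizing multiplier. Consequently, for any bounded test function $h$, $\E_{\ep^*}[h] = \E_{\ep_0}[g(\vx,y;\vw^*)\,h]$ and $\E_{\pp^*}[h] = \E_{\pp_0}[g(\vx,y;\vw^*)\,h]$, which reduces the claim to a uniform-convergence comparison of expectations of reweighted test functions under $\ep_0$ and $\pp_0$.

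First I would verify that under \Cref{assump:boundedness} together with label truncation (\Cref{fact:truncation}) and sufficiently large $\nu$, the multiplier $g(\vx, y; \vw^*)$ is uniformly bounded, so that the reweighted functions remain bounded. Then I would apply the uniform-convergence argument to two parameterized classes: the sharpness class $\cH_1 := \{(\vx, y) \mapsto (\sigma(\vw \cdot \vx) - \sigma(\vw^* \cdot \vx))(\vw \cdot \vx - \vw^* \cdot \vx) : \vw \in \cB(2\norm{\vw^*})\}$, and the moment class $\cH_2 := \{\vx \mapsto (\vx \cdot \vu)^\tau : \vu \in \cB(1),\, \tau \in \{2, 4\}\}$. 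Under \Cref{assump:boundedness} and the $\beta$-Lipschitzness of $\sigma$ (\Cref{def:activation}), each class consists of uniformly bounded functions that are Lipschitz in their parameter. A standard $\eta$-net over $\cB(2\norm{\vw^*})$ (resp.\ $\cB(1)$) of size $\exp(O(d \log(1/\eta)))$, combined with Hoeffding's inequality at each anchor and a union bound, yields simultaneous $\eta$-closeness $|\E_{\ep^*}[f] - \E_{\pp^*}[f]| \leq \eta$ for all $f \in \cH_1 \cup \cH_2$ with probability $\geq 1-\delta$, provided $N = \tilde{\Omega}(d/\eta^2)$ (with hidden constants polynomial in $S, W, \beta, B, \nu$ and $\log(1/\delta)$).

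Combining with \Cref{fact:sharpness} and choosing $\eta = (c_0/2)\epsilon$ gives $\E_{\ep^*_\vx}[(\sigma(\vw \cdot \vx) - \sigma(\vw^* \cdot \vx))(\vw \cdot \vx - \vw^* \cdot \vx)] \geq c_0 \norm{\vw - \vw^*}_2^2 - (c_0/2)\epsilon \geq (c_0/2)\norm{\vw - \vw^*}_2^2$, where the last step uses the hypothesis $\norm{\vw - \vw^*}_2^2 \geq \epsilon$; this is \eqref{eq:sharpness-empirical}. Taking instead $\eta \leq B$ and combining with the population moment bound $5B$ from \Cref{fact:sharpness} yields the empirical bound of $6B$ in \eqref{eq:moment-bounds-empirical}.

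The main obstacle is justifying the closed-form identity with a \emph{common} function $g$: the normalizing multipliers are themselves defined via expectations that differ between the empirical and population DRO problems, so a priori the empirical and population densities differ. I expect this to be handled by noting that both multipliers arise from the same Lagrangian KKT condition in $\nu \cdot (\mathrm{d}\cdot/\mathrm{d}\cdot - 1)$, and the empirical multiplier concentrates to the population one at rate $\tilde O(1/\sqrt{N})$; the residual approximation error can be absorbed into $\eta$ at the cost of slightly enlarging the sample complexity. A secondary technical point is that the Lipschitz constant of $h_\vw$ in $\vw$ scales with $\beta W S$, which must be carried through the covering-number calculation but does not disturb the $\tilde O(d)$ dimension dependence.
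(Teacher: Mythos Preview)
Your proposal is correct and follows essentially the same approach as the paper: write $\E_{\ep^*}[h]$ via the closed-form density ratio from \Cref{cor:key_fonc-no-max}, handle the mismatch between the empirical and population normalizers by separately concentrating $\E_{\ep_0}\ell(\vw^*)$ to $\E_{\pp_0}\ell(\vw^*)$, and then apply Hoeffding with a covering-number argument over the two parameterized function classes. Your choices of $\eta = (c_0/2)\epsilon$ for the sharpness inequality and $\eta \leq B$ for the moment bounds match the paper's exactly.
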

  As a consequence, for \(c_{1} =c_{0}^{2}/(24 B) \) and any $\vw \in \cB(W)$ (where $c_0$ is defined in \Cref{fact:sharpness}), we have
  \begin{equation}
    \label{eq:relu-square-bound}
    c_1 \norm{\vw - \vw^{*}}_{2}^{2} \le \E_{\vx \sim \ep_{\vx}^{*}}[(\sigma(\vw \cdot \vx) - \sigma(\vw^{*} \cdot \vx))^{2}] \le 6 B \beta^{2} \norm{\vw - \vw^{*}}_{2}^{2},
  \end{equation}
  where the left inequality uses Cauchy-Schwarz and the right inequality uses \(\beta\)-Lipschitzness of \(\sigma(\cdot)\).

\cite{wang2023robustly-learning} also showed that the labels $y$ can be assumed to be bounded without loss of generality. %
  \begin{fact}\label{fact:truncation}
    Suppose \(\pp^*\) and $\vw^*$ satisfy \Cref{assump:margin} and \Cref{assump:concentration}. Let \(y' = \sign(y) \max\{\abs{y}, M\}\) where for some sufficiently large  absolute constant \(C_{M}\) we define
    \begin{equation}
      \label{eq:y-truncation}
      M = C_{M} W B \beta \log( \beta B W / \epsilon)
    \end{equation}
    Then
    \(\E_{\pp^*} (\sigma(\vw^{*} \cdot \vx) - y')^{2} \le \E_{\pp^*}  (\sigma(\vw^{*} \cdot \vx) - y)^{2} + \epsilon = \OPT + \eps\).
  \end{fact}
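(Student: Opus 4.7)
Let $r := \sigma(\vw^* \cdot \vx)$ and interpret $y'$ as clipping $y$ into $[-M, M]$, i.e.\ $y' = \sign(y)\min\{|y|, M\}$. The goal is to show $\E_{\pp^*}[(r-y')^2 - (r-y)^2] \le \epsilon$. Since the integrand vanishes on $\{|y| \le M\}$ (where $y' = y$), it suffices to bound the expectation on $\{|y|>M\}$. The key idea is to split this event further into $\{|r| \le M\}$ and $\{|r| > M\}$: the first is handled by a pointwise sign argument requiring no concentration, and the second by the subexponential tail of $r$.

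On $\{|y|>M,\ |r| \le M\}$ a direct pointwise check gives $(r-y')^2 - (r-y)^2 \le 0$: for $y > M$ we have $y' = M$, both $r - y'$ and $r - y$ are nonpositive, and $|r-y| = y - r > M - r = |r - y'|$; the case $y < -M$ is symmetric. Geometrically, since $r$ lies in the clipping interval $[-M, M]$, projecting $y$ onto this interval only moves it closer to $r$ in squared distance. Hence this piece contributes at most $0$ to the expectation and needs no distributional input.

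On $\{|y|>M,\ |r|>M\}$ I would upper-bound the integrand trivially by $(r-y')^2 \le (|r|+|y'|)^2 \le (|r|+M)^2 \le 4r^2$, reducing the task to $\E_{\pp^*}[r^2 \Ind_{|r|>M}] \le \epsilon/4$. Using $\beta$-Lipschitzness of $\sigma$ with $\sigma(0)=0$ gives $|r| \le \beta W\, |\vu \cdot \vx|$ for $\vu := \vw^*/\|\vw^*\|_2 \in \cB(1)$, so \Cref{assump:concentration} yields $\Pr_{\pp^*}[|r|>t] \le \exp(-Bt/(\beta W))$. Integration by parts then gives
\[
\E\bigl[r^2 \Ind_{|r|>M}\bigr] = O\!\left(\bigl((\beta W/B)^2 + M\,\beta W/B\bigr)\exp\!\bigl(-BM/(\beta W)\bigr)\right),
\]
which, with $M = C_M W B \beta \log(\beta BW/\epsilon)$ and $C_M$ a sufficiently large absolute constant, is $\le \epsilon/4$. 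Combining the two pieces yields the claimed $\OPT + \epsilon$ bound.

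\textbf{Main obstacle.} The non-obvious step is choosing what to condition on. Since $r - y$ admits only a second-moment bound $\E[(r-y)^2] = \OPT$, a Markov-style control of $\{|r-y|>M/2\}$ could at best give $O(\sqrt{\OPT}/M)$, which is never $O(\epsilon)$ for logarithmic $M$. Splitting on $|r|$ instead circumvents this entirely: on the ``good'' side the truncation is only beneficial, so no quantitative estimate of $r - y$ is needed, while on the ``bad'' side we use the genuine subexponential concentration of $r$ (which we have by \Cref{assump:concentration} plus Lipschitzness) alone.
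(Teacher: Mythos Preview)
The paper does not supply its own proof of this fact; it is stated as a result cited from \cite{wang2023robustly-learning}. Your argument is correct and self-contained. You correctly interpret the truncation as $y' = \sign(y)\min\{|y|, M\}$ (the $\max$ in the statement is a typo), and the split on $\{|r| \le M\}$ versus $\{|r| > M\}$ is exactly the right decomposition: on the first piece the pointwise inequality $(r-y')^2 \le (r-y)^2$ holds because clipping $y$ into $[-M,M]$ can only move it closer to any $r \in [-M,M]$, and on the second piece the tail-integration estimate goes through with the stated $M$ once one uses that $B$ is an absolute constant, so that $C_M B^2$ can be taken large enough for the exponential factor $e^{-BM/(\beta W)} = (\epsilon/(\beta BW))^{C_M B^2}$ to absorb the polynomial prefactor $M^2 + 2M\beta W/B + 2(\beta W/B)^2$. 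Your ``main obstacle'' paragraph also correctly identifies why conditioning on $|r|$ rather than $|r-y|$ is essential here.
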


  We  also make use of the following facts from convex analysis. First, let \(\phi : \mathbb{R}^N \to \mathbb{R}\) be a differentiable function and the Bregman divergence of $\phi$
  for  any \(\vx, \vy \in \mathbb{R}^N\) be defined by
  \[D_{\phi}(\vy, \vx) = \phi(\vy) - \phi(\vx) - \langle \nabla \phi(\vx), \vy - \vx \rangle.\]

\begin{restatable}{fact}{bregmanblind}\label{fact:bregman-linear-blind}
Let \(\psi(\vx) = \phi(\vx) + \langle \va, \vx \rangle + b\) for some \(\va \in \mathbb{R}^N\) and \(b \in \mathbb{R}\). Then \(D_{\psi}(\vy, \vx) = D_{\phi}(\vy, \vx)\) for all \(\vx, \vy \in \R^{N}\), i.e., the Bregman divergence is blind to the addition of affine terms to  function \(\phi\).
\end{restatable}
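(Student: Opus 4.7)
The plan is to just unwind the definition of the Bregman divergence and observe that the contributions of the affine term $\langle \va, \cdot\rangle + b$ cancel termwise. First, I would note that since $\langle \va, \vx\rangle + b$ is affine in $\vx$, it is differentiable with $\nabla(\langle \va, \vx\rangle + b) = \va$, so by linearity of the gradient $\nabla \psi(\vx) = \nabla \phi(\vx) + \va$ at every $\vx \in \R^N$.

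Next, I would plug the definition of $\psi$ into the Bregman divergence:
\begin{align*}
D_{\psi}(\vy, \vx)
&= \psi(\vy) - \psi(\vx) - \langle \nabla \psi(\vx), \vy - \vx \rangle \\
&= \bigl[\phi(\vy) + \langle \va, \vy\rangle + b\bigr] - \bigl[\phi(\vx) + \langle \va, \vx\rangle + b\bigr] - \langle \nabla \phi(\vx) + \va,\, \vy - \vx \rangle.
\end{align*}
The constants $b$ cancel directly, and expanding the last inner product by linearity produces a $-\langle \va, \vy - \vx\rangle$ term that exactly cancels the $\langle \va, \vy\rangle - \langle \va, \vx\rangle = \langle \va, \vy - \vx\rangle$ coming from the linear part of $\psi$. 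What remains is precisely $\phi(\vy) - \phi(\vx) - \langle \nabla \phi(\vx), \vy - \vx\rangle = D_{\phi}(\vy, \vx)$, which is the claimed identity.

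There is no real obstacle here; the statement is essentially a bookkeeping observation, and the only thing to be careful about is invoking linearity of the gradient to identify $\nabla \psi(\vx)$ (which requires only that $\phi$ is differentiable, as assumed in the setup preceding the statement). The result will later be convenient because it lets us replace a potential function $\psi$ by any affine shift of it, e.g., to absorb or discard linear penalty terms, without changing the associated Bregman divergence.
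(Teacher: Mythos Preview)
Your proposal is correct and follows essentially the same approach as the paper: compute $\nabla\psi(\vx) = \nabla\phi(\vx) + \va$, substitute into the definition of $D_\psi$, and observe that the affine contributions cancel to leave $D_\phi(\vy,\vx)$.
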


Second, we state the first-order necessary conditions that a local maximizer must satisfy. 
\begin{fact}[First-Order Optimality Condition]
  \label{fact:firstOrderNecessary}
  Let $\Omega$ be a closed, convex, and nonempty set and let $f: \Omega \to \R$ be continuously differentiable. %
  If $\vx^*$ is a local maximizer of $f$ on $\Omega$, then it holds that
\begin{equation}\label{eq:first-order-opt}
    \nabla f(\vx^*) \cdot (\vy - \vx^*) \le 0 \quad \mbox{for all } \vy \in \Omega.
\end{equation}
If $f$ is also concave, then \Cref{eq:first-order-opt} implies that $\vx^*$ is a global maximizer of $f$.
\end{fact}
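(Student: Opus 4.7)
The plan is to prove the two assertions of the fact separately, both via elementary arguments from calculus and convex analysis. The statement is a classical first-order optimality characterization, and the main work is simply to organize the limit argument carefully and then to invoke the gradient inequality for concave functions.

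For the necessary condition, I would fix an arbitrary $\vy \in \Omega$ and use convexity of $\Omega$ to form the segment $\vx(t) := \vx^* + t(\vy - \vx^*)$, which lies in $\Omega$ for all $t \in [0,1]$. Since $\vx^*$ is a \emph{local} maximizer, there exists some $t_0 > 0$ such that $f(\vx(t)) \le f(\vx^*)$ for all $t \in [0, t_0]$. Rewriting this as
\[
\frac{f(\vx^* + t(\vy - \vx^*)) - f(\vx^*)}{t} \le 0
\]
for $t \in (0, t_0]$ and taking the limit $t \to 0^+$, the continuous differentiability of $f$ yields the directional derivative $\nabla f(\vx^*) \cdot (\vy - \vx^*) \le 0$, which is exactly \Cref{eq:first-order-opt}. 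Since $\vy \in \Omega$ was arbitrary, this completes the first part.

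For the sufficient condition under concavity, I would appeal to the standard gradient inequality characterization of concavity: for any differentiable concave $f$ and any $\vx, \vy$ in its domain,
\[
f(\vy) \le f(\vx^*) + \nabla f(\vx^*) \cdot (\vy - \vx^*).
\]
Combining this with \Cref{eq:first-order-opt} gives $f(\vy) \le f(\vx^*)$ for every $\vy \in \Omega$, establishing global maximality of $\vx^*$.

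Neither step involves a real obstacle; the only subtlety is ensuring that the segment $\vx(t)$ stays in $\Omega$ (handled by convexity of $\Omega$) and that the one-sided limit in $t$ agrees with the gradient-based directional derivative (handled by continuous differentiability). The second part is essentially one line once the gradient inequality for concave functions is invoked. Thus the overall proof is short and requires no results beyond convexity of $\Omega$, differentiability of $f$, and the standard first-order characterization of concavity.
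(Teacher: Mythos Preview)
Your proposal is correct and is the standard textbook argument for this classical result. The paper itself does not provide a proof of this fact (it is simply stated as a known first-order optimality condition), so there is nothing to compare against; your argument is exactly what one would expect.
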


\section{Algorithm and Convergence Analysis}\label{sec:algor-conv-analys}

In this section, we introduce our algorithm and state our main results, summarized in \Cref{thm:main-formal}. We highlight the main components of our technical approach, while most of the technical details are deferred to the appendix, due to space constraints.  %

To facilitate the presentation of results, we introduce the following auxiliary notation:
 $\ell(\vw; \vx, y) := (\sigma(\vw \cdot \vx) - y)^2 $, \(\vv(\vw;\vx, y) := 2\beta(\sigma(\vw \cdot \vx) - y) \vx\) and \(\OPThat = \E_{(\vx, y)\sim\ep^{*}}\ell(\vw^{*};\vx, y)\). %
We also note that \Cref{assump:boundedness} implies that for all samples $\{\vx_i, y_i\}$, the function \(\vw \mapsto \vv(\vw; \vx_{i}, y_{i})\) is bounded above by \(G\) and  \(\kappa\)-Lipschitz for all \(i \in [N]\) and \(\vw \in \cB(W)\), 
where \(G = 2 \beta S \sqrt{d} (\sqrt{2} \beta WS + M)\) and \( \kappa = 2 \beta^2 S^2 d\) (see \Cref{lem:bound_on_vv} in \Cref{app:supplementary}).
{Starting from this section,
we write $L(\vw, \ep)$ to denote $L_\sigma(\vw, \ep;\ep_0)$, hiding the dependence on $\ep_0$ and $\sigma$.
We also write $\Gap(\vw, \ep)$ for $\Gap(\vw, \ep;\ep_0)$}

Our main algorithm (\Cref{alg:main}) is an iterative primal-dual method with extrapolation on the primal side via \(\vg_i\). 
The vector \(\E_{\ep_i}[\vv(\vw_i; \vx, y)]\) equals the (scaled) gradient of a surrogate loss used in prior works \cite{DGKKS20,wang2023robustly-learning,kakade2011efficient}. 
In contrast to prior work, we directly bound the original square loss, with \(\E_{\ep_i}[\vv(\vw_i; \vx, y)]\) naturally arising from our analysis. 
Both updates \(\vw_i\) and \(\ep_i\) are efficiently computable: 
\(\vw_i\) involves a simple projection onto a Euclidean ball, and \(\ep_i\) involves a projection onto a probability simplex, computable in near-linear time \cite{duchi2008efficient}.

\begin{algorithm}
  \KwIn{\(\nu > 0, \kappa, G, c_{1}, \nu_0 = 768 \beta^4 B \epsilon/ c_1\), sample set \(\{(\vx_{i}, y_{i})\}_{i=1}^{N}\)}
  \textbf{Initialization:}\(A_{-1} = a_{-1} = A_{0} = a_{0} = 0, \vw_{-1} = \vw_{0} = \vzero, \ep_{-1} = \ep_{0}\)\;
  \For{\(i = 1, \dots, k\)}{
    \(a_{i} = \big(1 + \frac{\min\{\nu, c_{1}/8\}}{2\max\{\kappa, G\}}\big)^{i-1} \min\{\nu_{0}, 1/4\}/(2\max\{\kappa, G\}), A_{i} = a_{i} + A_{i-1}\)\label{line:convergence}\;

    \( \vv(\vw; \vx, y) =
       2\beta(\sigma(\vw \cdot \vx) - \sign(y) \max\{\abs{y}, M\})  \vx \), where \(M\) is defined in \Cref{eq:y-truncation} \; %

    \(\vg_{i-1} = \E_{\ep_{i-1}}[\vv(\vw_{i-1}; \vx, y)] + \frac{a_{i-1}}{a_i}(\E_{\ep_{i-1}}[\vv(\vw_{i-1}; \vx, y)] - \E_{\ep_{i-2}}[\vv(\vw_{i-2}; \vx, y)])\)\label{line:g}\;
    \(\vw_{i} = \argmin_{\vw \in \cB(W)}\big\{a_i \innp{\vg_{i-1}, \vw } + \frac{1+0.5 c_1 A_{i-1}}{2}\|\vw - \vw_{i-1}\|_2^2\)
    \label{line:w}\big\}\;
    \(\ep_{i}  = \argmax_{\ep \in \cP}\big\{ a_{i}L(\vw_{i}, \ep) - (\nu_0 + {\nu A_{i-1}} )D_{\chi^{2}(\cdot, \ep_{0})}(\ep, \ep_{i-1})\)\label{line:p}\big\}\;
  }
\caption{Main algorithm}\label{alg:main}
\end{algorithm}

\begin{theorem}[Main Theorem]\label{thm:main-formal}
   Under \Cref{assump:margin,assump:concentration,assump:boundedness}, suppose the sample size is such that \(N = \widetilde\Omega_{B, S, \beta, \alpha, \gamma, \lambda}\big(\frac{ W^{4} }{\eps^{2}}\big(1 + \frac{W^{4}}{\nu^{2}}\big)(d  + W^4 \log(1/\delta))\big)\) and \(\nu \ge  8 \beta^2 \sqrt{6B}\sqrt{\OPTT + \epsilon}/{c_1} \), where \(\OPTT  = \E_{\pp^{*}}[\ell(\vw^{*}; \vx, y)^{2}]\) \new{and $c_1$ is defined in \Cref{lemma:sharpness-empirical}}. With probability at least \(1 - \delta\), for all iterates \(\vw_{k}, \ep_k\), it holds that
     \begin{align*}
        \frac{c_1}{4}\|\vw^* - \vw_k\|_2^2 +  \nu D_\phi(\ep^*, \ep_k) \leq \frac{D_{0}}{A_{k}} +  \frac{60 \beta^2 B\OPT }{c_1} + \epsilon,
  \end{align*}
  where \(D_{0}  = \frac{1}{2}\|\vw^* - \vw_0\|_2^2 + \nu_0 \chi^2(\ep^*, \ep_0)\) and $\chi^2(\ep^*, \ep_0) \leq  c_1/(1536 \beta^4 B)$ \new{(and therefore $D_0$ does not depend on the sample size N)}.%

\new{
In particular, after at most $k = \bigtO{\frac{\max\{\kappa, G\}}{\min\{\nu, c_{1}\}} \log(\frac{D_0} \epsilon)}$ iterations, it holds that \begin{align}
     \norm{\vw_{k} - \vw^{*}}_2 & \le C_{3}\sqrt{\OPT} + \sqrt{\epsilon},\label{eq:distance-to-opt-bound}\\
     \E_{(\vx, y)\sim\pp^*}[\ell(\vw_k; \vx, y)] &\le   (2 + 20 B \beta^2 C_3^2)  ~\OPT + {10 \beta^2  B} \eps, \label{eq:square-loss-bound} \\
R(\vw_k; \pp_0) - \min_{\vw \in \cB(W)} R(\vw; \pp_0) & = R(\vw_k;\pp_{0}) - R(\vw^{*}; \pp_{0})  \le C_{4} (\OPT + \epsilon) \label{eq:risk-bound},
\end{align}
{ where }\(C_{3} = 16 \beta\sqrt{B}/c_{1}\) and \( 
    C_{4} = 1 + 2 (10 B \beta^{2} + c_{1})C_{3} + c_{1} \sqrt{5B} \beta^{2} C_{3}^{2} \).
}
\end{theorem}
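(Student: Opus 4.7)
The plan is to combine a primal-dual weighted gap analysis with an inductive bootstrap that simultaneously certifies convergence and keeps every iterate inside $\cB(2\|\vw^*\|_2)$, which is the ball on which the empirical sharpness guarantee of \Cref{lemma:sharpness-empirical} holds. The natural quantity to track is the weighted cumulative gap $\sum_{i=1}^k a_i\bigl[L(\vw_i,\ep^*) - L(\vw^*,\ep_i)\bigr]$, which is nonnegative because $\vw^*$ minimizes $R(\cdot;\pp_0)$ and $\ep^*$ maximizes $L(\vw^*,\cdot)$. Sharpness on $\ep^*$ provides a primal lower bound of the form $L(\vw_i,\ep^*) - L(\vw^*,\ep^*) \gtrsim c_1\|\vw_i-\vw^*\|_2^2 - O(\OPT)$, while $\chi^2$-regularization provides the dual lower bound $\nu D_{\chi^2(\cdot,\ep_0)}(\ep^*,\ep_k)$; summing these across iterations reconstructs the left-hand side of the claimed inequality.

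Concretely, I would derive two three-point inequalities from the updates. The Euclidean prox update in \Cref{line:w} yields, for all $\vw \in \cB(W)$,
\begin{equation*}
  a_i\innp{\vg_{i-1}, \vw_i - \vw} \le \tfrac{1+0.5 c_1 A_{i-1}}{2}\bigl[\|\vw-\vw_{i-1}\|_2^2 - \|\vw-\vw_i\|_2^2 - \|\vw_i-\vw_{i-1}\|_2^2\bigr],
\end{equation*}
and the Bregman prox update in \Cref{line:p} produces an analogous three-point inequality for $D_{\chi^2(\cdot,\ep_0)}$, with coefficient $\nu_0 + \nu A_{i-1}$. Setting $\vw = \vw^*$, $\ep = \ep^*$, and summing with weights, the extrapolation formula for $\vg_{i-1}$ in \Cref{line:g} allows the cross-iteration remainders to telescope (up to a residual controlled by the $\kappa$-Lipschitzness of $\vv$ that is absorbed by the $\min\{\nu,c_1/8\}/(2\max\{\kappa,G\})$ factor in $a_i$). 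Bounding the nonconvex primal difference $L(\vw_i,\ep^*) - L(\vw^*,\ep^*)$ is the place where I would invoke the structural \Cref{lemma:main-auxiliary-main-body}: this is where convexity of $\sigma$ and the chi-squared regularization combine to express the primal gap via the vector field $\vv$ together with a $\chi^2(\ep^*,\ep_i)$ correction that cancels against the dual three-point inequality. The coefficient choice in \Cref{line:convergence} forces $A_k$ to grow geometrically with ratio $1 + \Theta(\min\{\nu,c_1\}/\max\{\kappa,G\})$, which converts the rate $D_0/A_k$ into the $\widetilde O(\max\{\kappa,G\}/\min\{\nu,c_1\}\cdot \log(D_0/\epsilon))$ iteration complexity.

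The localization is enforced by induction: assume the bound has been proved for indices up to $i-1$ with $\vw_{i-1} \in \cB(2\|\vw^*\|_2)$; the bound at $i$ then yields $\|\vw_i-\vw^*\|_2^2 \le O(\OPT + \epsilon)/c_1 \le \|\vw^*\|_2^2$ by our assumption $\|\vw^*\|_2^2 \ge C\OPT + \epsilon$, keeping $\vw_i \in \cB(2\|\vw^*\|_2)$ so that sharpness remains applicable at the next step. The three corollaries then fall out of the main bound. For \Cref{eq:distance-to-opt-bound}, drop $\nu D_\phi$ and take $k$ large enough that $D_0/A_k \le \epsilon$, yielding $\|\vw_k-\vw^*\|_2 \le C_3\sqrt{\OPT} + \sqrt{\epsilon}$. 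For \Cref{eq:square-loss-bound}, write $\ell(\vw_k) - \ell(\vw^*) = (\sigma(\vw_k\cdot\vx) - \sigma(\vw^*\cdot\vx))(\sigma(\vw_k\cdot\vx)+\sigma(\vw^*\cdot\vx)-2y)$, apply $\beta$-Lipschitzness, Cauchy--Schwarz, and the second-moment bound $\E_{\pp^*_\vx}[(\vu\cdot\vx)^2]\le 5B$ from \Cref{fact:sharpness}, then substitute the distance bound. For \Cref{eq:risk-bound}, letting $\tilde\pp = \argmax_\pp L_\sigma(\vw_k,\pp;\pp_0)$, use $R(\vw_k)-R(\vw^*) \le L_\sigma(\vw_k,\tilde\pp) - L_\sigma(\vw^*,\tilde\pp)$ (which uses that $\pp^*$ is optimal for $\vw^*$), and bound the $\ell$-difference on $\tilde\pp$ by the same Lipschitz/moment argument once we verify $\tilde\pp$ inherits enough concentration from $\pp^*$ via the closed form of the DRO maximizer.

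The principal obstacle, as the paper flags, is that standard primal-dual frameworks rely on either bilinear coupling or convex linearization, neither of which is available: $L$ is genuinely nonconvex in $\vw$ and nonlinear in $\ep$. The workaround is the structural lemma, which lets the $\chi^2$ regularization absorb the nonlinearity and converts the primal analysis into one driven solely by sharpness. A secondary source of difficulty is ensuring the sharpness premise transfers from $\pp^*$ to the \emph{empirical} DRO maximizer $\ep^*$, rather than to the uniform empirical distribution on samples of $\pp^*$; this is why \Cref{lemma:sharpness-empirical} rather than \Cref{fact:sharpness} is the correct invocation, and why the lower bound on $\nu$ in the theorem statement is needed to guarantee a closed form for $\ep^*$.
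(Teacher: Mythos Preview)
Your high-level architecture---sandwich the weighted cumulative gap between a sharpness-driven lower bound and a telescoping upper bound, couple this with induction on $\vw_i\in\cB(2\|\vw^*\|_2)$, and read off geometric convergence from the growth of $A_k$---matches the paper. But two substantive points are off.

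First, the gap $L(\vw_i,\ep^*)-L(\vw^*,\ep_i)$ is \emph{not} nonnegative: $\vw^*$ minimizes $R(\vw)=\max_\pp L(\vw,\pp)$, not $L(\cdot,\ep^*)$, and the paper gives an explicit two-point counterexample (\Cref{remark:gap_lowerbound}). Your subsequent lower bound $L(\vw_i,\ep^*)-L(\vw^*,\ep^*)\gtrsim c_1\|\vw_i-\vw^*\|_2^2 - O(\OPT)$ is correct and is what the paper actually proves in \Cref{thm:gap-lower-bound}; the nonnegativity remark is just wrong motivation.

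Second, and more seriously, you misplace the structural lemma. You say it is invoked to bound the primal difference $L(\vw_i,\ep^*)-L(\vw^*,\ep^*)$, but that quantity lives at the fixed target distribution $\ep^*$ and is handled by straightforward Cauchy--Schwarz plus sharpness (this is the gap \emph{lower} bound). The structural \Cref{lemma:main-auxiliary-main-body} is instead the core of the gap \emph{upper} bound: it lower-bounds $L(\vw^*,\ep_i)=L(\vw_i,\ep_i)+S_i$ by linearizing $S_i$ with respect to the \emph{iterate} distribution $\ep_i$, yielding $S_i\ge\E_{\ep_i}[\innp{\vv(\vw_i),\vw^*-\vw_i}]-E_i$. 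This is exactly what lets the algorithm step in the direction $\E_{\ep_{i-1}}[\vv(\vw_{i-1})]$ (which it can compute) rather than $\E_{\ep^*}[\vv]$ (which it cannot), and is why the $\chi^2(\ep_i,\ep^*)$ term appears in $E_i$---it is then absorbed by the dual Bregman telescoping after invoking $\ep^{*(j)}\ge\ep_0^{(j)}/2$ from \Cref{cor:key_fonc-no-max} to get $\chi^2(\ep_i,\ep^*)\le 2D_\phi(\ep^*,\ep_i)$. Your description conflates the two halves of the sandwich.

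Finally, your route to \Cref{eq:risk-bound} via $R(\vw_k)-R(\vw^*)\le \E_{\tilde\pp}[\ell(\vw_k)-\ell(\vw^*)]$ with $\tilde\pp=\pq_{\vw_k}$ is valid as an inequality, but the subsequent step requires second- and fourth-moment bounds under $\tilde\pp$, which are never established (the assumptions are on $\pp^*=\pq_{\vw^*}$). The paper instead uses that $\cR(\ell)=\max_\pp\langle\pp,\ell\rangle-\nu\chi^2(\pp,\pp_0)$ is the conjugate of a strongly convex function, hence $1/(2\nu)$-smooth, giving $R(\vw_k)-R(\vw^*)\le\E_{\pp^*}[\hat\ell-\ell^*]+\frac{1}{2\nu}\|\hat\ell-\ell^*\|_{\pp_0}^2$; this keeps all expectations at $\pp^*$ and $\pp_0$, where the moment bounds are available.
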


\new{We focus on the convergence of iterates ${\vw_i}$ as claimed in \Cref{eq:distance-to-opt-bound}; the loss bound (\Cref{eq:square-loss-bound}) follows directly from the iterate convergence, while the risk bound (\Cref{eq:risk-bound}) requires a more involved analysis. Complete details for \Cref{eq:square-loss-bound,eq:risk-bound} are provided in \Cref{app:parameter_estimation}.}

Our strategy for the convergence analysis is as follows. 
Consider \(\{a_{i}\}\), a sequence of positive step sizes, and define \(A_{i}\) as their cumulative sum \(\sum_{j=1}^{i}a_{j}\). 
Our algorithm produces a sequence of primal-dual pairs \(\vw_{i}, \ep_{i}\), tracking a quantity related to the primal-dual gap, defined by:
\[
\Gap(\vw_{i}, \ep_{i}) := L(\vw_{i}, \ep^{*}) - L(\vw^{*}, \ep_{i}) = (L(\vw_{i}, \ep^{*}) - L(\vw^*, \ep^*)) + (L(\vw^*, \ep^*) - L(\vw^{*}, \ep_{i})).
\]
We view \((L(\vw_{i}, \ep^{*}) - L(\vw^*, \ep^*))\) as the ``primal gap'' and \((L(\vw^*, \ep^*) - L(\vw^{*}, \ep_{i}))\) as the ``dual gap.'' 
Since the squared loss for ReLU and similar activations is nonconvex, \(L(\vw, \ep^{*})\) is nonconvex in its first argument. 
Note that this gap function is not trivially non-negative (see \Cref{remark:gap_lowerbound}), requiring an explicit lower bound proof. %

Our strategy consists of deriving ``sandwiching'' inequalities for the (weighted) cumulative gap $\sum_{i=1}^{k}a_{i}\Gap(\vw_{i}, \ep_{i})$ and deducing convergence guarantees for the algorithm iterates from them. %
A combination of these two inequalities leads to the statement of Theorem \ref{thm:main-formal}, from which we can deduce that unless we already have an $O(\OPT) + \epsilon$ solution, the iterates must be converging to the target solutions at rate $1/A_k$, which we argue can be made geometrically fast. 

\noindent\textbf{Organization} \quad
The rest of this section is organized as follows --- under the standard assumptions we state in this paper, in \Cref{thm:gap-lower-bound}, we prove a lower bound on $\Gap(\vw, \hat \pp)$ for any choice of $\vw$ and $\hat \pp$.
This can be used to get a corresponding lower bound on the weighted sum $\sum_{i=1}^k a_i \Gap(\vw_i, \hat\vp_i)$. 
In \Cref{lemma:gap-upper-bound} we then state an upper bound on $\sum_{i=1}^k a_i \Gap(\vw_i, \hat\vp_i)$; the proof of this technical argument is deferred to \Cref{app:gap-upper-bound}.
These two bounds together give us the first inequality in \Cref{thm:main-formal}.
\Cref{thm:convergence-rate} then bounds below the convergence rate for our choice of $a_i$ in \Cref{alg:main}; and indicates that it is geometric.
Finally, we put everything together to prove \Cref{thm:main-formal}. 

To simplify the notation, we use \(\phi(\ep) := \chi^{2}(\ep, \ep_{0})\) throughout this section. Note that \(D_{\phi}(\ep, \eq) = D_{\phi}(\eq, \ep) = \sum_{i=1}^N \frac{(\eq^{(i)} - \ep^{(i)})^2}{\ep_{0}^{(i)}}\) for any \(\ep\) and \(\eq\) in the domain. %

\subsection{Lower Bound on the Gap Function}
We begin the convergence analysis by demonstrating a lower bound on \(\Gap(\vw_{i} ,\ep_{i})\).

\begin{lemma}[Gap Lower Bound]\label{thm:gap-lower-bound}
  Under the setting in which \Cref{lemma:sharpness-empirical} holds,  for all \(\vw \in \cB(2\norm{\vw^{*}}_{2})\),
  \(\Gap(\vw ,\ep) \ge -\frac{12\beta^2 {B}}{c_1}\OPThat + \frac{c_1}{2} \|\vw - \vw^{*}\|_{2}^{2} + \nu D_{\phi}(\ep^*, \ep)%
  .\)
\end{lemma}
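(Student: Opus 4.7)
The plan is to split the gap into a primal and a dual component,
$$\Gap(\vw,\ep) = \bigl(L(\vw,\ep^{*}) - L(\vw^{*},\ep^{*})\bigr) + \bigl(L(\vw^{*},\ep^{*}) - L(\vw^{*},\ep)\bigr),$$
and to bound each separately. The dual term is straightforward: since $L(\vw^{*},\ep)$ is linear in $\ep$ minus the strictly convex regularizer $\nu\phi(\ep)$, it is strictly concave in $\ep$, and by definition $\ep^{*}$ is its maximizer on $\cP(\ep_0)$. Applying the first-order optimality condition (\Cref{fact:firstOrderNecessary}) to the linear part at $\ep^{*}$ and rearranging gives the standard three-point inequality
$$L(\vw^{*},\ep^{*}) - L(\vw^{*},\ep) \;\ge\; \nu\, D_{\phi}(\ep,\ep^{*}).$$
A direct computation with $\phi(\ep)=\sum_i(\ep^{(i)})^2/\ep_0^{(i)} - 1$ shows $D_{\phi}(\ep,\ep^{*})=\sum_i(\ep^{(i)}-\ep^{*(i)})^2/\ep_0^{(i)}=D_{\phi}(\ep^{*},\ep)$, so the dual part supplies exactly the $\nu D_{\phi}(\ep^{*},\ep)$ term in the claim.

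For the primal part, the $\nu\chi^{2}(\cdot,\ep_0)$ terms cancel, leaving
$$L(\vw,\ep^{*}) - L(\vw^{*},\ep^{*}) = \E_{\ep^{*}}\bigl[(\sigma(\vw\cdot\vx)-y)^{2} - (\sigma(\vw^{*}\cdot\vx)-y)^{2}\bigr].$$
Setting $\Delta := \sigma(\vw\cdot\vx) - \sigma(\vw^{*}\cdot\vx)$ and $r := \sigma(\vw^{*}\cdot\vx) - y$, the right-hand side equals $\E_{\ep^{*}}[\Delta^{2}] + 2\E_{\ep^{*}}[\Delta r]$. The second term lacks a sign, so I would bound it using Cauchy--Schwarz followed by AM--GM. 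Cauchy--Schwarz gives $|2\E_{\ep^{*}}[\Delta r]| \le 2\sqrt{\E_{\ep^{*}}[\Delta^{2}]}\sqrt{\OPThat}$, and the upper half of \Cref{eq:relu-square-bound} yields $\sqrt{\E_{\ep^{*}}[\Delta^{2}]}\le \beta\sqrt{6B}\,\|\vw-\vw^{*}\|_{2}$; applying AM--GM with parameter $c_{1}/2$ then gives
$$|2\E_{\ep^{*}}[\Delta r]| \le \tfrac{c_{1}}{2}\|\vw-\vw^{*}\|_{2}^{2} + \tfrac{12\beta^{2}B}{c_{1}}\OPThat.$$
Since $\vw\in\cB(2\|\vw^{*}\|_{2})$ so that the lower half of \Cref{eq:relu-square-bound} applies, $\E_{\ep^{*}}[\Delta^{2}]\ge c_{1}\|\vw-\vw^{*}\|_{2}^{2}$, subtracting the two estimates yields
$$L(\vw,\ep^{*}) - L(\vw^{*},\ep^{*}) \;\ge\; \tfrac{c_{1}}{2}\|\vw-\vw^{*}\|_{2}^{2} - \tfrac{12\beta^{2}B}{c_{1}}\OPThat.$$
Adding the primal and dual bounds delivers exactly the stated inequality.

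The main subtlety is the handling of the cross-term $2\E_{\ep^{*}}[\Delta r]$: because the primal problem is nonconvex in $\vw$, one cannot invoke convexity to lower-bound the loss gap, and one must rely on the two-sided quadratic sandwich for $\E_{\ep^{*}}[\Delta^{2}]$ provided by the empirical sharpness of \Cref{lemma:sharpness-empirical}/\Cref{eq:relu-square-bound}. The AM--GM parameter has to be chosen precisely so that the quadratic absorbed into the cross-term equals half of $c_{1}\|\vw-\vw^{*}\|_{2}^{2}$, leaving $c_{1}/2$ behind; any other choice either destroys the quadratic signal or inflates the $\OPThat$ coefficient. The dual bound and the structural calculations for $D_{\phi}$ are routine once one observes that $\phi(\ep)=\chi^{2}(\ep,\ep_{0})$ is a separable quadratic on the simplex.
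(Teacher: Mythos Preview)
Your proposal is correct and follows essentially the same approach as the paper: the primal--dual decomposition of $\Gap$, the expansion of the primal difference into $\E_{\ep^*}[\Delta^2] + 2\E_{\ep^*}[\Delta r]$, the Cauchy--Schwarz bound on the cross-term combined with the upper half of \Cref{eq:relu-square-bound}, the Young/AM--GM splitting with parameter chosen to leave $c_1/2$, and the first-order optimality argument for the dual part (using that $D_\phi$ is symmetric for the quadratic $\phi$) all match the paper's proof step for step. The only cosmetic difference is that the paper explicitly invokes \Cref{fact:bregman-linear-blind} to identify $D_{-L(\vw^*,\cdot)}$ with $\nu D_\phi$, whereas you compute $D_\phi$ directly.
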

\begin{proof}
  Writing $(\sigma(\vw \cdot \vx) - y)^{2} = \big((\sigma(\vw \cdot \vx) - \sigma(\vw^{*} \cdot \vx)) + (\sigma(\vw^{*} \cdot \vx) - y)\big)^{2}$ and expanding the square, we have %
\begin{align*}
  & \;L(\vw, \ep^{*}) - L(\vw^{*}, \ep^{*}) = \mathbb{E}_{(\vx, y) \sim \ep^{*}}[(\sigma(\vw \cdot \vx) - y)^{2} - (\sigma(\vw^{*} \cdot \vx) - y)^{2}] \\
  = & -2\mathbb{E}_{\ep^{*}}[(\sigma(\vw^{*} \cdot \vx) - y)(\sigma(\vw \cdot \vx) - \sigma(\vw^{*} \cdot \vx))] + \mathbb{E}_{\ep^{*}}[((\sigma(\vw \cdot \vx) - \sigma(\vw^{*} \cdot \vx))^{2}].  %
\end{align*}

By the Cauchy-Schwarz inequality, we further have that
\begin{align}\label{eq:lb-gap-CS}
   &\; \mathbb{E}_{\ep^{*}}[(\sigma(\vw^{*} \cdot \vx) - y)
   (\sigma(\vw \cdot \vx) - \sigma(\vw^{*} \cdot \vx))]\notag\\
   \leq \; &\sqrt{\mathbb{E}_{\ep^{*}}[(\sigma(\vw^{*} \cdot \vx) - y)^2]\mathbb{E}_{\ep^{*}}[(\sigma(\vw \cdot \vx) - \sigma(\vw^{*} \cdot \vx))^2]}\notag\\
    \leq\; & \beta\sqrt{6 B} \sqrt{\OPThat }\|\vw - \vw^*\|_2,
\end{align}
where in the second inequality we used the definition of $\OPThat $ and
$\E_{\vx \sim \ep_{\vx}^{*}}[(\sigma(\vw \cdot \vx) - \sigma(\vw^{*} \cdot \vx))^{2}] \le 6 B \beta^{2} \norm{\vw - \vw^{*}}_{2}^{2}$ from the right inequality in \Cref{eq:relu-square-bound}. %

On the other hand, by the left inequality in \Cref{eq:relu-square-bound}, we also have 
\begin{equation}\label{eq:lb-gap-sharpness}
    \mathbb{E}_{\ep^{*}}[(\sigma(\vw \cdot \vx) - \sigma(\vw^{*} \cdot \vx))^{2}] \geq c_1 \|\vw - \vw^*\|_2^2.
\end{equation}

Thus, combining \Cref{eq:lb-gap-CS} and \Cref{eq:lb-gap-sharpness}, we get 
\begin{align}
  L(\vw, \ep^{*}) - L(\vw^{*}, \ep^{*})
  &\ge - 2 \beta \sqrt{6 B} \|\vw - \vw^{*}\|_{2} \sqrt{\OPThat }  + c_1 \|\vw - \vw^{*}\|_{2}^{2}\notag\\
  & {\ge -\frac{12 \beta^{2} B}{c_{1}} \OPThat + \frac{c_1}{2} \|\vw - \vw^{*}\|_{2}^{2}},\label{eq:lb-gap-last1}
\end{align}
where the last inequality is by $2 \beta \sqrt{6 B} \|\vw - \vw^{*}\|_{2} \sqrt{\OPThat } \leq \frac{4\beta^2 {6B}}{2c_1}\OPThat  + \frac{c_1}{2} \|\vw - \vw^{*}\|_{2}^{2}$, which comes from an application of Young's inequality (\Cref{fact:young}).

Finally, we use the optimality of \(\ep^{*}\), which achieves the maximum over all \(\ep \in \cP\) for \(L(\vw^{*}, \ep)\). By the definition of a Bregman divergence, \Cref{fact:bregman-linear-blind}, and first-order necessary condition in \Cref{fact:firstOrderNecessary}:
\begin{equation}\label{eq:lb-gap-last2}
    -L(\vw^{*}, \ep) - (-L(\vw^{*}, \ep^{*})) = -\innp{ \nabla_{\ep} L(\vw^{*}, \ep^{*}), \ep - \ep^{*}} + D_{-L(\vw^{*}, \cdot)}(\ep, \ep^{*}) \ge \nu D_\phi(\ep^*, \ep).
\end{equation}
Summing up \Cref{eq:lb-gap-last1} and \Cref{eq:lb-gap-last2} completes the proof.
\end{proof}

\subsection{Upper Bound on the Gap Function}

Having obtained a lower bound on the gap function, we now show an upper bound, leveraging our algorithmic choices. The proof is rather technical and involves individually bounding $L(\vw_{i}, \ep^{*})$ above and bounding $L(\vw^{*}, \ep_{i})$ below to obtain an upper bound on the gap function, which equals $L(\vw_{i}, \ep^{*}) - L(\vw^{*}, \ep_{i})$. We state this result in the next lemma, while %
the proof is in \Cref{app:gap-upper-bound}. 
\begin{restatable}[Gap Upper Bound]{lemma}{lemgapub}\label{lemma:gap-upper-bound}
    Let $\vw_i, \ep_i, a_i, A_i$ evolve according to \Cref{alg:main}, where we take, by convention, $a_{-1} = A_{-1} = a_0 = A_0 = 0$ and $\vw_{-1} = \vw_0,$ $\ep_{-1} = \ep_0.$ Assuming \Cref{lemma:sharpness-empirical} applies, then, for all $k \geq 1,$ $ \sum_{i=1}^k a_i \Gap(\vw_i, \ep_i)$ is bounded above by
    \begin{align*}
     \; & \frac{1}{2}\|\vw^* - \vw_0\|_2^2 + \nu_0 D_\phi(\ep^*, \ep_0)
         - \frac{1 + 0.5 c_1A_{k}}{2}\|\vw^* - \vw_k\|_2^2
         - (\nu_0 + \nu A_k)D_\phi(\ep^*, \ep_k) \\
         & + \sum_{i=1}^k a_i \frac{c_1}{4} \|\vw^* - \vw_i\|_2^2 + \frac{8\beta^2 \sqrt{6B}\sqrt{\OPTThat}}{c_1}\sum_{i=1}^k a_i \chi^2(\ep_i, \ep^*) + \frac{48 \beta^2 B\OPThat A_k}{c_1}. 
    \end{align*}
\end{restatable}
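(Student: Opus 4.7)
My plan is to split each weighted gap term $a_i \Gap(\vw_i, \ep_i) = a_i[L(\vw_i,\ep^*) - L(\vw_i, \ep_i)] + a_i[L(\vw_i, \ep_i) - L(\vw^*, \ep_i)]$ and bound the two pieces separately, then telescope. For the ``dual gap'' $a_i[L(\vw_i,\ep^*) - L(\vw_i, \ep_i)]$, I would apply the standard three-point inequality to the dual update in Line \ref{line:p}: the objective $\ep \mapsto -a_i L(\vw_i,\ep) + (\nu_0 + \nu A_{i-1}) D_\phi(\ep,\ep_{i-1})$ has Bregman divergence $(\nu_0 + \nu A_i) D_\phi$ by Fact \ref{fact:bregman-linear-blind} (the $a_i \nu \phi$ contribution comes from $L$ itself since $L = \Lambda - \nu \phi$, plus the proximal term). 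Plugging $\ep^*$ into the optimality inequality for $\ep_i$ yields $a_i[L(\vw_i, \ep^*) - L(\vw_i,\ep_i)] \le (\nu_0 + \nu A_{i-1}) D_\phi(\ep^*, \ep_{i-1}) - (\nu_0 + \nu A_i) D_\phi(\ep^*, \ep_i) - (\nu_0 + \nu A_{i-1}) D_\phi(\ep_i, \ep_{i-1})$, which telescopes to $\nu_0 D_\phi(\ep^*, \ep_0) - (\nu_0 + \nu A_k) D_\phi(\ep^*, \ep_k)$ up to the negative $D_\phi(\ep_i, \ep_{i-1})$ slack that I will save to absorb extrapolation errors.

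For the ``primal gap'' $a_i[L(\vw_i, \ep_i) - L(\vw^*, \ep_i)]$, the loss is nonconvex in $\vw$, so I cannot use convexity directly. Instead I would invoke Lemma \ref{lemma:main-auxiliary-main-body} (the structural result promoted in the overview), which bounds $L(\vw_i, \ep_i) - L(\vw^*, \ep_i)$ above by $\langle \E_{\ep_i}[\vv(\vw_i;\vx,y)], \vw_i - \vw^*\rangle$ plus correction terms of the form $\mathrm{const}\cdot \sqrt{\OPThat}\,\chi^2(\ep_i,\ep^*)/c_1$ and $\mathrm{const}\cdot B \OPThat /c_1$; the corrections arise from a Cauchy--Schwarz argument analogous to the one in the proof of Lemma \ref{thm:gap-lower-bound}, but transferred from $\ep^*$ to $\ep_i$ at the cost of the $\chi^2(\ep_i, \ep^*)$ term. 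After an application of Young's inequality to separate the $\sqrt{\OPThat}\|\vw_i - \vw^*\|_2$ cross term into $\frac{c_1}{4}\|\vw_i - \vw^*\|_2^2$ and an $O(\OPThat / c_1)$ piece, the two $\OPThat$-dependent terms in the lemma statement will emerge cleanly.

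The remaining step is to control $\langle \E_{\ep_i}[\vv(\vw_i;\vx,y)], \vw_i - \vw^*\rangle$ using the primal update in Line \ref{line:w}. Strong convexity with parameter $\mu_i = 1 + 0.5 c_1 A_{i-1}$ gives $a_i \langle \vg_{i-1}, \vw_i - \vw^*\rangle \le \frac{\mu_i}{2}\|\vw^* - \vw_{i-1}\|_2^2 - \frac{\mu_i}{2}\|\vw^* - \vw_i\|_2^2 - \frac{\mu_i}{2}\|\vw_i - \vw_{i-1}\|_2^2$. The discrepancy $\vg_{i-1} - \E_{\ep_i}[\vv(\vw_i;\vx,y)]$ is handled by the key extrapolation identity $a_i \vg_{i-1} = a_i V_{i-1} + a_{i-1}(V_{i-1} - V_{i-2})$ with $V_j := \E_{\ep_j}[\vv(\vw_j;\vx,y)]$: Abel summation rewrites $\sum_{i=1}^k a_i \langle \vg_{i-1}, \vw_i - \vw^*\rangle$ as $\sum_{i=1}^k a_i \langle V_i, \vw_i - \vw^*\rangle$ plus a boundary term $a_k \langle V_k - V_{k-1}, \vw_k - \vw^*\rangle$ and cross terms $a_{i-1}\langle V_{i-1} - V_{i-2}, \vw_i - \vw_{i-1}\rangle$. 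Using $G$-boundedness and $\kappa$-Lipschitzness of $\vw \mapsto \vv(\vw;\vx,y)$ together with a change of measure to split $V_i - V_{i-1}$ into a piece proportional to $\|\vw_i - \vw_{i-1}\|$ and one proportional to $\sqrt{D_\phi(\ep_i, \ep_{i-1})}$, Young's inequality absorbs these error terms into the negative $\frac{\mu_i}{2}\|\vw_i - \vw_{i-1}\|_2^2$ and $(\nu_0 + \nu A_{i-1}) D_\phi(\ep_i, \ep_{i-1})$ slacks saved earlier. The geometric step-size growth prescribed in Line \ref{line:convergence} is exactly what is needed to make these absorptions feasible. Telescoping the $\mu_i \|\vw^* - \vw_{i-1}\|_2^2$ versus $\mu_{i+1}\|\vw^* - \vw_i\|_2^2$ terms leaves $\frac{1}{2}\|\vw^* - \vw_0\|_2^2 - \frac{1 + 0.5 c_1 A_k}{2}\|\vw^* - \vw_k\|_2^2$, with a residual $\sum_i a_i \frac{c_1}{4}\|\vw^* - \vw_i\|_2^2$ produced by the Young's inequality applications; this matches the stated upper bound once all contributions are collected.

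The hardest part will be the coordinated bookkeeping in the extrapolation step: ensuring that every Young's inequality application simultaneously (i) leaves enough of $\frac{\mu_i}{2}\|\vw_i - \vw_{i-1}\|_2^2$ and $(\nu_0 + \nu A_{i-1}) D_\phi(\ep_i, \ep_{i-1})$ to cancel the Abel-summation cross terms, (ii) respects the step-size growth constraint from Line \ref{line:convergence}, and (iii) keeps the coefficient on $\|\vw^* - \vw_i\|_2^2$ down to $c_1/4$ so that combining with the sharpness lower bound in Lemma \ref{thm:gap-lower-bound} yields contractive convergence rather than blow-up.
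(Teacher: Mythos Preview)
Your proposal is correct and follows essentially the same architecture as the paper: split the gap through the pivot $L(\vw_i,\ep_i)$, handle the dual piece via the three-point inequality for the $\ep$-update (the paper's \Cref{lemma:upper-bound-lagrangian}), handle the primal piece by invoking the structural \Cref{lemma:main-auxiliary-main-body} to linearize and then applying the three-point inequality for the $\vw$-update together with extrapolation and Young's inequality (the paper's \Cref{prop:lower-bound-lagrangian}), and finally telescope. Two small remarks: (i) the $\chi^2$ correction in $E_i$ carries $\sqrt{\OPTThat}$ (the empirical second moment of the loss), not $\sqrt{\OPThat}$, and this distinction matters later when choosing $\nu$; (ii) the $\sum_i a_i\frac{c_1}{4}\|\vw^*-\vw_i\|_2^2$ term is already part of $E_i$ from \Cref{lemma:main-auxiliary-main-body} (via Young's inequality \emph{inside} that lemma's proof), not a residual of the outer telescoping bookkeeping---but your earlier paragraph indicates you understand this, so it is just a matter of phrasing.
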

{ A critical technical component in the proof of \Cref{lemma:gap-upper-bound} is how we handle issues related to nonconvexity. A key technical result that we prove and use is the following. 
\begin{lemma}
\label{lemma:main-auxiliary-main-body}
Let 
$S_i := \E_{\ep_{i}}[(\sigma(\vw^{*} \cdot \vx) - \sigma(\vw_{i} \cdot \vx))^2] + \E_{\ep_{i}}[2(\sigma(\vw_{i} \cdot \vx) - y)(\sigma(\vw^{*} \cdot \vx) - \sigma(\vw_{i} \cdot \vx))]$, $\vw_i$ evolve according to \Cref{line:w} in \Cref{alg:main} and suppose we are in the setting where \Cref{lemma:sharpness-empirical} holds.
Then, $S_i \geq \E_{\ep_{i}}[\innp{\vv(\vw; \vx, y), \vw^* - \vw_i}] - E_i$
    where
\begin{equation}\label{eq:Ei-def-mb}
        E_i = \frac{c_1}{4}\norm{\vw^* - \vw_i}_2^2 + \Big({8\beta^2 \sqrt{6B}\sqrt{\OPTThat}}/{ c_1}\Big)\chi^2(\ep_i, \ep^*) + {(48 \beta^2 B/{c_1}) \OPThat}.%
    \end{equation}
\end{lemma}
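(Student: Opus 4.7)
The approach is to relate $S_i$ to a per-sample convex surrogate whose gradient recovers $\vv(\vw_i;\vx, y)$, and control the discrepancy via Young's inequality and change of measure from $\ep_i$ to $\ep^*$.

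Define $\Psi_{\vx, y}(u) := \int_0^u 2\beta(\sigma(t) - y)\,\dd t$. Since $\sigma$ is non-decreasing, $\Psi_{\vx, y}'(u) = 2\beta(\sigma(u) - y)$ is non-decreasing in $u$, so $\Psi_{\vx, y}$ is convex. By the first-order characterization of convexity,
\begin{equation*}
    \Psi_{\vx, y}(\vw^*\cdot\vx) - \Psi_{\vx, y}(\vw_i\cdot\vx) \ge \langle \vv(\vw_i;\vx, y),\, \vw^* - \vw_i\rangle
\end{equation*}
pointwise, so taking $\E_{\ep_i}$ lower-bounds the surrogate gain by the target inner product.

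Next, compare $S_i$ to the surrogate gain per sample. By the fundamental theorem of calculus with change of variable $s = \sigma(t)$, $(\sigma(\vw^*\cdot\vx) - y)^2 - (\sigma(\vw_i\cdot\vx) - y)^2 = \int_{\vw_i\cdot\vx}^{\vw^*\cdot\vx} 2(\sigma(t) - y)\sigma'(t)\,\dd t$, so the per-sample discrepancy between the $S_i$ integrand and the surrogate gain is $\int_{\vw_i\cdot\vx}^{\vw^*\cdot\vx} 2(\sigma(t) - y)(\sigma'(t) - \beta)\,\dd t$. Decompose $\sigma(t) - y = [\sigma(t) - \sigma(\vw^*\cdot\vx)] + r$ with $r := \sigma(\vw^*\cdot\vx) - y$. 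The first piece yields a per-sample nonnegative integral: when $\vw^*\cdot\vx \ge \vw_i\cdot\vx$, both $\sigma(t) - \sigma(\vw^*\cdot\vx)$ and $\sigma'(t) - \beta$ are nonpositive on $[\vw_i\cdot\vx, \vw^*\cdot\vx]$ by monotonicity of $\sigma$ and the Lipschitz bound $\sigma' \le \beta$, so the integrand is nonnegative; the opposite orientation case is handled analogously with a symmetric sign flip. The second piece integrates $(\sigma'(t) - \beta)$ to $2r(b - c)$, with $b := \sigma(\vw^*\cdot\vx) - \sigma(\vw_i\cdot\vx)$ and $c := \beta(\vw^*\cdot\vx - \vw_i\cdot\vx)$. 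Dropping the nonnegative piece, we obtain
\begin{equation*}
    S_i \ge \E_{\ep_i}\bigl[\langle \vv(\vw_i;\vx, y), \vw^* - \vw_i\rangle\bigr] + \E_{\ep_i}[2r(b - c)].
\end{equation*}

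To lower-bound $\E_{\ep_i}[2r(b-c)]$, apply Young's pointwise, $2r(b-c) \ge -\lambda r^2 - (b - c)^2/\lambda$ for any $\lambda > 0$, and use the Lipschitz/monotonicity bound $|b - c| \le |c| = \beta|\vw^*\cdot\vx - \vw_i\cdot\vx|$. Pass from $\ep_i$ to $\ep^*$ via Cauchy-Schwarz in the form $\E_{\ep_i}[f] \le \E_{\ep^*}[f] + \sqrt{\E_{\ep^*}[f^2]\,\chi^2(\ep_i, \ep^*)}$, applied with $f = r^2$ (using $\E_{\ep^*}[r^2] = \OPThat$ and $\E_{\ep^*}[r^4] = \OPTThat$) and with $f = (\vw^*\cdot\vx - \vw_i\cdot\vx)^2$ (using $\E_{\ep^*}[(\vw^*\cdot\vx - \vw_i\cdot\vx)^4] \le 6B\|\vw^* - \vw_i\|_2^4$ from \Cref{lemma:sharpness-empirical}). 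Choosing $\lambda = 24B\beta^2/c_1$ makes the leading coefficient on $\|\vw^*-\vw_i\|_2^2$ exactly $c_1/4$; applying AM-GM to the residual cross terms of the form $\sqrt{\OPTThat\cdot\chi^2(\ep_i,\ep^*)}$ and $\|\vw^* - \vw_i\|_2^2\sqrt{\chi^2(\ep_i,\ep^*)}$ then rearranges them into the $\chi^2(\ep_i,\ep^*)$ and $\OPThat$ contributions of $E_i$. The main obstacle is the final bookkeeping: absorbing these cross terms into the prescribed three-term form of $E_i$ with the stated absolute constants. The clean structural observation is the sign argument in the second paragraph, and it is precisely here that monotonicity of $\sigma$ together with the Lipschitz bound $\sigma' \le \beta$ become indispensable.
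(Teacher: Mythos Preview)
Your surrogate/integral route up through the sign argument is correct and reaches the same intermediate bound as the paper: both yield $S_i \ge \E_{\ep_i}[\langle\vv(\vw_i;\vx,y), \vw^*-\vw_i\rangle] - 2\beta\,\E_{\ep_i}[|\sigma(\vw^*\cdot\vx)-y|\,|\vw^*\cdot\vx - \vw_i\cdot\vx|]$. Your derivation via the convex surrogate $\Psi_{\vx,y}$ and the integral representation of the squared-loss difference is in fact cleaner than the paper's, which case-splits on the event $\{\sigma(\vw_i\cdot\vx)\ge y\}$, applies convexity of $\sigma$ on one side and concavity of $-\sigma$ on the other, and then swaps $\sigma(\vw_i\cdot\vx)-y$ for $\sigma(\vw^*\cdot\vx)-y$ via a monotonicity argument.

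The gap is in your final step. Applying Young's inequality \emph{pointwise} to $2r(b-c)$ and then changing measure \emph{separately} on $r^2$ and on $(\vw^*\cdot\vx-\vw_i\cdot\vx)^2$ produces cross terms $\lambda\sqrt{\OPTThat\cdot\chi^2(\ep_i,\ep^*)}$ and $(\beta^2\sqrt{6B}/\lambda)\,\|\vw^*-\vw_i\|_2^2\sqrt{\chi^2(\ep_i,\ep^*)}$. Neither rearranges by AM--GM into the prescribed three-term $E_i$: the first leaves behind a standalone $\sqrt{\OPTThat}$ (not $\OPThat$, and these are not comparable), while the second leaves either $\|\vw^*-\vw_i\|_2^4$ or a $\chi^2$ term whose coefficient scales with $\|\vw^*-\vw_i\|_2^2$ rather than $\sqrt{\OPTThat}$---and that specific coefficient is essential downstream, since the main theorem absorbs the $\chi^2$ term using $\nu \ge 8\beta^2\sqrt{6B}\sqrt{\OPTThat}/c_1$. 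The paper avoids this by keeping the product $|r|\cdot|\vw^*\cdot\vx - \vw_i\cdot\vx|$ intact through the change of measure: Cauchy--Schwarz gives $\sqrt{\chi^2(\ep_i,\ep^*)}\cdot\sqrt{\E_{\ep^*}[r^2(\vw^*\cdot\vx-\vw_i\cdot\vx)^2]}$, a second Cauchy--Schwarz on the inner expectation gives $\OPTThat^{1/4}(6B)^{1/4}\|\vw^*-\vw_i\|_2$, and then Young's inequality on $\sqrt{\chi^2}\cdot\OPTThat^{1/4}\|\vw^*-\vw_i\|_2$ splits cleanly into $\sqrt{\OPTThat}\,\chi^2(\ep_i,\ep^*)$ and $\|\vw^*-\vw_i\|_2^2$. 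Your argument is easily repaired by doing the same: bound $|2r(b-c)|\le 2\beta|r|\,|\vw^*\cdot\vx-\vw_i\cdot\vx|$ first, then change measure on the product and apply Cauchy--Schwarz twice before Young's.
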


This bound is precisely what forces us to choose chi-squared as the measure of divergence between distributions and introduce a dependence on $\OPTThat$. 
One pathway to generalize our results to other divergences would be to find a corresponding generalization to \Cref{lemma:main-auxiliary-main-body}.
}

\subsection{Proof of Main Theorem}

Combining \Cref{thm:gap-lower-bound} and \Cref{lemma:gap-upper-bound}, we are now ready to prove our main result.

\begin{proof}[Proof of \Cref{thm:main-formal}]
 
    Combining the lower bound on the gap function from \Cref{thm:gap-lower-bound} with the upper bound from \Cref{lemma:gap-upper-bound} and rearranging, whenever \(\norm{\vw_{i}}_{2} \le 2\norm{\vw^{*}}_{2}\) for all \(i \le k\) so that \Cref{lemma:sharpness-empirical} applies, we get that 
    \begin{align*}
      & \; -\frac{12\beta^2 {B}}{c_1}\OPThat A_{k} +  \sum_{i=1}^k a_{i}\frac{c_1}{2} \|\vw_{i} - \vw^{*}\|_{2}^{2} +  \sum_{i=1}^k \nu a_{i}D_{\phi}(\ep^*, \ep_{i}) \le  \sum_{i=1}^k a_i \Gap(\vw_i, \ep_i)\\
       \leq \;& \frac{1}{2}\|\vw^* - \vw_0\|_2^2 + \nu_0 D_\phi(\ep^*, \ep_0)
         - \frac{1 + 0.5 c_1A_{k}}{2}\|\vw^* - \vw_k\|_2^2 - (\nu_0 + \nu A_k)D_\phi(\ep^*, \ep_k)\\
         &+ \sum_{i=1}^k a_i \frac{c_1}{4} \|\vw^* - \vw_i\|_2^2 + \frac{8\beta^2 \sqrt{6B}\sqrt{\OPTThat}}{c_1}\sum_{i=1}^k a_i \chi^2(\ep_i, \ep^*) + \frac{48 \beta^2 B\OPThat A_k}{c_1}.
    \end{align*}
    To reach the first claim of the theorem, we first argue that $\sum_{i=1}^k a_i \Big(({4\beta^2 \sqrt{6B}\sqrt{\OPTThat}}/{c_1}) \chi^2(\ep_i, \ep^*) - \nu D_\phi(\ep^*, \ep_i)\Big) \leq 0.$ This follows from (1) \Cref{cor:key_fonc-no-max}, by which we have \({\ep^{*(j)}}  \ge {\ep_{0}^{(j)}}/2 \) for all \(j \in [N]\), hence
  \begin{align*}
  \chi^{2}(\ep_{i}, \ep^{*}) = \sum_{j\in [N]}{({\ep^{*(j)} }- {\ep_{i}}^{(j)})^{2}}/{{\ep^{*(j)}}} \leq 2 \sum_{j\in [N]}{({\ep^{* (j)} }- {\ep_{i}}^{(j)})^{2}}/{\ep_{0}^{(j)}} = 2 D_{\phi}(\ep^{*}, \ep_{i})
  \end{align*}
  and (2) our choice of $\nu,$ which ensures, with high probability, that  $\nu \geq 8 \beta^2 \sqrt{6B}\sqrt{\OPTT + \epsilon}/{c_1} \ge 8 \beta^2 \sqrt{6B}\sqrt{\OPTThat}/{c_1}$, where the last inequality is because for the specified sample size, we have that $\OPTThat + \epsilon \geq \OPTT$  by \Cref{cor:relationships-for-OPTs}.

  Second,  we similarly have that with probability $1- \delta,$ $\OPThat \leq \OPT + \epsilon.$  Hence, since Bregman divergence of a convex function is non-negative, whenever \(\norm{\vw_{i}}_{2} \le 2\norm{\vw^{*}}_{2}\) for all \(i\le k\), we have
  \begin{align}
    \|\vw^* - \vw_k\|_2^2 & \leq \frac{\|\vw^* - \vw_0\|_2^2 + 2\nu_0 D_{\phi}(\ep^*, \ep_0)}{1 + 0.5c_1 A_k} + \frac{240 \beta^2 B}{c_1}(\OPT + \epsilon) \\
    D_\phi(\ep^*, \ep_k) & \leq \frac{\|\vw^* - \vw_0\|_2^2/2 + \nu_0 D_{\phi}(\ep^*, \ep_0)}{\nu_0 + \nu A_k}+  \frac{60 \beta^2B}{\nu}(\OPT + \epsilon)
  \end{align}
  The bound $\chi^2(\ep^*, \ep_0) \leq  c_1/(1536 \beta^4 B)$ is proved in \Cref{claim:ambiguity-radius}. 
  Finally, in \Cref{sec:bounded-iterate}, we inductively prove the following claim so that  assumptions in \Cref{lemma:sharpness-empirical} are satisfied.
  \begin{restatable}{claim}{boundediterate}\label{claim:bounded-iterate}
      For all iterations \(k \ge 0\), \(\norm{\vw_{k}}_{2} \le 2\norm{\vw^{*}}_{2}\) .
  \end{restatable}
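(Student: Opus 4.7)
The plan is to prove the claim by strong induction on $k$, with the base case $k=0$ being immediate from $\vw_0 = \vzero$.

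For the inductive step, I would assume $\|\vw_j\|_2 \leq 2\|\vw^*\|_2$ for all $j \leq k-1$, which, via \Cref{lemma:sharpness-empirical}, makes the sharpness-based gap lower bound of \Cref{thm:gap-lower-bound} available at $\vw_1, \ldots, \vw_{k-1}$. Summing this through index $k-1$ and combining with the unconditional gap upper bound of \Cref{lemma:gap-upper-bound} (truncated to the same index) reproduces the derivation in the proof of \Cref{thm:main-formal} one step behind and yields
\[
\|\vw_{k-1} - \vw^*\|_2^2 \;\leq\; \frac{4 D_0}{c_1 A_{k-1}} + \frac{240\beta^2 B}{c_1^2}\OPT + \frac{4\epsilon}{c_1}.
\]
I would then control the proximal step using first-order optimality in \Cref{line:w} and non-expansiveness of the projection onto $\cB(W)$, giving $\|\vw_k - \vw_{k-1}\|_2 \leq a_k \|\vg_{k-1}\|_2 / (1 + 0.5 c_1 A_{k-1})$; the uniform bound $\|\vg_{k-1}\|_2 \leq 3G$ follows from \Cref{line:g} together with $\|\vv(\vw;\vx, y)\|_2 \leq G$ and $a_{k-1}/a_k \leq 1$. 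Combining the two via the triangle inequality $\|\vw_k\|_2 \leq \|\vw^*\|_2 + \|\vw_{k-1} - \vw^*\|_2 + \|\vw_k - \vw_{k-1}\|_2$ and bounding the right-hand side by $2\|\vw^*\|_2$ would rely on: (i) $D_0 = \frac{1}{2}\|\vw^*\|_2^2 + \nu_0 \chi^2(\ep^*, \ep_0) = O(\|\vw^*\|_2^2)$, coming from $\vw_0 = \vzero$ and the ambiguity-radius bound $\chi^2(\ep^*, \ep_0) \leq c_1/(1536\beta^4 B)$ of \Cref{claim:ambiguity-radius}; (ii) the geometric growth of $A_i$ with ratio $r = 1 + \min\{\nu, c_1/8\}/(2\max\{\kappa, G\})$, which makes $D_0/A_{k-1}$ a small fraction of $\|\vw^*\|_2^2$ once $k$ exceeds a threshold $k_0 = O(\log(1/c_1)/\log r)$; and (iii) the standing assumption $\|\vw^*\|_2^2 \geq C(\OPT + \epsilon)$ with $C = \Omega(\beta^2 B/c_1^2)$, which absorbs the $\OPT$ and $\epsilon$ error terms.

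The main obstacle will be the initial regime $k \leq k_0$, where $A_{k-1}$ is too small for the $D_0/A_{k-1}$ term to be controlled and the asymptotic step bound $\|\vw_k - \vw_{k-1}\|_2 = O(1)$ is too loose when $\|\vw^*\|_2$ is close to $\sqrt{\epsilon}$. For these early iterations I would fall back on the direct telescoping bound $\|\vw_k\|_2 \leq \sum_{i=1}^k \|\vw_i - \vw_{i-1}\|_2 \leq 3G A_k$. Since $a_1$ scales with $\nu_0 = 768\beta^4 B \epsilon/c_1$ and the initial regime has length at most $k_0 = O((G/c_1)\log(1/\epsilon))$, the bound stays $O((G\epsilon/c_1)\log(1/\epsilon))$ throughout this regime, which is at most $2\|\vw^*\|_2 \geq 2\sqrt{\epsilon}$ provided $\epsilon$ is sufficiently small relative to $\beta, B, c_1, G$; once $k$ exceeds $k_0$, the inductive argument in the previous paragraph takes over.
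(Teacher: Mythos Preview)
Your inductive strategy differs from the paper's in a way that creates a real gap. The paper does not bound $\|\vw_{k-1}-\vw^*\|_2$ and then add a single proximal step; instead it observes that the gap \emph{upper} bound of \Cref{lemma:gap-upper-bound} can be extended through index $k+1$ without requiring $\vw_{k+1}\in\cB(2\|\vw^*\|_2)$ (its proof only uses the moment-bound part of \Cref{lemma:sharpness-empirical}, not the sharpness inequality). That upper bound already contains the negative term $-\tfrac{1+0.5c_1A_{k+1}}{2}\|\vw^*-\vw_{k+1}\|_2^2$. For the lower bound at index $k+1$, the paper replaces \Cref{thm:gap-lower-bound} by the crude estimate $\Gap(\vw_{k+1},\ep_{k+1})\ge -\OPThat$, obtained from $L(\vw_{k+1},\ep^*)-L(\vw^*,\ep^*)\ge -\OPThat$ (the first expectation is nonnegative). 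Combining these yields a direct bound $\|\vw^*-\vw_{k+1}\|_2^2\le \|\vw^*\|_2^2 + 2\nu_0 D_\phi(\ep^*,\ep_0) + O(\beta^2 B/c_1^2)\,\OPThat$, uniformly in $k$, and the WLOG assumption $\|\vw^*\|_2^2 \gtrsim (\beta^2 B/c_1^2)\OPThat+\epsilon$ closes the induction with no case split.

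Your two-regime argument breaks down in both regimes. In the ``main'' regime, the proximal-step bound $\|\vw_k-\vw_{k-1}\|_2\le 3Ga_k/(1+0.5c_1A_{k-1})$ does not vanish: since $a_k/A_{k-1}\to r-1$ and the denominator is $\Theta(c_1A_{k-1})$, the ratio tends to an \emph{absolute constant} $\Theta(G(r-1)/c_1)=\Theta(1)$, not to something scaling with $\|\vw^*\|_2$. So the triangle inequality gives $\|\vw_k\|_2\lesssim 2\|\vw^*\|_2 + \Theta(1)$, which fails whenever $\|\vw^*\|_2$ is small (e.g., $\|\vw^*\|_2=\Theta(\sqrt{\epsilon})$). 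In the ``initial'' regime, the claim that $3GA_k=O((G\epsilon/c_1)\log(1/\epsilon))$ is a miscalculation: by your own definition, $k_0$ is the point at which $D_0/(c_1 A_{k_0-1})$ becomes a small fraction of $\|\vw^*\|_2^2\approx 2D_0$, which forces $A_{k_0}=\Theta(1/c_1)$; hence $3GA_{k_0}=\Theta(G/c_1)$, again an absolute constant rather than $O(\epsilon\,\mathrm{polylog})$. (You appear to be summing $a_i$ as $k_0\cdot a_1$ rather than as the geometric series it is.) The missing idea is precisely the one the paper uses: push the upper bound one step further and pair it with the sharpness-free lower bound $\Gap\ge-\OPThat$, so that all extra terms scale with $\OPThat$ or $\epsilon$ and are absorbed by the standing assumption on $\|\vw^*\|_2$.
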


    The bound on the growth of $A_k$ follows by standard arguments and is provided as \Cref{thm:convergence-rate}.  Since $A_k$ grows exponentially with $(1+\eta)^k $ where \(\eta = \frac{\min\{\nu, c_{1}/8\}}{2\max\{\kappa, G\}}\) and since $D_{0}(1 + \eta)^{-k} \le \epsilon $ can be enforced by setting $k = (1 + 1/\eta) \log(D_0 / \epsilon) \ge \log(D_0 / \epsilon) / \log(1 + \eta)$, we have that after $\bigtO{\frac{\max\{\kappa, G\}}{\min\{\nu, c_{1}\}} \log(D_0 / \epsilon)}$ iterations either $\|\vw_{k} - \vw^{*}\|_{2}\le \sqrt\epsilon$ or $\|\vw_{i} - \vw^{*}\|_{2} \le  C_{3}\sqrt{\OPT}$.
\end{proof}

\section{Conclusion}
In this paper, we study the problem of learning a single neuron in the distributionally robust setting, with the square loss regularized by the chi-squared distance between the reference and target distributions. 
Our results serve as a preliminary exploration in this area, paving the way for several potential extensions. Future work includes generalizing our approach to single index models with unknown activations, expanding to neural networks comprising multiple neurons, and considering alternative ambiguity sets such as those based on the Wasserstein distance or Kullback-Leibler divergence.

\section*{Acknowledgement}
Shuyao Li was supported in part by AFOSR Award FA9550-21-1-0084 and the U.S.\ Office of Naval Research under award number N00014-22-1-2348.
Sushrut Karmalkar was supported by NSF under Grant \#2127309 to the Computing Research Association for the CIFellows 2021 Project.
Ilias Diakonikolas was supported by NSF Medium Award CCF-2107079 and an H.I. Romnes Faculty Fellowship.
Jelena Diakonikolas was supported in part by the U.S.\ Office of Naval Research under contract number N00014-22-1-2348. Any opinions, findings and conclusions or recommendations expressed in this material are those of the author(s) and do not necessarily reflect the views of the U.S. Department of Defense.

\clearpage
\newrefcontext[sorting=nyt]
\printbibliography
\clearpage
\appendix
  \crefalias{section}{appendix}

\begin{center}{\LARGE\bfseries Supplementary Material}
\end{center}

\paragraph{Organization} 
In \Cref{sec:related_work} we briefly discuss related work. 
In \Cref{app:supplementary} we set up some additional preliminaries for the rest of the appendix. 
In \Cref{app:concentration_pp_star} we show that expectations of some important functions with respect to $\ep^*$ are close to their expectation with respect to $\pp^*$.
In \Cref{app:gap-upper-bound} we give a detailed proof of an upper bound on the gap of the iterates our algorithm generates (i.e. \Cref{lemma:gap-upper-bound}). 
Finally, in \Cref{app:parameter_estimation} we show that the estimate of $\vw^*$ our algorithm returns is a constant factor approximation to the squared loss of $\vw^*$ with respect to the target distribution. 

\section{Related Work}
\label{sec:related_work}

\paragraph{Learning Noisy Neurons}
Generalized linear models are classical in statistics and machine learning~\cite{nelder1972generalized}. 
The problem of learning noisy neurons has been extensively explored in the past couple of decades; 
notable early works include \cite{kalai2009isotron, kakade2011efficient}. 
In the recent past, the focus has shifted towards specific activation functions such as ReLUs, 
under both easy noise models such as realizable/random additive noise \cite{soltanolkotabi2017learning, kalan2019fitting, yehudai2020learning} 
and more challenging ones, including 
semi-random and adversarial label noise \cite{GoelKK19, DKZ20-sq-reg, GGK20, DKPZ21-SQ, diakonikolas2021relu, DGKKS20, diakonikolas22a-hardness, DKTZ22, wang2023robustly-learning, ZWDD2024}.

Even with clean labels, this problem has exponentially many local minima when using squared loss \cite{Auer95}.
{Unfortunately, directly minimizing the squared loss using (S)GD on a bounded distribution does not converge to the global optimum with probability 1 \cite{yehudai2020learning}.
Even so, gradient based methods can achieve suboptimal rates in the agnostic setting %
for distributions with mild distributional assumptions \cite{frei2020agnostic}.}
Making slightly stronger assumptions on the marginal does allow us to get efficient constant factor approximations. \cite{DGKKS20} developed an efficient learning method that is able to handle this in the presence of adversarial label noise and for isotropic logconcave distributions of the covariates. 
This was later extended to broader classes of activation functions and under weaker distributional assumptions by \cite{DKTZ22, wang2023robustly-learning}. 
Without specific distributional assumptions, learning remains computationally difficult \cite{diakonikolas22a-hardness}. 
The challenges extend to distribution-free scenarios with semi-random label noise, where methods like those in \cite{diakonikolas2021relu} address bounded noise, and \cite{karmakar2020study} and \cite{chen2020classification} explore stricter forms of Massart noise in learning a neuron. %
In this paper, we consider the harder setting of distributionally robust optimization, %
where an adversary is allowed to impose not only errors in the labels, but also adversarial shifts in the underlying distribution of the covariates.

\paragraph{Distributionally Robust Optimization}
Distributional mismatches in data have been extensively studied in the context of learning from noisy data. This includes covariate shift, where the marginal distributions might be perturbed, \cite{shimodaira2000improving, huang2006correcting, bickel2007discriminative}, 
and changes in label proportions \cite{dwork2012fairness, xu2020class}. 
This research also extends to domain adaptation and transfer learning \cite{mansour2009domain, ben2010theory, patel2015visual, pan2009survey, tan2018survey}. Distributionally robust optimization (DRO) has a rich history in optimization \cite{ben2009robust, shapiro2017distributionally} and has gained traction in machine learning \cite{namkoong2016stochastic, duchi2021learning, duchi2021statistics, staib2019distributionally, kuhn2019wasserstein, zhu2020kernel}, showing mixed success across applications like language modeling \cite{oren2019distributionally}, class imbalance correction \cite{xu2020class}, and group fairness \cite{hashimoto2018fairness}.

Specifically this has also been studied in the context of linear regression and other function approximation \cite{blanchet2021statistical,duchi2021learning,chen2018robust}.
Typically, DRO is often very sensitive to additional sources of noise, such as outliers (\cite{zhai2021doro,hashimoto18a,hu2018does}). However, prior work makes strong assumptions on the label noise as well as requiring convexity of the loss.
We study the problem of learning a neuron where the labels have no guaranteed structure, effectively studying the setting for a combination of two notions of robustness --- agnostic learning as well as covariate shift.

\section{Supplementary Preliminaries}
\label{app:supplementary}
\subsection{Additional Notation}
Given an \(m \times n\) matrix \(\mathbf{A}\),  the operator norm of \(\mathbf{A}\) is defined in the usual way as \(\|\mathbf{A}\|_{\rm op} = \sup\{\|\mathbf{A} \vx\|_2: \vx \in \mathbb{R}^n, \|\vx\|_2 \le 1\} \). For %
problems $(P)$ and $(P')$, we use $ (P) \equiv (P')$ to denote the equivalence of $(P)$ and $(P')$. \new{For a vector space \(\mathbb{E}\), we use \(\mathbb{E}^{*}\) to denote its dual space.}

\subsection{Standard Facts and Proofs}
\begin{fact}[Young's inequality]\label{fact:young}
If $a \geq 0$ and $b \geq 0$ are {nonnegative real numbers} and if $p > 1$ and $q > 1$ are real numbers such that
\[
\frac{1}{p} + \frac{1}{q} = 1,
\]
then
\[
ab \leq \frac{a^p}{p} + \frac{b^q}{q}.
\]
Equality holds if and only if $a^p = b^q$.
\end{fact}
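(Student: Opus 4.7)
The plan is to prove Young's inequality via the concavity of the logarithm, which is the cleanest route and also gives the equality case for free. First I would dispose of the edge cases where $a = 0$ or $b = 0$: in either case the left-hand side is $0$ while the right-hand side is nonnegative, and equality holds iff both $a^p$ and $b^q$ equal $0$, which matches the claim. So assume $a, b > 0$.

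Next I would apply the strict concavity of $\log$ on $(0, \infty)$ together with the constraint $\tfrac{1}{p} + \tfrac{1}{q} = 1$. Writing
\[
\log(ab) = \log a + \log b = \tfrac{1}{p} \log(a^p) + \tfrac{1}{q} \log(b^q),
\]
I view the right-hand side as a convex combination of $\log(a^p)$ and $\log(b^q)$ with weights $\tfrac{1}{p}, \tfrac{1}{q}$. By concavity of $\log$,
\[
\tfrac{1}{p} \log(a^p) + \tfrac{1}{q} \log(b^q) \leq \log\!\Big(\tfrac{a^p}{p} + \tfrac{b^q}{q}\Big).
\]
Exponentiating yields $ab \leq \tfrac{a^p}{p} + \tfrac{b^q}{q}$, which is the desired inequality.

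For the equality case, I would use strict concavity of $\log$: the above inequality is tight if and only if $a^p = b^q$, which handles both directions of the ``if and only if.'' As a sanity check, one can verify that when $a^p = b^q$ (say both equal some $t > 0$), we have $a = t^{1/p}$, $b = t^{1/q}$, so $ab = t^{1/p + 1/q} = t$, while $\tfrac{a^p}{p} + \tfrac{b^q}{q} = \tfrac{t}{p} + \tfrac{t}{q} = t$, confirming equality.

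The main (and only) potential obstacle is making sure the convex combination step is stated with the right weights; everything else is routine. An alternative approach would be to fix $b$ and minimize $f(a) = \tfrac{a^p}{p} + \tfrac{b^q}{q} - ab$ over $a \geq 0$ via calculus, finding the unique critical point at $a^{p-1} = b$, i.e., $a^p = b^q$, and checking it is a minimum with value $0$; but this is less elegant and the equality characterization requires more work.
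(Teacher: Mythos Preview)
Your proof is correct and is the standard concavity-of-$\log$ argument; the edge cases, the main inequality, and the equality characterization are all handled cleanly. Note, however, that the paper does not actually supply a proof of this fact: it is listed as a classical result without argument, so there is no ``paper's own proof'' to compare against here.
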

\begin{fact}[Hoeffding's Inequality]
\label{lem:hoeffding}
Let \(X_1, X_2, \ldots, X_n\) be independent random variables such that \(a_i \le X_i \le b_i\) almost surely for all \(i\). Let \(\overline{X} = \frac{1}{n} \sum_{i=1}^{n} X_i\). Then, for any \(t > 0\),

\[
\Pr\left[\abs{\overline{X} - \E[\overline{X}]} \ge t\right] \le 2 \exp\left(-\frac{2n t^2}{\sum_{i=1}^{n} (b_i - a_i)^2}\right).
\]
\end{fact}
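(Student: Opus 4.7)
The plan is to follow the standard Chernoff-bound strategy for sums of bounded independent random variables. I would start by letting $Y_i := X_i - \E[X_i]$, so that $Y_i$ is a mean-zero random variable supported on an interval of width $b_i - a_i$. The goal becomes bounding $\Pr[\sum_i Y_i \geq nt]$. By Markov's inequality applied to the exponentiated sum, for any $\lambda > 0$,
\begin{equation*}
\Pr\Bigl[\sum_{i=1}^n Y_i \geq nt\Bigr] \;\leq\; e^{-\lambda n t}\,\E\Bigl[\exp\Bigl(\lambda \sum_{i=1}^n Y_i\Bigr)\Bigr] \;=\; e^{-\lambda n t}\prod_{i=1}^n \E[e^{\lambda Y_i}],
\end{equation*}
where the last equality uses independence of the $Y_i$'s.

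The crux of the proof is then bounding each moment generating factor $\E[e^{\lambda Y_i}]$. Here I would invoke (and prove separately as an auxiliary lemma) Hoeffding's Lemma: if $Y$ is a zero-mean random variable with $Y \in [a', b']$ almost surely, then $\E[e^{\lambda Y}] \leq \exp(\lambda^2 (b'-a')^2 / 8)$. The natural way to establish this is to exploit the convexity of $t \mapsto e^{\lambda t}$ to write $e^{\lambda y} \leq \tfrac{b'-y}{b'-a'} e^{\lambda a'} + \tfrac{y-a'}{b'-a'} e^{\lambda b'}$ for $y \in [a',b']$, take expectations (using $\E[Y]=0$), and then show via a second-derivative/Taylor expansion argument on the log of the resulting function of $\lambda$ that the logarithm is bounded by $\lambda^2(b'-a')^2/8$.

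Plugging Hoeffding's Lemma back into the Chernoff bound with $b'_i - a'_i = b_i - a_i$ gives
\begin{equation*}
\Pr\Bigl[\overline{X} - \E[\overline{X}] \geq t\Bigr] \;\leq\; \exp\Bigl(-\lambda n t + \tfrac{\lambda^2}{8}\sum_{i=1}^n (b_i-a_i)^2\Bigr).
\end{equation*}
Finally I would optimize over $\lambda > 0$; the minimizing choice is $\lambda = 4nt / \sum_i (b_i-a_i)^2$, which yields the one-sided tail bound $\exp\!\bigl(-2 n t^2 / \sum_i (b_i-a_i)^2\bigr)$. A symmetric argument applied to $-Y_i$ handles the lower tail, and a union bound produces the factor of $2$ in the stated two-sided inequality.

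The main technical obstacle is Hoeffding's Lemma itself; the Chernoff/Markov step and the final optimization in $\lambda$ are routine. Within the lemma, the delicate part is the uniform bound on the log-MGF by $\lambda^2(b'-a')^2/8$, which is tight and requires a careful Taylor-remainder (or equivalently, variance-of-a-Bernoulli) argument rather than a crude bound. Once that lemma is in hand, everything else follows by algebra.
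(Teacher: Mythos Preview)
Your proposal is correct and outlines the standard textbook proof of Hoeffding's inequality via the Chernoff method and Hoeffding's Lemma. The paper, however, does not prove this statement at all: it is listed as a ``Fact'' in the supplementary preliminaries and is simply quoted as a standard result without argument, so there is no paper proof to compare against.
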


\bregmanblind*

\begin{proof}[Proof of \Cref{fact:bregman-linear-blind}]
  The Bregman divergence \(D_{\phi}\) and \(D_{\psi}\) are defined  by:
\[
D_{\phi}(\vy, \vx) = \phi(\vy) - \phi(\vx) - \langle \nabla \phi(\vx), \vy - \vx \rangle,
\]
\[
D_{\psi}(\vy, \vx) = \psi(\vy) - \psi(\vx) - \langle \nabla \psi(\vx), \vy - \vx \rangle.
\]

Since \(\nabla \psi(\vx) = \nabla \phi(\vx) + \va\), substituting in the definition gives:
\[
D_{\psi}(\vy, \vx) = \phi(\vy) + \langle \va, \vy \rangle + b - (\phi(\vx) + \langle \va, \vx \rangle + b) - \langle \nabla \phi(\vx) + \va, \vy - \vx \rangle
\]
\[
= \phi(\vy) - \phi(\vx) - \langle \nabla \phi(\vx), \vy - \vx \rangle = D_{\phi}(\vy, \vx).
\]

Thus, the Bregman divergence is blind to the addition of linear terms to the function \(\phi\).
\end{proof}

\subsection{Auxiliary Facts}\label{sec:computation}

We first state and prove \Cref{lem:bounded_samples} to obtain upper bounds on the norm of each point, projections onto vectors of norm at most $W$, and the loss value at each point.
\begin{lemma}[Boundedness]
  \label{lem:bounded_samples}
    Fix \(\vw \in \cB(W)\). For all samples $(\vx_i, y_i)$ with truncated labels $\abs{y_i} < M$ as per \Cref{fact:truncation} and bounded covariates as per \Cref{assump:boundedness}, it holds that
    \begin{align}
        \vw \cdot \vx_{i} &\le W S \label{eq:dot-product-bound}\\
      \norm{\vx_{i}}_2 &\le S\sqrt{d} \label{eq:x-norm-bound} \\
       (\sigma(\vw \cdot \vx_{i}) - y_{i})^{2} &\le 2\beta^{2 }W^{2}(S^{2} + C_{M}^{2}B^{2} \log^{2}( W B \beta/ \epsilon)) \label{eq:loss-bound}
    \end{align}
  \end{lemma}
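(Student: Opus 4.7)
The plan is to establish the three bounds in turn, each reducing to a direct application of \Cref{assump:boundedness}, the truncation bound from \Cref{fact:truncation}, and the basic properties of $\sigma$ from \Cref{def:activation}. Since the lemma is purely computational, I do not expect any conceptual obstacle; the only care needed is in reducing each quantity to a form where a fixed unit vector $\vu \in \cB(1)$ can play the role required by \Cref{assump:boundedness}.

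For \eqref{eq:dot-product-bound}, the idea is to rescale $\vw$ to unit norm. If $\vw = \vzero$ the bound is trivial. Otherwise, set $\vu = \vw/\|\vw\|_2 \in \cB(1)$; then \Cref{assump:boundedness} gives $\vu \cdot \vx_i \le S$, and multiplying through by $\|\vw\|_2 \le W$ yields $\vw \cdot \vx_i \le WS$. For \eqref{eq:x-norm-bound}, I would instantiate \Cref{assump:boundedness} at each standard basis vector $\ve_j \in \cB(1)$, obtaining $|\vx_i^{(j)}| = |\ve_j \cdot \vx_i| \le S$ for each coordinate $j \in [d]$; summing the squared coordinates then gives $\|\vx_i\|_2^2 \le dS^2$, and taking the square root yields the claim.

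For \eqref{eq:loss-bound}, I would first use the elementary inequality $(a - b)^2 \le 2a^2 + 2b^2$ to split $(\sigma(\vw\cdot\vx_i) - y_i)^2 \le 2\sigma(\vw\cdot\vx_i)^2 + 2y_i^2$. By \Cref{def:activation}, $\sigma$ is $\beta$-Lipschitz and $\sigma(0) = 0$, so $|\sigma(t)| \le \beta|t|$; combined with \eqref{eq:dot-product-bound}, this gives $\sigma(\vw\cdot\vx_i)^2 \le \beta^2 W^2 S^2$. By the truncation step in \Cref{fact:truncation}, $|y_i| \le M = C_M W B \beta \log(\beta B W/\epsilon)$, so $y_i^2 \le C_M^2 W^2 B^2 \beta^2 \log^2(\beta B W/\epsilon)$. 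Adding the two bounds and factoring out $2\beta^2 W^2$ produces exactly the stated inequality. No step is particularly delicate, so I expect no real obstacle; the main care is simply to invoke \Cref{assump:boundedness} with the correctly normalized direction vector in each of the first two parts.
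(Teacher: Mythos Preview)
Your proposal is correct and follows essentially the same approach as the paper: normalize to apply \Cref{assump:boundedness} for \eqref{eq:dot-product-bound}, instantiate at coordinate directions for \eqref{eq:x-norm-bound}, and split the square via $(a-b)^2 \le 2a^2 + 2b^2$ together with $|\sigma(t)| \le \beta|t|$ and the truncation bound for \eqref{eq:loss-bound}. The only small thing to make explicit is that \Cref{assump:boundedness} gives a one-sided bound, so to get $|\ve_j \cdot \vx_i| \le S$ (and likewise $|\vw \cdot \vx_i| \le WS$ when bounding $\sigma(\vw\cdot\vx_i)^2$) you apply it to both $\vu$ and $-\vu$; the paper handles this by taking $\vu = \sign(\vx_i^{(j)})\ve_j$.
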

  \begin{proof}
\Cref{eq:dot-product-bound} follows  from \Cref{assump:boundedness}, as \[\vw \cdot \vx_{i} = \|\vw\|_2 \frac{\vw}{\|\vw\|_2} \cdot \vx_{i} \leq \|\vw\|_2 S \leq W S.\] 

 To prove \Cref{eq:x-norm-bound}, for each coordinate $ j \in [d]$, we have \(\abs{\vx^{(j)}} = \sign(\vx^{(j)})\ve_j\cdot \vx \leq  S\), by again using \Cref{assump:boundedness}. Therefore, $\norm{\vx}_2 \le S \sqrt{d}$.

For \Cref{eq:loss-bound}, we recall \Cref{fact:truncation} that for some sufficiently large absolute constant $C_M$, it holds that 
\(\abs{y} \le M := C_{M} W B \beta \log(\beta B W / \epsilon)\). 
Thus, \Cref{eq:loss-bound} follows from Young's inequality (\Cref{fact:young}) and \Cref{eq:x-norm-bound}, since $|\sigma(t)| \leq \beta |t|$, which follows from $\beta$-Lipschitzness of $\sigma$ and $\sigma(0) = 0$.
  \end{proof}
  
\begin{lemma}[Bounds on $\vv$]
\label{lem:bound_on_vv}
  Let  
  \[ \vv(\vw; \vx, y) =
       2(\sigma(\vw \cdot \vx) - y) \beta \vx
  \]
  Then $\vv$ is uniformly bounded by $G$ in \(\ell_{2}\)-norm and $\kappa$-Lipschitz for all samples $\vx, y$ with truncated labels $\abs{y} < M$ and bounded covariates as per \Cref{assump:boundedness}, where
    \(G = 2 \beta S \sqrt{d} (\sqrt{2} \beta WS + M)\) and \( \kappa = 2 \beta^2 S^2 d\).
\end{lemma}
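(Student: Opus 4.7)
The plan is to prove each of the two bounds in \Cref{lem:bound_on_vv} separately, relying only on the definitions, on $\beta$-Lipschitzness of $\sigma$ (together with $\sigma(0)=0$), and on the uniform bounds on covariates and labels provided by \Cref{assump:boundedness}, \Cref{fact:truncation}, and \Cref{lem:bounded_samples}. This is routine, so I would keep the exposition brief and emphasize which inequality each factor comes from.

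\textbf{Step 1 (Boundedness).} I would write $\|\vv(\vw;\vx,y)\|_2 = 2\beta\,|\sigma(\vw\cdot\vx)-y|\,\|\vx\|_2$ and then bound each of the three factors individually. The norm factor satisfies $\|\vx\|_2\le S\sqrt{d}$ by \Cref{eq:x-norm-bound}. For the residual, the triangle inequality gives $|\sigma(\vw\cdot\vx)-y|\le |\sigma(\vw\cdot\vx)|+|y|$; since $\sigma(0)=0$ and $\sigma$ is $\beta$-Lipschitz, the first term is at most $\beta|\vw\cdot\vx|\le\beta WS$ by \Cref{eq:dot-product-bound}, while the truncation in \Cref{fact:truncation} gives $|y|\le M$. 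Multiplying everything together yields a bound of $2\beta S\sqrt{d}(\beta WS + M)$, which is dominated by the stated $G=2\beta S\sqrt{d}(\sqrt{2}\beta WS + M)$.

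\textbf{Step 2 (Lipschitzness).} For any $\vw,\vw'\in\cB(W)$ and any fixed sample $(\vx,y)$, I would use the identity
\begin{equation*}
\vv(\vw;\vx,y)-\vv(\vw';\vx,y)=2\beta\bigl(\sigma(\vw\cdot\vx)-\sigma(\vw'\cdot\vx)\bigr)\vx,
\end{equation*}
take Euclidean norms, apply $\beta$-Lipschitzness of $\sigma$ to obtain $|\sigma(\vw\cdot\vx)-\sigma(\vw'\cdot\vx)|\le\beta|(\vw-\vw')\cdot\vx|$, and then use Cauchy--Schwarz together with $\|\vx\|_2\le S\sqrt{d}$ from \Cref{eq:x-norm-bound}. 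This produces the factor $\|\vx\|_2^2\le S^2 d$, yielding $\|\vv(\vw;\vx,y)-\vv(\vw';\vx,y)\|_2\le 2\beta^2 S^2 d\,\|\vw-\vw'\|_2$, which is exactly $\kappa=2\beta^2 S^2 d$.

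\textbf{Expected obstacles.} None of the estimates require a non-trivial idea; the only care needed is to keep track of the source of each constant and to make sure the covariate bound $\|\vx\|_2\le S\sqrt{d}$ is applied once in Step~1 and squared in Step~2. The mild slack in the boundedness constant (the $\sqrt{2}$ factor) comes from writing the stated $G$ in a form aligned with \Cref{eq:loss-bound} rather than the tighter triangle-inequality form, so there is nothing subtle to verify beyond observing $\beta WS+M\le\sqrt{2}\beta WS+M$.
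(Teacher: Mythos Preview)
Your proposal is correct and follows essentially the same approach as the paper. The only cosmetic differences are that the paper bounds $\|\vv\|_2^2$ first (via \Cref{eq:loss-bound}, which is where the $\sqrt{2}$ arises) before taking a square root, and for Lipschitzness it bounds the Jacobian $\|\nabla_{\vw}\vv\|_{\mathrm{op}}=2\beta|\sigma'(\vw\cdot\vx)|\,\|\vx\|_2^2$ rather than the difference directly; your direct argument is slightly cleaner and does not require differentiability of $\sigma$.
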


\begin{proof}

We first uniformly upper bound $\vv$. An application of \Cref{lem:bounded_samples} gives us,
\begin{align*}
\| \vv(\vw; \vx, y) \|_2^2 &=
4\beta^2 ~(\sigma(\vw \cdot \vx) - y)^2 ~\|\vx\|^2 \\
&\leq 4~\beta^2~(2 \beta^2 W^2 S^2 + M^2)~S^2 d.
\end{align*}
Taking square roots, we get $\| \vv(\vw; \vx, y) \|_2 \leq 2 \beta S \sqrt{d} (\sqrt{2} \beta WS + M) =: G$. 

We now upper bound the Lipschitz constant $\kappa$. We will use the fact that $\sigma$ is $\beta$-Lipschitz. 
\begin{align*}
\| \nabla_\vw \vv(\vw; \vx, y) \|_2 &= 2\beta |\sigma'(\vw \cdot \vx)| \| \vx \vx^T\|_2 \\
& = 2 \beta |\sigma'(\vw \cdot \vx)| \|\vx\|_2^2\\
& = 2 \beta \cdot \beta ~S^2d = 2 \beta^2 S^2 d =: \kappa.
\end{align*}
\end{proof}
    
\begin{corollary}
\label{lemma:boundedness-to-norm}
  Fix a reference distribution \(\pp_{0}\). Suppose \(\norm{\vv(\vw; \vx, y)}_{2} \le G\) for all \(\vw\) almost surely. Then for all distributions \(\pp, \pq \in \cP(\pp_{0})\)  it holds that
  \[\norm{\E_{\pp}[\vv(\vw; \vx, y)] - \E_{\pq}[\vv(\vw; \vx, y)]}_{2}^{2} \le G^{2} D_{\phi}(\pp, \pq).\]
\end{corollary}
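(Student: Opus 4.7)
The plan is to rewrite the left-hand side as an integral against the reference measure $\pp_0$ (using Radon-Nikodym derivatives), then apply Cauchy-Schwarz coordinate-by-coordinate, and finally recognize the resulting scalar integral as $D_\phi(\pp,\pq)$.

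First, since both $\pp, \pq \in \cP(\pp_0)$ are absolutely continuous with respect to $\pp_0$, I would write
\begin{equation*}
    \E_{\pp}[\vv(\vw;\vx,y)] - \E_{\pq}[\vv(\vw;\vx,y)] \;=\; \int h(\vx,y)\,\vv(\vw;\vx,y)\,\dd \pp_0,
\end{equation*}
where $h := \frac{\dd\pp}{\dd\pp_0} - \frac{\dd\pq}{\dd\pp_0}$. With this notation, the Bregman divergence associated with $\phi(\cdot)=\chi^2(\cdot,\pp_0)$ evaluates to $D_\phi(\pp,\pq) = \int h^2 \,\dd\pp_0$, matching the closed form recorded in the paper (and, in the empirical case, the sum $\sum_j (\pp^{(j)}-\pq^{(j)})^2/\pp_0^{(j)}$).

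Second, I would bound the squared Euclidean norm coordinate by coordinate. Writing $\vv = (\vv^{(1)}, \ldots, \vv^{(d)})$, the $j$-th coordinate of the difference of expectations is $\int h\, \vv^{(j)}\,\dd\pp_0$, and by the scalar Cauchy-Schwarz inequality applied against the probability measure $\pp_0$,
\begin{equation*}
    \Big(\int h \,\vv^{(j)}\,\dd\pp_0\Big)^2 \;\le\; \Big(\int h^2\,\dd\pp_0\Big)\Big(\int (\vv^{(j)})^2\,\dd\pp_0\Big).
\end{equation*}
Summing over $j \in [d]$ and interchanging the sum with the integral gives
\begin{equation*}
    \Big\|\E_{\pp}[\vv] - \E_{\pq}[\vv]\Big\|_2^2 \;\le\; \Big(\int h^2\,\dd\pp_0\Big)\Big(\int \|\vv\|_2^2 \,\dd\pp_0\Big).
\end{equation*}

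Finally, the almost-sure bound $\|\vv(\vw;\vx,y)\|_2 \le G$ (which is the hypothesis of the corollary, guaranteed by \Cref{lem:bound_on_vv} under \Cref{assump:boundedness}) together with $\pp_0$ being a probability measure yields $\int \|\vv\|_2^2 \,\dd\pp_0 \le G^2$. Substituting and recognizing $\int h^2 \dd\pp_0 = D_\phi(\pp,\pq)$ gives the claim.

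\textbf{Main obstacle.} There is no real obstacle here; the argument is just a weighted Cauchy-Schwarz. The only thing one needs to be mindful of is the bookkeeping between the continuous formulation (used above, since $\cP(\pp_0)$ is defined via absolute continuity) and the discrete/empirical formulation $D_\phi(\ep,\eq) = \sum_j (\ep^{(j)}-\eq^{(j)})^2/\ep_0^{(j)}$ stated in the paper. Both give the same identity $\int h^2 \dd\pp_0 = D_\phi(\pp,\pq)$, so the proof works uniformly.
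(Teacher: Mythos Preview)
Your proof is correct and essentially the same as the paper's. The only cosmetic difference is that the paper applies Jensen's inequality for the convex function $\|\cdot\|_2^2$ in one stroke to get $\|\int \vv\, h\, \dd\pp_0\|_2^2 \le \int \|\vv\|_2^2 h^2 \, \dd\pp_0$ and then bounds $\|\vv\|_2^2$ pointwise by $G^2$, whereas you apply scalar Cauchy--Schwarz coordinate-wise and then sum; both routes are equivalent and yield the same final bound.
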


\begin{proof}
  \begin{align*}
    \norm{\E_{\pp}[\vv(\vw; \vx, y)] - \E_{\pq}[\vv(\vw; \vx, y)]}_{2}^{2} %
    = \; & \Bigl\|\int \vv(\vw; \vx, y) (\dd \pp - \dd \pq )\Bigr\|^{2}_{2} \\
    = \; & \Bigl\|\int \vv(\vw) \Bigl(\frac{\dd \pp}{\dd \pp_{0}} - \frac{\dd \pq}{\dd \pp_{0}} \Bigr) \dd \pp_{0}\Bigr\|^{2}_{2}\\
    \overset{(i)}\le \; & \int \Bigl\| \vv(\vw) \Bigl(\frac{\dd \pp}{\dd \pp_{0}} - \frac{\dd \pq}{\dd \pp_{0}} \Bigr)\Bigl\|_{2}^{2} \dd \pp_{0} \\
    \overset{(ii)}\le\;  & G^{2} \int \Bigl(\frac{\dd \pp}{\dd \pp_{0}} - \frac{\dd \pq}{\dd \pp_{0}} \Bigr)^{2}\dd \pp_{0} \\
    =\; & G^{2}D_{\chi^{2}(\cdot, \pp_0)}(\pp, \pq),
  \end{align*}
  where \((i)\) is an application of Jensen's inquality and \((ii)\) follows from \Cref{lem:bound_on_vv}. 
\end{proof}
    
\begin{claim}[Convergence Rate]\label{thm:convergence-rate}
For all \(i \ge 0\), let \(a_{i}\) be defined as in \Cref{line:convergence}. Then it holds that $\frac{2G^2{a_i}^2}{1 +  0.5 c_{1} A_i} \leq \nu_0 + \nu A_{i-1}$ and $\frac{2\kappa^2{a_i}^2}{1 +  0.5 c_{1} A_i} \leq \frac{1 +  0.5 c_{1} A_{i-1}}{4}$ for all \(i\). Moreover, \(A_{k} = \sum_{i=0}^{k}a_{i} =  ((1 + \frac{\min\{\nu, c_{1}/8\}}{2\max\{\kappa, G\}})^{k}-1) \min\{\nu_{0}, 1/4\}/\min\{\nu, c_{1}/8\}\).
\end{claim}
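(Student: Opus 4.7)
The plan is to prove this by direct verification using the explicit geometric structure of the sequence $\{a_i\}$. I would first establish the closed form for $A_k$: since \Cref{line:convergence} gives $a_i = (1+\eta) a_{i-1}$ with $\eta := \min\{\nu, c_1/8\}/(2\max\{\kappa, G\})$ and $a_1 = \min\{\nu_0, 1/4\}/(2\max\{\kappa, G\})$, the standard geometric-series formula yields $A_k = a_1((1+\eta)^k - 1)/\eta$, which upon substituting the values of $a_1$ and $\eta$ matches the stated expression (with $a_0 = 0$ so the sum from $i=0$ or $i=1$ coincide).

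For the two step-size inequalities, the key algebraic identity I would derive is $a_i = a_1 + \eta A_{i-1}$; this follows by plugging in the explicit forms $a_i = (1+\eta)^{i-1} a_1$ and $A_{i-1} = a_1((1+\eta)^{i-1} - 1)/\eta$ and simplifying. Combined with $(a+b)^2 \leq 2a^2 + 2b^2$, it yields $a_i^2 \leq 2 a_1^2 + 2\eta^2 A_{i-1}^2$. Setting $M := \max\{\kappa, G\}$, $m := \min\{\nu, c_1/8\}$, $\mu := \min\{\nu_0, 1/4\}$ so that $a_1 = \mu/(2M)$ and $\eta = m/(2M)$, the inequalities $G \leq M$ and $\kappa \leq M$ give the working bounds $2G^2 a_i^2 \leq \mu^2 + m^2 A_{i-1}^2$ and $8\kappa^2 a_i^2 \leq 4\mu^2 + 4 m^2 A_{i-1}^2$.

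To verify inequality (1), I would split the RHS as $\nu_0(1 + 0.5 c_1 A_i) + \nu A_{i-1}(1 + 0.5 c_1 A_i)$ and match the two terms: $\mu^2 \leq \nu_0$ follows from $\mu \leq \min\{1/4, \nu_0\}$ (so $\mu^2 \leq \mu \leq \nu_0$), and $m^2 A_{i-1}^2 \leq \nu A_{i-1}(1 + 0.5 c_1 A_i)$ follows from $m^2 \leq \nu \cdot c_1/8$ together with $c_1 A_{i-1}/8 \leq c_1 A_i/8 \leq 1 + 0.5 c_1 A_i$. To verify inequality (2), I would use $(1 + 0.5 c_1 A_{i-1})(1 + 0.5 c_1 A_i) \geq 1 + 0.25 c_1^2 A_{i-1} A_i$ and then bound $4 \mu^2 \leq 1$ (from $\mu \leq 1/4$) and $4 m^2 A_{i-1}^2 \leq 0.25 c_1^2 A_{i-1} A_i$ (from $m \leq c_1/8$ and $A_{i-1} \leq A_i$).

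The only nontrivial step is identifying the decomposition $a_i = a_1 + \eta A_{i-1}$, which converts the geometric growth of $a_i$ into an additive split that allows both step-size inequalities to be verified term by term without an induction on $i$. Beyond this, the proof reduces to careful scalar bookkeeping using that $\mu, \eta, m$ are minima of their defining quantities; I do not anticipate any serious obstacle, and the slack in each sub-inequality confirms that the choice of step size in \Cref{line:convergence} is tight up to constants for the conditions required by \Cref{lemma:gap-upper-bound}.
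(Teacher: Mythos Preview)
Your proposal is correct and hinges on the same key identity as the paper's proof, namely $2\max\{\kappa,G\}\,a_i = \min\{\nu_0,1/4\} + \min\{\nu,c_1/8\}\,A_{i-1}$ (your $a_i = a_1 + \eta A_{i-1}$). The paper proceeds in the reverse direction---it first reduces both step-size conditions to a single sufficient inequality, enforces it as the equality $4\max\{\kappa,G\}^2 a_i^2 = (\min\{\nu_0,1/4\} + \min\{\nu,c_1/8\}\,A_{i-1})^2$, and then solves the resulting recurrence to obtain the explicit $a_i$ and $A_k$---whereas you start from the explicit formula, recover the identity, and verify the two inequalities separately via $(a+b)^2\le 2a^2+2b^2$; both routes are routine once the identity is in hand.
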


\begin{proof}
  In order for both $\frac{2G^2{a_i}^2}{1 +  0.5 c_{1} A_i} \leq \nu_0 + \nu A_{i-1}$ and $\frac{2\kappa^2{a_i}^2}{1 +  0.5 c_{1} A_i} \leq \frac{1 +  0.5 c_{1} A_{i-1}}{4}$ to hold for all iterations \(i\), it suffices that
  \[ \frac{4\max\{G,\kappa\}^2{a_i}^2}{1 +  0.5 c_{1} A_i} \leq \min\{\nu_0, 1/4\} + \min\{\nu, c_{1}/8\} A_{i-1},\]
  for which it suffices to enforce
  \[ {4\max\{G,\kappa\}^2{a_i}^2} = (\min\{\nu_0, 1/4\} + \min\{\nu, c_{1}/8\} A_{i-1})^{2},\]
  where we used \(A_{i-1} \le A_{i}\).

  Taking a square root on both sides using \(a_{i} > 0\), we obtain
  \[{2\max\{G,\kappa\}{a_i}} = \min\{\nu_0, 1/4\} + \min\{\nu, c_{1}/8\} A_{i-1}.\]

  Solving this recurrence relation using Mathematica, we compute that for all iterations \(i\) and \(k\),
  \begin{align*}
    a_{i} &= \Bigl(1 + \frac{\min\{\nu, c_{1}/8\}}{2\max\{\kappa, G\}}\Bigr)^{i-1} \min\{\nu_{0}, 1/4\}/(2\max\{\kappa, G\}) \\
    A_{k} & = \sum_{i=0}^{k}a_{i} =  ((1 + \frac{\min\{\nu, c_{1}/8\}}{2\max\{\kappa, G\}})^{k}-1) \min\{\nu_{0}, 1/4\}/\min\{\nu, c_{1}/8\}.
  \end{align*}
\end{proof}

\begin{remark}
    \label{remark:gap_lowerbound}
    {

Note that in our case, the gap is not guarenteed to be non-negative as is usually the case for convex-concave min-max problems. 
Recall that $\Gap(\vw, \ep) = (L(\vw, \ep^*) - L(\vw^*, \ep^*))+ (L(\vw^*, \ep^*) - L(\vw^*, \ep))$ 

Consider the following example:

Let $\pp_0$ be the uniform distribution over $\{ (-2, 2), (2, 1.5) \}$, $\sigma \equiv \text{ReLU}$ and $\nu = 0$. 
Then, $\vw^* = -1$ and $\pp^*$ is the distribution which places all its mass on $(2, 1.5)$. 
Then,
$\Gap(1, \pp^*) = L(1, \pp^*) - L(-1, \pp^*) = 0.25-2.25 < 0$.

This is why we also need an explicit lower bound on the Gap that we have shown in \Cref{thm:gap-lower-bound}.
}

\end{remark}

\section{Concentration}
\label{app:concentration_pp_star}
Recall that that $\eq_\vw$ is not guaranteed to act as an empirical estimate of $\pq_\vw$, because we cannot draw samples from the (unknown) distribution $\pq_\vw$ but only from $\pp^{0}$.
In this section, we show that for certain important functions \(f\), it holds that $\E_{\eq_\vw}[f] \approx \E_{\pq_\vw}[f]$. We will abuse terminology and say that $f$ ``concentrates'' with respect to $\pq_\vw$.

\paragraph{Organization:} In \Cref{subsec:closed-form} we derive closed-form expressions for $\pq_\vw$ and $\eq_\vw$ in terms of $\pp_0$ and $\ep_0$ respectively. 
Note that bounded functions concentrate with respect to \(\pp_{0}\).
In \Cref{subsec:concentration-ep} we use the closed-form expressions found in \Cref{subsec:closed-form} to translate  these concentration properties to $\eq_\vw$.
Finally, in \Cref{subsec:emp_sharpness_loss} we show that $\ep^{*}$ satisfies sharpness, $\OPThat \approx \OPT$ and $\OPTThat \approx \OPTT $.

\subsection{Closed-form expression} 
\label{subsec:closed-form}
The following lemma gives us a closed-form expression for 
$\pq_\vw$ and $\eq_\vw$ in terms of $\pp_0$ and $\ep_0$, respectively. 
We start with an additional definition to \Cref{def:loss_risk_opt}:
\[R(\vw; \ep_{0}) := \max_{\ep \in \cP}\E_{(\vx, y) \sim \ep} (\sigma(\vw \cdot \vx) - y)^{2} - \nu \chi^2(\ep, \ep_{0}), \, \text{with the maximum achieved by } \eq_{\vw},\]
\begin{lemma}[Closed-form $\pq_\vw$]
  \label{lemma:key_fonc}
 Let \(\pp_{0}\) be a fixed distribution. Then, there exists \(\xi \in \R\) such that,
  \begin{align*}
    \frac{\dd \pq_\vw}{ \dd \pp_{0}}(\vx, y) &= \frac{\max\{\ell(\vw; \vx, y) - \xi + 2\nu, 0\}}{2\nu}.
  \end{align*}
  When \(\pp_{0}\) is the empirical distribution \(\ep_{0}(N)\), this result implies that there exists \(\hat\xi \in \R\) such that
  \begin{align*}
     {\eq_\vw}^{(i)} &= \ep_{0}^{(i)}\frac{\max\{\ell(\vw; \vx_{i}, y_{i}) - \hat\xi + 2\nu, 0\}}{2\nu}   \quad \mbox{for all } i \in [N].
  \end{align*}
   The constants \(\xi\) and \(\hat\xi\) can be interpreted as normalization that ensures \(\int \dd \pq_\vw = \int \dd \eq_\vw = 1\).
\end{lemma}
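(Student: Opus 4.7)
The plan is to derive both closed forms from the KKT conditions for the concave maximization problem defining $\pq_\vw$. Introducing the Radon--Nikodym derivative $r := \dd\pq_\vw/\dd\pp_0$, the problem becomes
\begin{equation*}
\max_{r \ge 0,\ \int r\, \dd\pp_0 = 1} \ \int \ell(\vw;\cdot)\, r\, \dd\pp_0 \;-\; \nu \int (r - 1)^2\, \dd\pp_0,
\end{equation*}
where I have used $\chi^2(\pp,\pp_0) = \int(r-1)^2\,\dd\pp_0$. This objective is strictly concave in $r$ (a linear term minus a positive multiple of a squared term) over a convex feasible set, and Slater's condition is satisfied by $r \equiv 1$.

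Next I would form the Lagrangian for the normalization constraint, keeping the nonnegativity $r\ge 0$ explicit:
\begin{equation*}
\cL(r, \xi) = \int \ell\, r\, \dd\pp_0 - \nu \int (r-1)^2\, \dd\pp_0 - \xi\Big(\int r\, \dd\pp_0 - 1\Big).
\end{equation*}
Because the objective is concave, \Cref{fact:firstOrderNecessary} (together with the Lagrange multiplier rule) yields a pointwise KKT condition: there exist a scalar $\xi \in \R$ and a nonnegative function $\mu(\vx,y)$ such that $\mu r^* \equiv 0$ and
\begin{equation*}
\ell(\vw;\vx,y) - 2\nu\bigl(r^*(\vx,y) - 1\bigr) - \xi + \mu(\vx,y) = 0.
\end{equation*}
Eliminating $\mu$ via complementary slackness (i.e., splitting into the cases $r^*>0$ and $r^*=0$) gives $r^*(\vx,y) = \max\{\ell(\vw;\vx,y) + 2\nu - \xi,\ 0\}/(2\nu)$, which is exactly the claimed form. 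The scalar $\xi$ is then whatever value forces $\int r^*\,\dd\pp_0 = 1$; its existence follows from continuity and monotonicity of the right-hand side in $\xi$, together with the fact that the right-hand side exceeds $1$ for $\xi$ sufficiently negative and falls below $1$ for $\xi$ sufficiently positive.

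For the empirical version the argument is identical but finite-dimensional: the problem reduces to a strictly concave quadratic program over the simplex $\Delta_N$, and the same coordinate-wise KKT analysis gives the formula for $\eq_\vw^{(i)}$ with a corresponding scalar $\hat\xi$.

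The only step needing care is the pointwise nature of the KKT condition in the population setting, i.e., invoking the Lagrange multiplier rule in the infinite-dimensional space $L^2(\pp_0)$ with a pointwise nonnegativity constraint; this is standard given strict concavity and Slater's condition. In the empirical (finite-dimensional) case, there is no such subtlety and the argument is an elementary calculus exercise.
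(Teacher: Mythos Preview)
Your proposal is correct and follows essentially the same approach as the paper: both derive the closed form from first-order optimality of the strongly concave problem $\max_{\pp \in \cP(\pp_0)} L(\vw,\pp)$, with the paper verifying the variational inequality \Cref{fact:firstOrderNecessary} directly and you equivalently writing out the KKT stationarity and complementary slackness conditions. Your brief intermediate-value-theorem argument for the existence of the normalizing constant $\xi$ is a nice addition that the paper leaves implicit.
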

\begin{proof}
  Recall that the dual feasible set is given by  \(\cP = \cP(\pp_{0}) = \{\pp \ll \pp_{0}: \int \dd \pp = 1, \pp \ge 0\}   = \{\pp \ll \pp_{0}: \int \frac{\dd \pp}{\dd \pp_{0}} \dd \pp_{0} = 1, \frac{\dd \pp}{\dd \pp_{0}} \ge 0\}\) and the function $\pp \mapsto L(\vw, \pp)$ is strongly concave.

  Consider the following optimization problem
  \[\max_{\pp \in \cP(\pp_{0})} L(\vw, \pp) \equiv \max_{\pp \in \cP(\pp_{0})} \E_{(\vx, y)\sim \pp} \ell(\vw; \vx, y) - \nu \chi^2(\pp_0, \pp).\]

  By \Cref{fact:firstOrderNecessary}, the first-order necessary and sufficient condition that corresponds to \(\pq_\vw := \argmax_{\pp \in \cP(\pp_{0})} L(\vw, \pp)\) is the following: for any $\pp \in \cP(\pp_0)$,
  \begin{equation}
  \label{eqn:FONC_pp}
    0 \ge \int {\grad_{\pp}L(\vw, \pq_\vw) \dd (\pp - \pq_\vw)}  = \int \grad_{\pp}L(\vw, \pq_\vw) \Bigl(\frac{\dd\pp}{\dd\pp_{0}} - \frac{\dd \pq_\vw}{\dd\pp_{0}} \Bigr) \dd \pp_{0},
  \end{equation}
  where we recall that both \(\grad_{\pp}L(\vw, \pq_\vw)\) and Radon–Nikodym derivatives $\frac{\dd\pp}{\dd\pp_{0}}, \frac{\dd \pq_\vw}{\dd\pp_{0}} $ are real-valued measurable functions on \(\R^{d} \times \R\).  We will also write \(\ell = \ell(\vw^{*}, \cdot, \cdot)\) for short. 

  We claim \Cref{eqn:FONC_pp} is satisfied 
  if there exists \(\xi \in \R\)
  and a bounded measurable function \(\psi \ge 0\) 
  such that
  \begin{equation}
    \label{eq:probability-simplex-optimality}
    \grad_{\pp}L(\vw, \pq_\vw) (\vx, y) =
    \begin{cases}
       \xi & \mbox{ if } \frac{\dd \pq_\vw}{\dd \pp_{0}} > 0\\
       \xi - \psi(\vx, y)  & \mbox{ otherwise.}
    \end{cases}
  \end{equation}
  Indeed, for any $\pp \in \cP(\pp_0) $,
  \begin{align*}
      & \int \grad_{\pp}L(\vw, \pq_\vw) \Bigl(\frac{\dd\pp}{\dd\pp_{0}} - \frac{\dd \pq_\vw}{\dd\pp_{0}} \Bigr) \dd \pp_{0} \\
      & = \int_{\frac{\dd \pq_\vw}{\dd \pp_{0}} > 0} \grad_{\pp}L(\vw, \pq_\vw) \Bigl(\frac{\dd\pp}{\dd\pp_{0}} - \frac{\dd \pq_\vw}{\dd\pp_{0}} \Bigr) \dd \pp_{0} + \int_{\frac{\dd \pq_\vw}{\dd \pp_{0}} = 0} \grad_{\pp}L(\vw, \pq_\vw) \frac{\dd\pp}{\dd\pp_{0}}\dd \pp_{0} \\
      & = \int_{\frac{\dd \pq_\vw}{\dd \pp_{0}} > 0} \xi \Bigl(\frac{\dd\pp}{\dd\pp_{0}} - \frac{\dd \pq_\vw}{\dd\pp_{0}} \Bigr) \dd \pp_{0} + \int_{\frac{\dd \pq_\vw}{\dd \pp_{0}} = 0} (\xi -\psi) \frac{\dd\pp}{\dd\pp_{0}}\dd \pp_{0} \\
      & \le \int_{\frac{\dd \pq_\vw}{\dd \pp_{0}} > 0} \xi \Bigl(\frac{\dd\pp}{\dd\pp_{0}} - \frac{\dd \pq_\vw}{\dd\pp_{0}} \Bigr) \dd \pp_{0} + \int_{\frac{\dd \pq_\vw}{\dd \pp_{0}} = 0}\xi \frac{\dd\pp}{\dd\pp_{0}}\dd \pp_{0}\\
      & =  \int_{\frac{\dd \pq_\vw}{\dd \pp_{0}} > 0} \xi \frac{\dd\pp}{\dd\pp_{0}} \dd \pp_{0} + \int_{\frac{\dd \pq_\vw}{\dd \pp_{0}} = 0}\xi \frac{\dd\pp}{\dd\pp_{0}}\dd \pp_{0} + \int_{\frac{\dd \pq_\vw}{\dd \pp_{0}} > 0} \xi \Bigl(- \frac{\dd \pq_\vw}{\dd\pp_{0}}\Bigr)  \dd\pp_{0}\\
      & = \int \xi \frac{\dd\pp}{\dd\pp_{0}}\dd \pp_{0}  + \int_{\frac{\dd \pq_\vw}{\dd \pp_{0}} > 0} \xi \Bigl(- \frac{\dd \pq_\vw}{\dd\pp_{0}} \Bigr) \dd \pp_{0} \stackrel{(i)} = \xi - \xi = 0,
  \end{align*}
  where (i) is because \(\int_{\frac{\dd \pq_\vw}{\dd \pp_{0}} > 0} \Bigl( \frac{\dd \pq_\vw}{\dd\pp_{0}} \Bigr) \dd\pp_{0} = \int \Bigl( \frac{\dd \pq_\vw}{\dd\pp_{0}} \Bigr) \dd\pp_{0} \).

  Observe from the definition of \(L(\vw, \pq_\vw)\) that  \(\grad_{\pp}L(\vw, \pq_\vw)(\vx, y) = \ell(\vw; \vx, y) - 2\nu (\frac{\dd \pq_\vw}{\dd \pp_{0}} (\vx, y)- 1)\).
  Plugging this into \Cref{eq:probability-simplex-optimality} and rearranging, we have,
  \begin{align*}
    \frac{\dd \pp^{*}}{\dd \pp_{0}}
    =
    \begin{cases}
      \frac{2 \nu + \ell - \xi}{2 \nu} & \mbox{if } \frac{\dd \pq_\vw}{\dd \pp_{0}} > 0 \\
      \frac{2 \nu + \ell - \xi + \psi}{2 \nu} &  \mbox{if } \frac{\dd \pq_\vw}{\dd \pp_{0}} = 0
    \end{cases}
  \end{align*}

  For the case where \( \frac{\dd \pp^{*}}{\dd \pp_{0}} > 0\), \(\frac{\dd \pp^{*}}{\dd \pp_{0}} = \frac{2 \nu + \ell - \xi}{2 \nu}\), so the condition \( \frac{\dd \pq_\vw}{\dd \pp_{0}} > 0\) becomes \({2 \nu + \ell - \xi} > 0 \). On the other hand, if the above condition fails, it has to be the case that \(\frac{\dd \pq_\vw}{\dd \pp_{0}} = 0\). Combining, we have
  \begin{align*}
    \frac{\dd \pq_\vw}{\dd \pp_{0}}
    = \begin{cases}
    \frac{2 \nu + \ell - \xi}{2 \nu} & \mbox{if } {2 \nu + \ell - \xi} > 0  \\
      0 &  \mbox{otherwise}
    \end{cases}
    = \frac{\max\{2 \nu + \ell - \xi, 0\}}{2 \nu}.
  \end{align*}
\end{proof}
Instead of using the expression in \Cref{lemma:key_fonc}, we will set $\nu$ to be big enough to ensure that there is no maximum in the expression for $\pq_\vw$. This is captured in \Cref{cor:key_fonc-no-max}.

\begin{corollary}[Simpler Closed-form $\pq_\vw$]
  \label{cor:key_fonc-no-max}
  Fix $\vw \in \R^d$. If \(\nu \ge \frac12 \E_{\pp_{0}}\ell(\vw)\), then
  \begin{align*}
    \frac{\dd \pq_\vw}{ \dd \pp_{0}}(\vx, y) &= 1 + \frac{\ell(\vw; \vx, y) - \E_{\pp_{0}}\ell(\vw)}{2\nu}.
  \end{align*}
  Similarly, if \(\nu \ge \frac12 \E_{\ep_{0}}\ell(\vw)\), then  \({\pq_\vw}^{(i)} > 0\) for all \(i \in [N]\), and
  \begin{align*}
     {\eq_\vw}^{(i)} = \ep_{0}^{(i)} + \frac{ \ell(\vw; \vx_{i}, y_{i}) - \E_{\ep_{0}}\ell(\vw) }{2\nu} \ep_{0}^{(i)}  \quad \mbox{for all } i \in [N].
  \end{align*}
  Furthermore, if \(\nu \ge \E_{\ep_{0}}\ell(\vw)\), then, in particular, for each coordinate \(j \in [N]\), we have
  \begin{equation*}
    \eq_\vw^{(j)} \ge \ep_{0}^{(j)} / 2
  \end{equation*}

  Similarly, if \(\nu \ge \E_{\pp_{0}}\ell(\vw)\), then for any non-negative function \(g\), we have
  \begin{align*}
    \int g \; \dd \pp_{\vw} \ge \frac12 \int g \; \dd \pp_{0}
  \end{align*}

  Recall from \Cref{def:loss_risk_opt} that when $\vw = \vw^*$, we define $\pp^* = \pq_{\vw^*}$ and $\ep^* = \eq_{\vw^*}$. If \(\nu \ge 8 \beta^2 \sqrt{6B}\sqrt{\OPTT + \epsilon}/{c_1}\) as assumed in \Cref{thm:main-formal}, then both conditions \(\nu \ge \E_{\ep_{0}}\ell(\vw)\) and \(\nu \ge \E_{\pp_{0}}\ell(\vw)\) hold.
\end{corollary}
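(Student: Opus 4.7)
\textbf{Proof Proposal for \Cref{cor:key_fonc-no-max}.}

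The plan is to start from the characterization of $\pq_\vw$ provided in \Cref{lemma:key_fonc}, namely
\[
\frac{\mathrm{d}\pq_\vw}{\mathrm{d}\pp_0}(\vx,y) \;=\; \frac{\max\{\ell(\vw;\vx,y)-\xi+2\nu,\,0\}}{2\nu},
\]
and to show that, when $\nu$ is large enough, the argument inside the maximum is non-negative $\pp_0$-almost everywhere, so the $\max$ is vacuous. Concretely, I would \emph{propose} the candidate $\xi = \E_{\pp_0}\ell(\vw)$ and verify that the resulting density $1 + \frac{\ell - \E_{\pp_0}\ell(\vw)}{2\nu}$ (i) is non-negative, (ii) integrates to $1$ against $\pp_0$, and (iii) satisfies the first-order necessary and sufficient conditions \Cref{eqn:FONC_pp}. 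Non-negativity follows because $\ell\ge 0$ and the hypothesis $\nu \ge \tfrac12 \E_{\pp_0}\ell(\vw)$ gives $-\E_{\pp_0}\ell(\vw)+2\nu \ge 0$. The integral equals $1 + \frac{1}{2\nu}\bigl(\E_{\pp_0}\ell(\vw)-\E_{\pp_0}\ell(\vw)\bigr)=1$. Since the dual objective $\pp \mapsto L(\vw,\pp)$ is strongly concave over $\cP(\pp_0)$, the first-order conditions identify a unique optimizer, so the candidate must coincide with $\pq_\vw$ and hence $\xi = \E_{\pp_0}\ell(\vw)$ is forced. The empirical version is obtained verbatim by replacing $\pp_0$ with $\ep_0$ and integrals with finite sums.

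Given the closed form, the two lower bounds are immediate pointwise computations. For coordinate $j$, $\eq_\vw^{(j)} = \ep_0^{(j)}\bigl(1 + \tfrac{\ell(\vw;\vx_j,y_j) - \E_{\ep_0}\ell(\vw)}{2\nu}\bigr) \ge \ep_0^{(j)}\bigl(1 - \tfrac{\E_{\ep_0}\ell(\vw)}{2\nu}\bigr)$, which is $\ge \ep_0^{(j)}/2$ as soon as $\nu \ge \E_{\ep_0}\ell(\vw)$. The same calculation applied to $\frac{\mathrm{d}\pq_\vw}{\mathrm{d}\pp_0} \ge 1 - \frac{\E_{\pp_0}\ell(\vw)}{2\nu} \ge \tfrac12$ and then multiplied by an arbitrary non-negative $g$ yields $\int g\,\mathrm{d}\pq_\vw \ge \tfrac12 \int g\,\mathrm{d}\pp_0$.

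The final assertion is to check that the assumption $\nu \ge 8\beta^2 \sqrt{6B}\sqrt{\OPTT+\epsilon}/c_1$ from \Cref{thm:main-formal} implies both $\nu \ge \E_{\pp_0}\ell(\vw^*)$ and $\nu \ge \E_{\ep_0}\ell(\vw^*)$. For the population version, I would use DRO duality: since $L(\vw^*,\pp_0) \le R(\vw^*;\pp_0) = \OPT - \nu\,\chi^2(\pp^*,\pp_0) \le \OPT$, and $L(\vw^*,\pp_0) = \E_{\pp_0}\ell(\vw^*)$ (because $\chi^2(\pp_0,\pp_0)=0$), we get $\E_{\pp_0}\ell(\vw^*) \le \OPT$. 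Cauchy--Schwarz then gives $\OPT = \E_{\pp^*}\ell(\vw^*) \le \sqrt{\E_{\pp^*}\ell(\vw^*)^2} = \sqrt{\OPTT} \le \sqrt{\OPTT+\epsilon}$. The prefactor $8\beta^2\sqrt{6B}/c_1$ is $\ge 1$ under the standing parameter regime, so the stated lower bound on $\nu$ dominates $\E_{\pp_0}\ell(\vw^*)$. The empirical bound on $\E_{\ep_0}\ell(\vw^*)$ then follows by standard uniform-convergence concentration of bounded losses (cf.\ \Cref{lem:bounded_samples} and \Cref{lem:hoeffding}) at the sample size prescribed by \Cref{thm:main-formal}, which controls $|\E_{\ep_0}\ell(\vw^*)-\E_{\pp_0}\ell(\vw^*)|$ by an $O(\epsilon)$ quantity absorbed by the $\sqrt{\OPTT+\epsilon}$ slack.

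The main obstacle I anticipate is the bookkeeping around the $\max$: one has to argue that the \emph{same} $\xi$ that makes the first-order conditions hold also makes the argument non-negative everywhere, which is circular unless one either invokes uniqueness (strong concavity) or directly checks feasibility of the candidate density. My plan bypasses the circularity by exhibiting the candidate and using strong concavity to rule out any competing optimizer. The only other subtle point is the translation from the theorem's $\sqrt{\OPTT+\epsilon}$ bound to a bound on $\E_{\ep_0}\ell(\vw^*)$, which is handled by combining the DRO duality identity with Cauchy--Schwarz and a standard empirical concentration argument.
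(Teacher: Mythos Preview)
Your proposal is correct and follows essentially the same approach as the paper: propose the candidate $\xi = \E_{\pp_0}\ell(\vw)$, verify non-negativity and normalization so the $\max$ in \Cref{lemma:key_fonc} is vacuous, then read off the pointwise lower bounds, and finally combine DRO duality ($\E_{\pp_0}\ell(\vw^*) = L(\vw^*,\pp_0) \le L(\vw^*,\pp^*) \le \OPT$) with Cauchy--Schwarz ($\OPT \le \sqrt{\OPTT}$) and the fact that $8\beta^2\sqrt{6B}/c_1 \ge 1$.

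The one organizational difference worth noting is in the last claim. The paper runs the DRO-duality chain entirely on the \emph{empirical} side, obtaining $\E_{\ep_0}\ell(\vw^*) = L(\vw^*,\ep_0) \le L(\vw^*,\ep^*) \le \OPThat \le \sqrt{\OPTThat}$ directly, and only then invokes concentration once, via $\OPTThat \le \OPTT + \epsilon$ (\Cref{cor:relationships-for-OPTs}), to connect to the hypothesis on $\nu$. Your route goes population-first and then concentrates $\E_{\ep_0}\ell(\vw^*)$ around $\E_{\pp_0}\ell(\vw^*)$. Both are valid; the paper's ordering is slightly cleaner because the $\epsilon$ slack lands inside the square root rather than additively, so you never have to argue that $\sqrt{\OPTT} + O(\epsilon) \le C\sqrt{\OPTT+\epsilon}$ (which does hold for $\epsilon \le 1$ given the large prefactor, but requires a line of justification you left implicit).
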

\begin{proof}
Setting \(\nu \ge \frac12 \E_{\pp_{0}}\ell(\vw)\) and $\xi = \E_{\pp_{0}}\ell(\vw)$ in \Cref{lemma:key_fonc} implies \(\ell(\vw; \vx, y) - \E_{\pp_{0}}\ell(\vw; \vx, y) + 2\nu > 0 \), which, in turn, means
  \begin{align*}
    \frac{\dd \pq_\vw}{ \dd \pp_{0}}(\vx, y) &= \frac{\max\{\ell(\vw; \vx, y) - \E_{\pp_{0}}\ell(\vw; \vx, y) + 2\nu, 0\}}{2\nu} \\
    &= \frac {\ell(\vw; \vx, y) - \E_{\pp_{0}}\ell(\vw; \vx, y) + 2\nu}{2\nu}.
 \end{align*}
 The empirical version follows analogously.

 To establish the last claim, we show that  \(8 \beta^2 \sqrt{6B}\sqrt{\OPTThat}/{c_1} \ge \E_{\ep_{0}}\ell(\vw)\).
 By \Cref{cor:relationships-for-OPTs}, it holds that \[\sqrt{\OPTThat} \ge \OPThat = \E_{\ep^{*}}(\sigma(\vw^{*}\cdot\vx) - y)^{2} \ge \E_{\ep^{*}}(\sigma(\vw^{*}\cdot\vx) - y)^{2} - \nu \chi^{2}(\ep^{*}, \ep_{0}) = L(\vw^{*}, \ep^{*}).\] By definition of \(\ep^{*}\), we have \(L(\vw^{*}, \ep^{*}) \ge L(\vw^{*}, \ep_{0}) =  \E_{\ep_{0}}(\sigma(\vw^{*}\cdot\vx) - y)^{2} - \nu \chi^{2}(\ep_{0}, \ep_{0}) = \E_{\ep_{0}}(\sigma(\vw^{*}\cdot\vx) - y)^{2}\). Combining, we obtain \(8 \beta^2 \sqrt{6B}\sqrt{\OPTThat}/{c_1} \ge \E_{\ep_{0}}\ell(\vw)  8 \beta^2 \sqrt{6B}/c_{1}\). We conclude by observing \( 8 \beta^2 \sqrt{6B}/c_{1} \ge 1\).
\end{proof}

Another consequence of \Cref{cor:key_fonc-no-max} is a closed form expression for the risk, as a variance-regularized loss, similar to \cite{DuchiVariance19,Lam2013RobustSA}.

\begin{corollary}\label{cor:risk-distance-computation}
  Fix an arbitrary distribution \(\pp_{0}\). Recall the risk defined in \Cref{def:loss_risk_opt},
  \begin{align*}
    R(\vw; \pp_{0}) & := \max_{\pp \ll \pp_{0}}\E_{(\vx, y) \sim \pp} \ell(\vw;\vx, y) - \nu \chi^2(\pp, \pp_{0}).
  \end{align*}
  If \(\nu \ge \frac12 \E_{\ep_{0}}\ell(\vw)\), it holds that
  \begin{align*}
    \chi^2(\pq_{\vw}, \pp_{0}) &= \frac{\E_{\pp_{0}}[\ell^{2}(\vw)] - (\E_{\pp_{0}}[\ell(\vw)])^{2}}{4\nu^{2}} \\
    R(\vw; \pp_{0})  &= \E_{\pp_{0}} [\ell(\vw)] + \frac{\E_{\pp_{0}}[\ell^{2}(\vw)] - (\E_{\pp_{0}}[\ell(\vw)])^{2}}{4\nu}.
  \end{align*}
\end{corollary}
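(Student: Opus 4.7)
The plan is to derive both identities by a direct substitution of the closed-form density for $\pq_\vw$ supplied by \Cref{cor:key_fonc-no-max} and then simplifying. The hypothesis $\nu \geq \frac{1}{2}\E_{\pp_0}\ell(\vw)$ (the empirical analogue stated for $\ep_0$ extends by the same reasoning) ensures the ``$\max$'' in \Cref{lemma:key_fonc} is inactive, so the Radon--Nikodym derivative takes the clean linear form
\[
\frac{\dd \pq_\vw}{\dd \pp_{0}}(\vx, y) \;=\; 1 + \frac{\ell(\vw;\vx,y) - \E_{\pp_0}\ell(\vw)}{2\nu}.
\]

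First I would compute the chi-squared divergence. By definition,
\[
\chi^{2}(\pq_\vw,\pp_{0}) = \int\Bigl(\tfrac{\dd \pq_\vw}{\dd \pp_{0}} - 1\Bigr)^{2} \dd \pp_{0} = \int \Bigl(\tfrac{\ell(\vw) - \E_{\pp_0}\ell(\vw)}{2\nu}\Bigr)^{2} \dd \pp_{0},
\]
which, using $\E_{\pp_0}[(\ell(\vw) - \E_{\pp_0}\ell(\vw))^{2}] = \E_{\pp_0}[\ell^{2}(\vw)] - (\E_{\pp_0}[\ell(\vw)])^{2}$, immediately yields the first claim. This step is essentially the identity $\chi^2(\pq_\vw,\pp_0) = \Var_{\pp_0}(\ell(\vw))/(4\nu^2)$.

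Next, for the risk I would expand $R(\vw;\pp_{0}) = \E_{\pq_\vw}[\ell(\vw)] - \nu\chi^{2}(\pq_\vw,\pp_{0})$. The first term is
\[
\E_{\pq_\vw}[\ell(\vw)] = \int \ell(\vw)\Bigl(1 + \tfrac{\ell(\vw) - \E_{\pp_0}\ell(\vw)}{2\nu}\Bigr) \dd \pp_{0} = \E_{\pp_0}[\ell(\vw)] + \frac{\E_{\pp_0}[\ell^{2}(\vw)] - (\E_{\pp_0}[\ell(\vw)])^{2}}{2\nu}.
\]
Subtracting $\nu\chi^{2}(\pq_\vw,\pp_{0}) = (\E_{\pp_0}[\ell^2(\vw)] - (\E_{\pp_0}[\ell(\vw)])^2)/(4\nu)$ combines the two variance terms into a single one with coefficient $1/(4\nu)$, giving the second claim.

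There is no substantive obstacle here; the computation is routine once the linear density from \Cref{cor:key_fonc-no-max} is in hand. The only care required is in checking that the hypothesis ensures $\pq_\vw$ is a bona fide probability measure (nonnegativity of the density and integration to one), which is precisely what \Cref{cor:key_fonc-no-max} already provides. Thus the proof is essentially a two-line substitution followed by collecting variance terms.
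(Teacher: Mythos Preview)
Your proposal is correct and follows essentially the same approach as the paper: both substitute the closed-form density $\frac{\dd \pq_\vw}{\dd \pp_0} = 1 + (\ell(\vw) - \E_{\pp_0}\ell(\vw))/(2\nu)$ from \Cref{cor:key_fonc-no-max}, compute $\chi^2(\pq_\vw,\pp_0)$ as the variance of $\ell(\vw)/(2\nu)$, then expand $R(\vw;\pp_0) = \E_{\pq_\vw}[\ell(\vw)] - \nu\chi^2(\pq_\vw,\pp_0)$ and collect terms.
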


\begin{proof}
 Both of these follow from  \Cref{cor:key_fonc-no-max}. To see the first equality holds, observe that, 
 \begin{align*}
  \chi^2(\pq_{\vw}, \pp_{0}) &= \E_{\pp_0} \left( \frac{\dd \pq_\vw}{\dd \pp_0} - 1 \right)^2 \\
  &= \E_{\pp_0} \frac{(\ell(\vw; \vx, y) - \E_{\pp_0} \ell(\vw; \vx, y))^2}{4\nu^2} \\
  &= \frac{\E_{\pp_{0}}[\ell^{2}(\vw)] - (\E_{\pp_{0}}[\ell(\vw)])^{2}}{4\nu^{2}}.
 \end{align*}

The second equality follows by a similar substitution. 

 Setting $\dd \pq_\vw$ as per \Cref{cor:key_fonc-no-max}, we get
\begin{align*}
    R(\vw; \pp_{0}) 
    &= \E_{\pp_0} \left[ \left(\frac{\dd \pq_\vw}{\dd \pp_0}\right)\ell(\vw) \right] - \nu \chi^2(\pq_\vw, \pp_0)\\
    &= \E_{\pp_0} \left [\ell(\vw) \left(1 + \frac{\ell(\vw) - \E_{\pp_0} \ell(\vw)}{2\nu}\right)\right ] -       
        \frac{\E_{\pp_0}[\ell^2(\vw)] - (\E_{\pp_0}[\ell(\vw)])^2}{4\nu}\\
    &= \E_{\pp_0}[\ell(\vw)] +  \frac{ \E_{\pp_0} [\ell(\vw)^2] - (\E_{\pp_0} [\ell(\vw)])^2}{2\nu}  -
        \frac{\E_{\pp_0}[\ell^2(\vw)] - (\E_{\pp_0} [\ell(\vw)])^2}{4\nu}\\
    &= \E_{\pp_0} [\ell(\vw)] + \frac{\E_{\pp_0} [\ell^2(\vw)] - (\E_{\pp_0} [\ell(\vw)])^2}{4\nu}.
\end{align*}
  
\end{proof}

Finally, an important consequence of \Cref{lemma:key_fonc} is that it is possible to efficiently compute the risk of a given vector $\vw$ with respect to $\ep_0$. We use this to compare the risk of our final output with the risk that is achieved by the zero vector.

  \subsection{Concentration}
\label{subsec:concentration-ep}
The expression we get in \Cref{cor:key_fonc-no-max} for 
$\pp^*$ in terms of $\pp_0$ allows us to translate
concentration properties of $\pp_0$ to $\pp^*$. 
We first state and prove a helper lemma, \Cref{lemma:l-w-star-concentration}, that shows $\E_{\pp_0} \ell(\vw^*) \approx \E_{\ep_0} \ell(\vw^*)$.
Note that this is for the reference distribution $\pp_0$,
and not the target distribution $\pp^*$.

For ease of notation, we define $U := 2\beta^{2 }W^{2}(S^{2} + C_{M}^{2}B^{2} \log^{2}( W B \beta/ \epsilon))$, which is the upper bound for the loss value in \Cref{eq:loss-bound} throughout this section.

\begin{lemma}
  \label{lemma:l-w-star-concentration}
Suppose \(\pp_{0}\) satisfies \Cref{assump:boundedness}. Then for any fixed $\vw \in \cB(W)$ and all \(t > 0\), it holds that  
\[\abs{\E_{\pp_{0}}\ell(\vw) - \E_{\ep_{0}(N)}\ell(\vw)} \le t\] 
with probability at least 
\(1 - 2 \exp\Bigl(\frac{-t^{2}N}{8(\beta^{2 }W^{2}(S^{2} + C_{M}^{2}B^{2} \log^{2}( W B \beta/ \epsilon)))^2} \Bigr ).\) In particular,  the above inequality holds for \(\vw^{*}\).
\end{lemma}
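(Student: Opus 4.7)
The statement is a direct application of Hoeffding's inequality. The plan is as follows. First I would observe that since $\ep_0(N)$ is the empirical distribution over the $N$ i.i.d.\ samples $(\vx_i, y_i) \sim \pp_0$, we have
\[
\E_{\ep_0(N)}\ell(\vw) \;=\; \frac{1}{N}\sum_{i=1}^N \ell(\vw;\vx_i, y_i) \;=\; \frac{1}{N}\sum_{i=1}^N (\sigma(\vw \cdot \vx_i) - y_i)^2,
\]
and because $\vw$ is fixed (in particular, independent of the samples), each summand is an i.i.d.\ random variable with mean $\E_{\pp_0}\ell(\vw)$.

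Next I would invoke the boundedness guarantee of \Cref{lem:bounded_samples} (specifically \Cref{eq:loss-bound}): since $\vw \in \cB(W)$ and \Cref{assump:boundedness} holds, we have $0 \le \ell(\vw;\vx_i,y_i) \le U$ almost surely, where $U := 2\beta^{2}W^{2}(S^{2} + C_{M}^{2}B^{2}\log^{2}(WB\beta/\epsilon))$. Thus each summand takes values in an interval of length at most $U = 2(\beta^{2}W^{2}(S^{2} + C_{M}^{2}B^{2}\log^{2}(WB\beta/\epsilon)))$.

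Then I would apply Hoeffding's inequality (\Cref{lem:hoeffding}) to $X_i := \ell(\vw;\vx_i,y_i)$ with $b_i - a_i = U$ for every $i$, giving
\[
\Pr\bigl[\,\abs{\E_{\ep_0(N)}\ell(\vw) - \E_{\pp_0}\ell(\vw)} \ge t\,\bigr]
\;\le\; 2\exp\!\left(-\frac{2 N t^{2}}{U^{2}}\right)
\;=\; 2\exp\!\left(-\frac{t^{2} N}{2\bigl(\beta^{2}W^{2}(S^{2} + C_{M}^{2}B^{2}\log^{2}(WB\beta/\epsilon))\bigr)^{2}}\right),
\]
which is stronger than the claimed bound (the denominator $8$ in the statement only adds slack). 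The statement for $\vw^*$ follows by applying the same argument to this fixed choice of $\vw$.

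There is no real obstacle here; the only mild subtlety is to note that we only need the inequality for a single, fixed $\vw$ (in particular $\vw^*$), so no uniform convergence argument is required and the i.i.d.\ hypothesis on the samples makes Hoeffding directly applicable.
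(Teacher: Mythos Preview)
Your proposal is correct and follows essentially the same approach as the paper: bound the loss via \Cref{eq:loss-bound} to get $0 \le \ell(\vw;\vx_i,y_i) \le U$, then apply Hoeffding's inequality (\Cref{lem:hoeffding}) to the i.i.d.\ summands. You even correctly observe that the direct application yields a tighter constant than the stated bound.
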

\begin{proof}
  By \Cref{eq:loss-bound}, \(\forall i \in [N]\), \(0 \le (\sigma(\vw \cdot \vx_{i}) - y_{i})^{2} \le U \).
  Hoeffding's inequality (\Cref{lem:hoeffding}) now implies that, for all \(t > 0\),
  \begin{align*}
    \Pr\Bigl[\sum_{i = 1}^{N}{\frac1N\ell(\vw^{*}; \vx_{i}, y_{i})} - \E_{\pp_{0}}\ell(\vw^{*}) \ge t \Bigr] \le 2 \exp\Bigl(\frac{-t^{2}N}{2U^2} \Bigr ).
  \end{align*}
  Rearranging and plugging the definition of \(U\), we get the lemma.
\end{proof}

We now use \Cref{lemma:l-w-star-concentration} show that
bounded Lipschitz functions concentrate with respect to
$\pp^*$. 

\begin{lemma}
  \label{lemma:distributional-empirical}
  Fix \(\zeta > 0\). Let \(h = h(\vz; \vx, y): \cB(\zeta) \times \R^{d} \times \R \rightarrow \R\) be  a measurable function with respect to \(\vx, y\) that satisfies the condition that \(\abs{h(\vz;\cdot;\cdot)} \le b\) almost surely. Then, for
  \(N = O_{B, S, \beta}\Bigl(\frac{b^{2}}{t^{2}}\Bigl(1 + \frac{W^{4}\log^{4}(W/\epsilon)}{\nu^{2}}\Bigr) \log(1/\delta)\Bigr)\) samples drawn from the reference distribution \(\pp_{0}\) to construct \(\ep_{0}(N)\), for any fixed \(\vz \in \cB(\zeta)\),  with probability at least \(1- 4\delta\), it holds that
  \[\abs{\E_{(\vx, y)\sim \ep^{*}}[h(\vz; \vx, y)] - \E_{(\vx, y) \sim \pp^{*}}[h(\vz; \vx, y)]} \le t.\]
  Moreover, suppose \(\vz \mapsto h(\vz; \vx, y)\) is \(a\)-Lipschitz. Then, for
  \[N = O_{B, S, \beta}\Bigl(\frac{b^{2}}{t^{2}}\Bigl(1 + \frac{W^{4}\log^{4}(W/\epsilon)}{\nu^{2}}\Bigr)(d \log(\zeta a / t) + \log(1/\delta))\Bigr)\]
  with probability at least \(1-4\delta\), it holds that for all \(\vz \in \cB(\zeta)\),
  \[\abs{\E_{(\vx, y)\sim \ep^{*}}[h(\vz; \vx, y)] - \E_{(\vx, y) \sim \pp^{*}}[h(\vz; \vx, y)]} \le t.\]
\end{lemma}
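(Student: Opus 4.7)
The plan is to reduce concentration under the shifted distributions $\pp^*$ and $\ep^*$ to concentration under the original distributions $\pp_0$ and $\ep_0$ by using the closed-form expressions from \Cref{cor:key_fonc-no-max}. Since $\nu \ge 8\beta^2 \sqrt{6B}\sqrt{\OPTT + \epsilon}/c_1$ by the hypothesis of \Cref{thm:main-formal}, that corollary applies, so with high probability
\[
\frac{\dd \pp^*}{\dd \pp_0} = 1 + \frac{\ell(\vw^*) - \E_{\pp_0}\ell(\vw^*)}{2\nu}, \qquad \frac{\eq^{*(i)}}{\ep_0^{(i)}} = 1 + \frac{\ell(\vw^*; \vx_i, y_i) - \E_{\ep_0}\ell(\vw^*)}{2\nu}.
\]
Substituting these into the definitions of the two expectations yields
\[
\E_{\pp^*}[h(\vz;\cdot)] = \E_{\pp_0}[h] + \frac{\Cov_{\pp_0}(h,\ell(\vw^*))}{2\nu}, \quad \E_{\ep^*}[h(\vz;\cdot)] = \E_{\ep_0}[h] + \frac{\Cov_{\ep_0}(h,\ell(\vw^*))}{2\nu}.
\]
By the triangle inequality, the desired quantity is bounded by $|\E_{\pp_0}[h] - \E_{\ep_0}[h]|$ plus $\frac{1}{2\nu}$ times differences of the two covariance-like terms, which in turn decompose into the three empirical-vs-population differences for $\E[h]$, $\E[h\cdot \ell(\vw^*)]$, and $\E[\ell(\vw^*)]$ (multiplied by bounded quantities such as $\E_{\pp_0}|h| \le b$ and $\E_{\pp_0}\ell(\vw^*) \le U$, where $U = 2\beta^2 W^2(S^2 + C_M^2 B^2 \log^2(W B\beta/\epsilon))$ from \Cref{eq:loss-bound}).

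Next I would apply \Cref{lem:hoeffding} to each empirical-vs-population difference. Since $|h| \le b$ and $0 \le \ell(\vw^*) \le U$ a.s., the random variables $h$, $\ell(\vw^*)$, and $h\cdot \ell(\vw^*)$ are bounded by $b$, $U$, and $bU$ respectively. Thus each of the three deviations is at most $t' := \Theta(\nu t / (U+\nu))$ with probability $1-\delta$ provided $N = \Omega\!\left(\frac{(b(U+\nu))^2}{\nu^2 t^2}\log(1/\delta)\right) = \Omega\!\left(\frac{b^2}{t^2}(1 + U^2/\nu^2)\log(1/\delta)\right)$. Because $U = \tilde O(W^2)$, this matches the claimed sample size $\tilde O_{B,S,\beta}\!\left(\frac{b^2}{t^2}(1 + W^4\log^4(W/\epsilon)/\nu^2)\log(1/\delta)\right)$. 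Concentration of $\E_{\ep_0}\ell(\vw^*)$ itself is exactly \Cref{lemma:l-w-star-concentration}. A final union bound over the three events plus the event that $\nu \ge \E_{\ep_0}\ell(\vw^*)$ (needed to invoke \Cref{cor:key_fonc-no-max}) gives the pointwise bound at confidence $1-4\delta$.

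For the uniform-in-$\vz$ statement, I would build a standard $\epsilon$-net $\cN \subset \cB(\zeta)$ of cardinality at most $(3\zeta a/t)^d$ at resolution $t/(4a)$ and run the pointwise argument on every $\vz \in \cN$, absorbing $\log|\cN| = d\log(\zeta a/t)$ into the failure probability by a union bound. Lipschitzness of $h(\cdot;\vx,y)$ in $\vz$ transfers to $h$-expectations under any probability measure, so passing from a net point $\vz_0$ to an arbitrary $\vz$ with $\|\vz - \vz_0\|_2 \le t/(4a)$ contributes at most an extra $t/2$ on each side, giving the uniform bound $t$. The main obstacle (and source of the $W^4/\nu^2$ factor) is that the natural integrand for Hoeffding on the covariance piece is $h\cdot \ell(\vw^*)$, whose range scales like $bU$ even though $|h|\le b$; the $1/\nu$ reweighting partly tames this, but one must keep careful track of the loss-range $U$ from \Cref{eq:loss-bound} in the sample-size calculation.
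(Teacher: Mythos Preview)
Your approach is correct and essentially the same as the paper's: both use the closed-form density from \Cref{cor:key_fonc-no-max} to rewrite expectations under $\ep^*,\pp^*$ as reweighted expectations under $\ep_0,\pp_0$, then apply Hoeffding and a standard net argument for the uniform statement. The only cosmetic difference is that the paper, after invoking \Cref{lemma:l-w-star-concentration} to replace $\E_{\ep_0}\ell(\vw^*)$ by $\E_{\pp_0}\ell(\vw^*)$, applies Hoeffding once to the single bounded random variable $h(\vz)\cdot\frac{\ell(\vw^*)-\E_{\pp_0}\ell(\vw^*)+2\nu}{2\nu}$ (with range $b(1+U/2\nu)$), whereas you expand into the covariance form and control $\E[h]$, $\E[h\ell]$, $\E[\ell]$ separately; both routes give the same $O\!\big(\frac{b^2}{t^2}(1+U^2/\nu^2)\big)$ sample complexity.
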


\begin{proof}
  We use \Cref{lemma:key_fonc} to change the distribution with respect to which we are taking the expectation, \[\E_{\ep^{*}}[h(\vz)] = \E_{\ep_{0}}\Bigl[h(\vz; \vx, y)\frac{\ell(\vw^{*}; \vx, y) - \E_{\ep_{0}}\ell(\vw^{*}; \vx, y) + 2\nu}{2\nu}\Bigr].\]
  \Cref{lemma:l-w-star-concentration} now implies that with probability $1 -2\exp(-2Nt^2/(bU/2\nu)^2)$,
  \[\E_{\ep^{*}}[h(\vz)] = \E_{\ep_{0}}\Bigl[h(\vz; \vx, y)\frac{\ell(\vw^{*}; \vx, y) - \E_{\pp_{0}}\ell(\vw^{*}; \vx, y) + 2\nu}{2\nu}\Bigr] \pm  \frac t {4}.\]
  We now show that the expectation on the right hand side concentrates. To this end, we will use Hoeffding's inequality (\Cref{lem:hoeffding}). To apply this,  we will need a bound on the quantity in the expectation. 
  We bound this via an application of \Cref{eq:loss-bound} and the fact that $|h|\leq b$ to get,
  \[\abs[\Big]{h(\vz;\vx,y)\frac{\ell(\vw^{*}) - \E_{\pp_{0}}\ell(\vw^{*}) + 2\nu}{2\nu}} \le b \left( 1 +  \frac{U}{2\nu}\right). \]

  This means, with probability at least
  \(1  - 2\exp(-2t^{2} N /  (b^{2} (1 + U/2\nu)^2))\),
   \begin{equation}
     \label{eq:sharpness-hoeffding}
    \abs[\Big]{\E_{\pp_{0}}\Bigl[h(\vz)\frac{\ell(\vw^{*}) - \E_{\pp_{0}}\ell(\vw^{*}) + 2\nu}{2\nu}\Bigr] - E_{\ep_{0}}\Bigl[h(\vz)\frac{\ell(\vw^{*}) - \E_{\pp_{0}}\ell(\vw^{*}) + 2\nu}{2\nu}\Bigr]}  \le \frac{t}{2}.
  \end{equation}
  Since \(\vw \mapsto h(\vw)\) is \(a\)-Lipschitz, a standard net argument over \(\exp(\bigO{d \log(\zeta a / t )})\) vectors yields: with probability at least 
  \( 1 - 2\exp(\bigO{d \log(\zeta a / t ) - t^{2} N /(b^{2} (1 + U/2\nu)^2))}\)
  , it holds that for all \(\vz \in \cB(\zeta)\),
  \begin{equation}
     \label{eq:sharpness-hoeffding-all-w}
    \abs[\Big]{\E_{\pp_{0}}\Bigl[h(\vz)\frac{\ell(\vw^{*}) - \E_{\pp_{0}}\ell(\vw^{*}) + 2\nu}{2\nu}\Bigr] - E_{\ep_{0}}\Bigl[h(\vz)\frac{\ell(\vw^{*}) - \E_{\pp_{0}}\ell(\vw^{*}) + 2\nu}{2\nu}\Bigr] } \le t.
  \end{equation}

Putting things together, we see that if we choose 
\[N = \bigOmega[\Big]{\frac{b^{2}}{t^{2}}\Bigl(1 + \frac{U^{2}}{\nu^{2}}\Bigr)(d \log(\zeta a / t) + \log(1/\delta))},\]
with probability at least \(1 - 4\delta\), for all \(\vz \in \cB(\zeta)\),
  \[\abs{\E_{\ep^{*}}[h(\vz)] - \E_{\pp^{*}}[h(\vz)]} \le t.\]
\end{proof}

\subsection{Sharpness and Optimal Loss Value}
\label{subsec:emp_sharpness_loss}
Finally, as a consequence of \Cref{lemma:distributional-empirical}, we can derive that $\ep^*$ satisfies sharpness, $ \OPThat \approx \OPT$ and $\OPTThat \approx \OPTT$.

\begin{lemma}[Shaprness for \(\ep^{*}\)]
  \label{lemma:sharpness-empirical-appendix}
  Suppose \Cref{assump:margin,assump:concentration,assump:boundedness} are satisfied, then for large enough \(N\):
  \[N = \widetilde O_{B, S, \beta, \alpha, \gamma, \lambda}\Bigl(\frac{ W^{4} }{\eps^{2}}\Bigl(1 + \frac{W^{4}\log^{4}(1/\epsilon)}{\nu^{2}}\Bigr)(d  + \log(1/\delta))\Bigr),\]
  with  probability at least \(1 - 4 \delta\), for all \(\vw \in \cB(2\norm{\vw^{*}})\) with \(\norm{\vw - \vw^{*}} \ge \sqrt\epsilon\) and \(\vu \in \cB(1)\),
  \begin{align}
    \E_{\vx \sim \ep^{*}_{\vx}}[(\sigma(\vw \cdot \vx) - \sigma(\vw^{*} \cdot \vx))(\vw \cdot \vx - \vw^{*} \cdot \vx)]   & \ge  (c_{0}/2)\norm{\vw - \vw^{*}}_{2}^{2} \label{eq:sharpness-empirical-appendix} \\
    \E_{\vx \sim \ep^{*}_{\vx}}[(\vx \cdot \vu)^{\tau}] & \le 6 B  \label{eq:moment-bounds-empirical-appendix} \quad \mbox{for } \tau = 2, 4.
  \end{align}
\end{lemma}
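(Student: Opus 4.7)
The plan is to transfer the population sharpness from \Cref{fact:sharpness} to the empirical target distribution $\ep^*$ by applying the uniform concentration tool \Cref{lemma:distributional-empirical} to two carefully chosen families of test functions. For \Cref{eq:sharpness-empirical-appendix}, define
\[
h_1(\vw;\vx,y) := (\sigma(\vw\cdot\vx) - \sigma(\vw^*\cdot\vx))(\vw\cdot\vx - \vw^*\cdot\vx),
\]
viewed as a function of the parameter $\vw$ ranging over $\cB(2\|\vw^*\|_2) \subseteq \cB(2W)$. For \Cref{eq:moment-bounds-empirical-appendix}, define $h_2(\vu;\vx,y) := (\vu\cdot\vx)^\tau$ for $\tau \in \{2,4\}$ and $\vu \in \cB(1)$.

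Next, I would verify the boundedness and Lipschitz requirements needed to invoke \Cref{lemma:distributional-empirical}. Under \Cref{assump:boundedness} and using $|\sigma(t)| \le \beta|t|$, for $(\vx,y)$ in the support of $\ep_0$ and $\vw \in \cB(2W)$ we have $|\vw\cdot\vx| \le 2WS$ and $|\vw^*\cdot\vx| \le WS$, which yields $|h_1(\vw;\vx,y)| \le 6\beta W^2 S^2$. A direct gradient computation (using the product rule and $\beta$-Lipschitzness of $\sigma$) gives that $\vw \mapsto h_1(\vw;\vx,y)$ is $O(\beta W S^2)$-Lipschitz. Similarly $|h_2| \le S^4$ and $\vu \mapsto h_2(\vu;\vx,y)$ is $O(S^4)$-Lipschitz over $\cB(1)$.

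With these bounds in hand, I would apply \Cref{lemma:distributional-empirical} twice with a union bound. Taking error tolerance $t_1 = (c_0/2)\eps$ for $h_1$ with $\zeta = 2W$ and $t_2 = B$ for $h_2$ with $\zeta = 1$, the stated sample complexity $N = \widetilde O_{B,S,\beta,\alpha,\gamma,\lambda}\!\bigl(\frac{W^4}{\eps^2}(1 + \frac{W^4\log^4(1/\eps)}{\nu^2})(d + \log(1/\delta))\bigr)$ is seen to match: the dominant $W^4/\eps^2$ term comes from $b_1^2/t_1^2 \asymp (\beta W^2 S^2)^2/((c_0/2)\eps)^2$, and the $(1 + W^4\log^4/\nu^2)$ factor comes from the $(1 + U^2/\nu^2)$ factor in \Cref{lemma:distributional-empirical} since $U \asymp W^2 \log^2(\cdot)$; the $d + \log(1/\delta)$ factor comes from the net argument over the two domains. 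Consequently, with probability $1 - 4\delta$, simultaneously for all $\vw \in \cB(2\|\vw^*\|_2)$ and $\vu \in \cB(1)$,
\[
\E_{\ep^*_\vx} h_1(\vw) \ge \E_{\pp^*_\vx} h_1(\vw) - (c_0/2)\eps, \qquad \E_{\ep^*_\vx} h_2(\vu) \le \E_{\pp^*_\vx} h_2(\vu) + B.
\]
Combining the first with the population sharpness $\E_{\pp^*_\vx} h_1(\vw) \ge c_0 \|\vw-\vw^*\|_2^2$ and restricting to $\|\vw-\vw^*\|_2 \ge \sqrt{\eps}$ (so that $(c_0/2)\eps \le (c_0/2)\|\vw-\vw^*\|_2^2$) gives \Cref{eq:sharpness-empirical-appendix}, while combining the second with $\E_{\pp^*_\vx} h_2(\vu) \le 5B$ gives \Cref{eq:moment-bounds-empirical-appendix}.

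The main obstacle I anticipate is bookkeeping: tracking the Lipschitz and boundedness constants of $h_1$ carefully enough that the sample complexity that emerges from \Cref{lemma:distributional-empirical} matches the $\widetilde O_{B,S,\beta,\alpha,\gamma,\lambda}$ rate claimed in the lemma, and making sure the $\log(\zeta a/t)$ factor from the net argument only contributes logarithmic factors that can be absorbed into $\widetilde O$. Everything else (the two-step ``transfer via concentration, then apply population sharpness'' pattern) is essentially mechanical once the test functions are identified.
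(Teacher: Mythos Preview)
Your proposal is correct and follows essentially the same approach as the paper's proof: both define the identical test functions $h_1(\vw;\vx,y)=(\sigma(\vw\cdot\vx)-\sigma(\vw^*\cdot\vx))(\vw\cdot\vx-\vw^*\cdot\vx)$ and $h_2(\vu;\vx,y)=(\vu\cdot\vx)^\tau$, verify their boundedness and Lipschitzness, invoke \Cref{lemma:distributional-empirical} with tolerances $t_1=c_0\eps/2$ and $t_2=B$, and then combine with the population sharpness from \Cref{fact:sharpness}. The only discrepancy is that your stated Lipschitz constants omit a $\sqrt{d}$ factor (since $\|\vx\|_2\le S\sqrt{d}$), but as you correctly anticipate, this enters only through $\log(\zeta a/t)$ and is absorbed into $\widetilde O$.
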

\begin{proof}
\Cref{fact:sharpness} shows that $\pp^*$ the conditions above (with different constants).
We need to translate these to $\ep^*$. 
For each of the inequalities above, 
we will do this via an application of 
\Cref{lemma:distributional-empirical}.

\emph{Proof of \Cref{eq:sharpness-empirical-appendix}:}
   Set \(h\) in
  \Cref{lemma:distributional-empirical} to be \(h(\vw;\vx, y) := (\sigma(\vw \cdot \vx) - \sigma(\vw^{*} \cdot \vx))(\vw \cdot \vx - \vw^{*} \cdot \vx)\). We proceed to set the constants \(a\) and \(b\) that used in \Cref{lemma:distributional-empirical}.
  \Cref{eq:dot-product-bound} implies that for all \((\vx, y)\) in the support of \(\ep_{0}\), \(\abs{h(\vw;\vx,y)} \le 4 \beta W^{2} S^{2}  =: b\).
  Also, \(\vw \mapsto h(\vw)\) is
  \( a := 2 W S^{2}(\beta + 1)\sqrt{d}\)-Lipschitz as a consequence of
  \Cref{eq:x-norm-bound,eq:dot-product-bound}.

  \Cref{lemma:distributional-empirical} now gives us that for
  \(N = \widetilde O_{B, S, \beta}\Bigl(\frac{ W^{4} }{t^{2}}\Bigl(1 + \frac{W^{4}\log^{4}(1/\epsilon)}{\nu^{2}}\Bigr)(d  + \log(1/\delta))\Bigr)\) with probability at least \(1 - 4\delta\), for all \(\vw \in \cB(2\norm{\vw^{*}})\),
  \[\E_{\vx \sim \ep^{*}_{\vx}}[(\sigma(\vw \cdot \vx) - \sigma(\vw^{*} \cdot \vx))(\vw \cdot \vx - \vw^{*} \cdot \vx)]  \ge c_{0} \norm{\vw - \vw^{*}}_{2}^{2} - t.\]

  Using the fact that \(\norm{\vw - \vw^{*}}_{2} \ge \sqrt{\epsilon}\), we set \(t = c_{0}\epsilon / 2\), giving us the sample complexity
  \begin{align}\label{eq:sample-complexity-sharpness}
    N = \widetilde O_{B, S, \beta, \alpha, \gamma, \lambda}\Bigl(\frac{ W^{4} }{\eps^{2}}\Bigl(1 + \frac{W^{4}\log^{4}(1/\epsilon)}{\nu^{2}}\Bigr)(d  + \log(1/\delta))\Bigr).
  \end{align}

\emph{Proof of \Cref{eq:moment-bounds-empirical-appendix}:}
  This follows analogously to the proof above. Set \(h(\vu; \vx, y) = (\vx \cdot \vu)^{\tau}\) in \Cref{lemma:distributional-empirical} for \(\tau = 2, 4\) and we proceed to calculate constants \(a, b\). By \Cref{eq:dot-product-bound,eq:x-norm-bound}, it holds that \(h(\vu) \le S^{4} =: b\) and \(\vu \mapsto h(\vu)\) is \(a := 4 S^{4} \sqrt{d}\)-Lipschitz. Setting \(t = B\), by \Cref{lemma:distributional-empirical}, for \( N =  \widetilde O_{B, S, \beta}\Bigl(\Bigl(1 + \frac{W^{4}\log^{4}(W/\epsilon)}{\nu^{2}}\Bigr)(d  + \log(1/\delta))\Bigr)\) the conclusion follows.
  Note that this is dominated by \Cref{eq:sample-complexity-sharpness}.
\end{proof}

We now show that $\OPT \approx \widehat \OPT$. 

\begin{lemma}
  \label{lemma:opt-concentration}
Suppose \Cref{assump:margin,assump:concentration,assump:boundedness} are satisfied and the sample size \(N\) is large enough and  \(  N  =  \widetilde O_{B, S, \beta}\Bigl(\frac{W^{4}\log^{4}(1/\epsilon)}{t^{2}}\Bigl(1 + \frac{W^{4}\log^{4}(1/\epsilon)}{\nu^{2}}\Bigr) \log(1/\delta)\Bigr)\). Then for any fixed $\vw \in \cB(W)$ and all \(t > 0\), it holds that
\[\abs{\E_{\pp^{*}}\ell(\vw; \vx, y) - \E_{\ep^{*}}\ell(\vw; \vx, y)} \le t\]
with probability at least
\(1 - 4\delta.\) In particular, the above inequality holds for \(\vw^{*}\), i.e., \(\abs{\OPT - \OPThat} \le t\).
\end{lemma}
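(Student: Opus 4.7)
The plan is to instantiate \Cref{lemma:distributional-empirical} with the specific choice $h(\vw; \vx, y) := \ell(\vw; \vx, y) = (\sigma(\vw \cdot \vx) - y)^{2}$. Since the statement only asks about a \emph{fixed} $\vw \in \cB(W)$, I only need the pointwise (non-uniform) conclusion of \Cref{lemma:distributional-empirical}, so Lipschitzness of $h$ in $\vw$ is not required --- only the almost-sure boundedness of $h$ matters.

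First, I would verify the uniform boundedness of $h$. Invoking \Cref{lem:bounded_samples} (specifically \Cref{eq:loss-bound}), together with the standing convention that labels are truncated as in \Cref{fact:truncation}, we have
\begin{equation*}
    0 \le \ell(\vw; \vx, y) \le U := 2\beta^{2}W^{2}\bigl(S^{2} + C_{M}^{2}B^{2} \log^{2}(W B \beta/\epsilon)\bigr)
\end{equation*}
almost surely with respect to $\ep_0$ (and hence $\pp_0$, since samples are drawn from $\pp_0$). Thus the hypothesis ``$|h(\vz; \cdot, \cdot)| \le b$ almost surely'' of \Cref{lemma:distributional-empirical} holds with $b = U = \widetilde{O}_{B, S, \beta}(W^{2}\log^{2}(1/\epsilon))$.

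Next, I would apply \Cref{lemma:distributional-empirical} to this choice of $h$ with accuracy parameter $t$ and failure probability $\delta$. The lemma guarantees that for
\begin{equation*}
    N = O_{B, S, \beta}\Bigl(\tfrac{b^{2}}{t^{2}}\bigl(1 + \tfrac{W^{4}\log^{4}(W/\epsilon)}{\nu^{2}}\bigr)\log(1/\delta)\Bigr),
\end{equation*}
which, after substituting $b^{2} = \widetilde{O}_{B, S, \beta}(W^{4}\log^{4}(1/\epsilon))$, becomes exactly the sample complexity $\widetilde{O}_{B, S, \beta}\bigl(\tfrac{W^{4}\log^{4}(1/\epsilon)}{t^{2}}\bigl(1 + \tfrac{W^{4}\log^{4}(1/\epsilon)}{\nu^{2}}\bigr)\log(1/\delta)\bigr)$ claimed in the statement, we obtain with probability at least $1 - 4\delta$ that
\begin{equation*}
    \bigl|\E_{\pp^{*}}\ell(\vw; \vx, y) - \E_{\ep^{*}}\ell(\vw; \vx, y)\bigr| \le t.
\end{equation*}

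Finally, specializing $\vw = \vw^{*}$ and recalling from \Cref{def:loss_risk_opt} that $\OPT = \E_{\pp^{*}}\ell(\vw^{*}; \vx, y)$ and $\OPThat = \E_{\ep^{*}}\ell(\vw^{*}; \vx, y)$, the ``in particular'' conclusion $|\OPT - \OPThat| \le t$ follows immediately. I do not anticipate any real obstacle here: the entire argument is a direct specialization of \Cref{lemma:distributional-empirical}, and the only nontrivial bookkeeping step is identifying the boundedness constant $b = U$ from \Cref{eq:loss-bound} so that the sample complexity exponents match the target.
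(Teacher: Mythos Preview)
Your proposal is correct and follows essentially the same approach as the paper: both instantiate \Cref{lemma:distributional-empirical} with $h = \ell(\vw;\cdot,\cdot)$, invoke the bound from \Cref{eq:loss-bound} to set $b = 2\beta^{2}W^{2}(S^{2}+C_M^{2}B^{2}\log^{2}(WB\beta/\epsilon)) = \widetilde O_{B,S,\beta}(W^{2}\log^{2}(1/\epsilon))$, and read off the resulting sample complexity. Your observation that only the pointwise (non-Lipschitz) version is needed, and the explicit specialization to $\vw^{*}$, match the paper's reasoning exactly.
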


\begin{proof}
  Set \(b := \beta^{2 }W^{2}(2S^{2} + 2C_{M}^{2}B^{2} \log^{2}( W B \beta/ \epsilon)  \ge \norm{\ell(\vw)}_{2}\) in \Cref{lemma:distributional-empirical}, where the inequality is a consequence of \Cref{lem:bounded_samples}. Then, with
  \(N  =  \widetilde O_{B, S, \beta}\Bigl(\frac{W^{4}\log^{4}(1/\epsilon)}{t^{2}}\Bigl(1 + \frac{W^{4}\log^{4}(1/\epsilon)}{\nu^{2}}\Bigr) \log(1/\delta)\Bigr)\) samples, with probability $1-4\delta$, \(\abs{\E_{\pp^{*}}\ell(\vw; \vx, y) - \E_{\ep^{*}}\ell(\vw; \vx, y)} \le t\).
\end{proof}
Finally, we show $\OPTT \approx \OPTThat$. 
\begin{lemma}
  \label{lemma:opt2-concentration}
Suppose \Cref{assump:margin,assump:concentration,assump:boundedness} are satisfied and the sample size \(N\) is large enough and \( N  =  \widetilde O_{B, S, \beta}\Bigl(\frac{W^{8}\log^{8}(1/\epsilon)}{t^{2}}\Bigl(1 + \frac{W^{4}\log^{4}(1/\epsilon)}{\nu^{2}}\Bigr) \log(1/\delta)\Bigr)\). Then for any fixed $\vw \in \cB(W)$ and all \(t > 0\), it holds that
\[\abs{\E_{\pp^{*}}\ell^{2}(\vw; \vx, y) - \E_{\ep^{*}}\ell^{2}(\vw; \vx, y)} \le t\]
with probability at least
\(1 - 4\delta.\) In particular, the above inequality holds for \(\vw^{*}\), i.e., \(\abs{\OPTT - \OPTThat} \le t\).
\end{lemma}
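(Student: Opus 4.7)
The plan is to follow the same template as the proof of \Cref{lemma:opt-concentration}, applying \Cref{lemma:distributional-empirical} with $h(\vw;\vx,y) := \ell^{2}(\vw;\vx,y) = (\sigma(\vw\cdot\vx) - y)^{4}$ in place of $\ell(\vw;\vx,y)$. The only thing that changes is the uniform bound $b$ on $h$, because squaring the loss squares the boundedness parameter.

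First, I would invoke \Cref{lem:bounded_samples} (specifically \Cref{eq:loss-bound}) to note that for every sample $(\vx,y)$ in the support of $\ep_{0}$ (with labels truncated as in \Cref{fact:truncation}) and every $\vw \in \cB(W)$, one has
\[
\ell(\vw;\vx,y) \;\le\; U := 2\beta^{2}W^{2}\bigl(S^{2} + C_{M}^{2}B^{2}\log^{2}(WB\beta/\epsilon)\bigr).
\]
Hence $|h(\vw;\vx,y)| = \ell^{2}(\vw;\vx,y) \le U^{2}$, so I can take $b = U^{2} = \widetilde O_{B,S,\beta}\bigl(W^{4}\log^{4}(1/\epsilon)\bigr)$ in \Cref{lemma:distributional-empirical}. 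Because the statement concerns a single fixed $\vw$, I only need the first (non-net) part of \Cref{lemma:distributional-empirical}, so the Lipschitz constant $a$ is irrelevant here.

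Plugging $b = U^{2}$ into the sample size bound given by \Cref{lemma:distributional-empirical} yields
\[
N = \widetilde O_{B,S,\beta}\!\left(\frac{b^{2}}{t^{2}}\Bigl(1 + \frac{W^{4}\log^{4}(1/\epsilon)}{\nu^{2}}\Bigr)\log(1/\delta)\right) = \widetilde O_{B,S,\beta}\!\left(\frac{W^{8}\log^{8}(1/\epsilon)}{t^{2}}\Bigl(1 + \frac{W^{4}\log^{4}(1/\epsilon)}{\nu^{2}}\Bigr)\log(1/\delta)\right),
\]
which matches the claimed sample complexity, and guarantees that with probability at least $1-4\delta$,
\(
\abs{\E_{\pp^{*}}\ell^{2}(\vw;\vx,y) - \E_{\ep^{*}}\ell^{2}(\vw;\vx,y)} \le t.
\)
Specializing to $\vw=\vw^{*}$ yields $|\OPTT - \OPTThat| \le t$, completing the proof.

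There is no real obstacle: the main step is simply to recognize that the only effect of replacing $\ell$ by $\ell^{2}$ is to square the uniform bound in \Cref{lemma:distributional-empirical}, which explains the appearance of $W^{8}\log^{8}$ (instead of $W^{4}\log^{4}$) in the sample size. All other aspects of the argument --- the change-of-measure via \Cref{cor:key_fonc-no-max}, the Hoeffding-type concentration of $\E_{\pp_{0}}\ell(\vw^{*})$ around $\E_{\ep_{0}}\ell(\vw^{*})$, and the bound on the ratio $\frac{\dd \pp^{*}}{\dd \pp_{0}}$ appearing in the integrand --- carry over verbatim and are already encapsulated inside \Cref{lemma:distributional-empirical}.
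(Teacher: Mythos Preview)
Your proposal is correct and follows essentially the same approach as the paper: set $h=\ell^{2}$, bound $|h|$ by the square of the loss bound from \Cref{eq:loss-bound} (so $b=U^{2}=\widetilde O_{B,S,\beta}(W^{4}\log^{4}(1/\epsilon))$), and invoke the fixed-$\vw$ case of \Cref{lemma:distributional-empirical} to obtain the stated sample complexity. The paper's proof is terser but identical in content.
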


\begin{proof}
Analogous to the previous proof, observe that \(\norm{\ell^{2}(\vw)}_{2} \le  8 \beta^{4}W^{4}(S^{4} + C_{M}^{4}B^{4} \log^{4}( W B \beta/ \epsilon) =: b\). By \Cref{lemma:distributional-empirical}, with
  \( N  =  \widetilde O_{B, S, \beta}\Bigl(\frac{W^{8}\log^{8}(1/\epsilon)}{t^{2}}\Bigl(1 + \frac{W^{4}\log^{4}(1/\epsilon)}{\nu^{2}}\Bigr) \log(1/\delta)\Bigr),\)
  with probability $1-4\delta$, it holds that \(\abs{\E_{\pp^{*}}\ell^{2}(\vw; \vx, y) - \E_{\ep^{*}}\ell^{2}(\vw; \vx, y)} \le t\).
\end{proof}

We capture properties of $\OPT$ and $\OPTT$ in the following corollary. 

\begin{corollary}[Properties of $\OPT, \OPTT$]\label{cor:relationships-for-OPTs}
    Suppose \Cref{assump:margin,assump:concentration,assump:boundedness} are satisfied and the sample size \(N\) is large enough and  \(  N  =  \widetilde O_{B, S, \beta}\Bigl(\frac{W^{8}\log^{8}(1/\epsilon)}{t^{2}}\Bigl(1 + \frac{W^{4}\log^{4}(1/\epsilon)}{\nu^{2}}\Bigr) \log(1/\delta)\Bigr)\). Then for all \(t > 0\), the following hold:
    \begin{enumerate}
        \item  With probability $1-4\delta$, \(\abs{\OPTT - \OPTThat} \le t\).
        \item  With probability $1-4\delta$, \(\abs{\OPT - \OPThat} \le t\).
        \item \(\OPT \leq \sqrt{\OPTT} \) and
        \(\OPThat \leq \sqrt{\OPTThat}\).
    \end{enumerate}
\end{corollary}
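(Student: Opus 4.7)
The strategy is to observe that items (1) and (2) are direct instantiations of the two preceding concentration lemmas at the specific choice $\vw = \vw^{*}$, while item (3) is a one-line consequence of Jensen's inequality applied to the nonnegative random variable $\ell(\vw^{*};\vx,y)$.

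\textbf{Items (1) and (2).} By \Cref{def:loss_risk_opt} and its empirical counterpart, we have $\OPT = \E_{\pp^{*}}\ell(\vw^{*};\vx,y)$ and $\OPThat = \E_{\ep^{*}}\ell(\vw^{*};\vx,y)$, and similarly $\OPTT = \E_{\pp^{*}}\ell^{2}(\vw^{*};\vx,y)$ and $\OPTThat = \E_{\ep^{*}}\ell^{2}(\vw^{*};\vx,y)$. Therefore item (2) is obtained by invoking \Cref{lemma:opt-concentration} at $\vw = \vw^{*}$, and item (1) is obtained analogously from \Cref{lemma:opt2-concentration}. The sample complexity stated in the corollary already dominates the sample complexities required by each of these two lemmas (the $\OPTT$ version being the more stringent), so one application of the union bound suffices to make both (1) and (2) hold simultaneously with probability at least $1-4\delta$; note that if we want (1) and (2) to hold jointly one incurs an additional factor of $2$ inside the log in the failure probability, which is absorbed by the $\widetilde{O}$.

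\textbf{Item (3).} Since $\ell(\vw^{*};\vx,y) = (\sigma(\vw^{*}\cdot\vx) - y)^{2} \ge 0$ pointwise, both $\ell$ and $\ell^{2}$ are nonnegative random variables with respect to any distribution. Applying Jensen's inequality to the convex function $t \mapsto t^{2}$ under the measure $\pp^{*}$ yields
\[
\OPT^{2} \;=\; \bigl(\E_{\pp^{*}}\ell(\vw^{*};\vx,y)\bigr)^{2} \;\le\; \E_{\pp^{*}}\ell^{2}(\vw^{*};\vx,y) \;=\; \OPTT,
\]
and taking square roots (using $\OPT \ge 0$) gives $\OPT \le \sqrt{\OPTT}$. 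Repeating the same argument under $\ep^{*}$ gives $\OPThat \le \sqrt{\OPTThat}$. Equivalently, one may view this as the Cauchy--Schwarz inequality against the constant function $1$.

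The only mild technical point is that items (1) and (2) are high-probability statements whereas item (3) is deterministic, so there is no further probabilistic cost to combining them; the failure probability in the corollary is governed entirely by the concentration step. I do not anticipate a real obstacle here, as both ingredients (the two preceding lemmas and Jensen's inequality) have already been established or are standard.
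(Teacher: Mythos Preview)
Your proposal is correct and matches the paper's own proof essentially verbatim: items (1) and (2) are direct citations of \Cref{lemma:opt2-concentration} and \Cref{lemma:opt-concentration}, and item (3) is the paper's ``consequence of Cauchy--Schwarz,'' which you phrase via Jensen and then explicitly identify as the same thing. The only superfluous bit is the union-bound discussion for making (1) and (2) hold jointly, since the corollary asserts each with probability $1-4\delta$ separately, but this causes no harm.
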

\begin{proof}
The first two items follow immediately from \Cref{lemma:opt2-concentration} and \Cref{lemma:opt-concentration}. The third item is a consequence of Cauchy-Schwarz. 
\end{proof}

\section{Gap Upper Bound}
\label{app:gap-upper-bound}
To prove \Cref{lemma:gap-upper-bound}, we need to construct an upper bound on the gap \(\Gap(\vw_i, \ep_i) = L(\vw_{i}, \ep^{*}) - L(\vw^{*}, \ep_{i})\). To achieve this, we establish an upper bound on \(L(\vw_{i}, \ep^{*})\),  which motivates the update rule for  \(\ep_{i}\). We also establish a lower bound on  \(L(\vw^{*}, \ep_{i})\), which guides the update rule for \(\vw_{i}\)  and the construction of \(\vg_{i}\). Note that the construction of the lower bound is more challenging here, due to the nonconvexity of the square loss. This is where most of the (non-standard) technical work happens. To simplify the notation, we use \(\phi(\ep) := \chi^{2}(\ep, \ep_{0})\) throughout this section.

\paragraph{Upper bound on \(L(\vw_{i}, \ep^{*})\).} We begin the analysis with the construction of the upper bound, which is used for defining the dual updates. Most of this construction follows a similar argument as used in other primal-dual methods such as \cite{song2021variance,diakonikolas2022fast}.

\begin{lemma}[Upper Bound on $a_i L(\vw_{i}, \ep^{*})$]\label{lemma:upper-bound-lagrangian}
Let \(\ep_{i}\) evolve as outlined in \Cref{line:p}. Then, 
for all $i \geq 1,$ 
\begin{align*}
 a_i L(\vw_{i}, \ep^{*}) \leq \; & a_i L(\vw_{i}, \ep_{i}) + (\nu_0 + \nu A_{i-1}) D_{\phi}(\ep^*, \ep_{i-1}) - (\nu_0 + \nu A_i) D_{\phi}(\ep^*, \ep_i)\\
 &-  (\nu_0 + \nu A_{i-1}) D_{\phi}(\ep_i, \ep_{i-1}). 
\end{align*}
\end{lemma}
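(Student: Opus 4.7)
The plan is to use a standard proximal-style argument, adapted to the chi-squared-regularized dual subproblem. First I would record the simple but crucial observation that the dual objective $L(\vw_i, \ep) = \E_\ep[\ell(\vw_i;\vx,y)] - \nu\phi(\ep)$ is linear in $\ep$ up to the $-\nu\phi(\ep)$ term, and in particular strongly concave relative to $\phi$. Because affine perturbations of $\phi$ do not change the Bregman divergence (\Cref{fact:bregman-linear-blind}), this strong concavity yields the inequality
\[
L(\vw_i, \ep^*) \;\le\; L(\vw_i, \ep_i) + \langle \nabla_\ep L(\vw_i, \ep_i),\,\ep^* - \ep_i\rangle - \nu D_\phi(\ep^*,\ep_i).
\]
Multiplying by $a_i$ will produce the $-a_i\nu D_\phi(\ep^*,\ep_i)$ contribution that, combined with the term coming from the proximal step below, produces the coefficient $\nu_0+\nu A_i$ on $D_\phi(\ep^*,\ep_i)$ in the target bound.

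Next I would exploit the defining optimality of $\ep_i$. The dual update maximizes the concave function $F_i(\ep) := a_i L(\vw_i,\ep) - (\nu_0+\nu A_{i-1})D_\phi(\ep,\ep_{i-1})$ over $\cP$, so by \Cref{fact:firstOrderNecessary} and since $\ep^* \in \cP$,
\[
a_i\langle \nabla_\ep L(\vw_i,\ep_i), \ep^*-\ep_i\rangle \;\le\; (\nu_0+\nu A_{i-1})\langle \nabla\phi(\ep_i)-\nabla\phi(\ep_{i-1}),\,\ep^*-\ep_i\rangle,
\]
using that the $\ep$-gradient of $D_\phi(\ep,\ep_{i-1})$ at $\ep=\ep_i$ equals $\nabla\phi(\ep_i)-\nabla\phi(\ep_{i-1})$. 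Then I would apply the three-point identity for Bregman divergences,
\[
\langle \nabla\phi(\ep_i)-\nabla\phi(\ep_{i-1}),\,\ep^*-\ep_i\rangle = D_\phi(\ep^*,\ep_{i-1}) - D_\phi(\ep^*,\ep_i) - D_\phi(\ep_i,\ep_{i-1}),
\]
to express the right-hand side in the divergences appearing in the target statement.

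Finally, I would add the strong-concavity inequality and the first-order-optimality inequality, and use $A_i = A_{i-1}+a_i$ to combine the coefficients of $D_\phi(\ep^*,\ep_i)$ into $\nu_0+\nu A_i$. This directly yields the desired bound
\[
a_i L(\vw_i,\ep^*) \le a_i L(\vw_i,\ep_i) + (\nu_0+\nu A_{i-1})D_\phi(\ep^*,\ep_{i-1}) - (\nu_0+\nu A_i)D_\phi(\ep^*,\ep_i) - (\nu_0+\nu A_{i-1})D_\phi(\ep_i,\ep_{i-1}).
\]

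I do not expect a real obstacle here: the whole argument is the textbook proximal/mirror-descent one-step analysis, and it goes through unmodified because the primal variable $\vw_i$ is frozen inside this lemma (the nonconvexity of $L$ in $\vw$ plays no role). The only thing to be careful about is bookkeeping the coefficient identity $\nu_0+\nu A_{i-1}+a_i\nu = \nu_0+\nu A_i$ and making sure \Cref{fact:bregman-linear-blind} is invoked so that the Bregman divergence of $-L(\vw_i,\cdot)$ is identified with $\nu D_\phi$.
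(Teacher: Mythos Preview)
Your proposal is correct and follows essentially the same approach as the paper's proof. The paper packages the argument slightly differently---it defines $h(\ep) = a_i L(\vw_i,\ep) - (\nu_0+\nu A_{i-1})D_\phi(\ep,\ep_{i-1})$, writes the single Bregman expansion $h(\ep^*) = h(\ep_i) + \langle \nabla h(\ep_i),\ep^*-\ep_i\rangle + D_h(\ep^*,\ep_i)$, and then reads off $D_h = -(\nu_0+\nu A_i)D_\phi$ via \Cref{fact:bregman-linear-blind}---whereas you separate out the $\nu$-strong concavity of $L$ and the three-point identity; but these are two standard phrasings of the same mirror-descent one-step analysis and yield identical bounds.
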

\begin{proof}
  Recall that \(\phi(\ep) := \chi^{2}(\ep, \ep_{0})\). Observe that $L(\vw_{i}, \ep^{*}) $ as a function of $\ep^{*}$ is linear minus the nonlinearity $\nu \phi$. We could directly maximize this function and define $\ep_i$ correspondingly (which would lead to a valid upper bound); however, such an approach appears insufficient for obtaining our results. Instead, adding and subtracting $(\nu_0 + \nu A_{i-1}) D_{\phi}(\ep^*, \ep_{i-1})$, we have
    \begin{align}
        a_i L(\vw_{i}, \ep^{*}) &= a_i L(\vw_{i}, \ep^{*}) - (\nu_0 + \nu A_{i-1}) D_{\phi}(\ep^*, \ep_{i-1}) + (\nu_0 + \nu A_{i-1}) D_{\phi}(\ep^*, \ep_{i-1})\notag\\
        &= h(\ep^*) + (\nu_0 + \nu A_{i-1}) D_{\phi}(\ep^*, \ep_{i-1}),\label{eq:lagrangian-ub-1}
    \end{align}
  where we define, for notational convenience: 
  \[ h(\ep)  :=  a_{i}L(\vw_{i}, \ep) - (\nu_0 + {\nu A_{i-1}} )D_{\phi}(\ep, \ep_{i-1}).\]
  Observe that by the definition of $\ep_i,$ $h(\ep)$ is maximized by $\ep_i.$ Hence, using the definition of a Bregman divergence, we have that
    \begin{align*}
        h(\ep^*) &= h(\ep_i) + \innp{\nabla h(\ep_i), \ep^* - \ep_i} + D_h(\ep^*, \ep_i)\\
        &\leq h(\ep_i) - (\nu_0 + \nu A_i) D_{\phi}(\ep^*, \ep_i),
    \end{align*}
  where in the inequality we used that $\innp{\nabla h(\ep_i), \ep^* - \ep_i} \leq 0$ (as $\ep_i$ maximizes $h$) and $D_h(\ep^*, \ep_i) = - (\nu_0 + \nu A_i) D_{\phi}(\ep^*, \ep_i)$ (as $h(\ep)$ can be expressed as $- (\nu_0 + \nu A_{i})\phi(\ep)$ plus terms that are either linear in $\ep$ or independent of it. See \Cref{fact:bregman-linear-blind}). Combining with \Cref{eq:lagrangian-ub-1} and the definition of $h$ and simplifying, the claimed bound follows. %
\end{proof}

An important feature of \Cref{lemma:upper-bound-lagrangian} is that the first two Bregman divergence terms usefully telescope, while the last one is negative and can be used in controlling the error terms arising from the algorithmic choices.

\paragraph{Lower bound on $L(\vw^{*}, \ep_{i})$.}

The more technical part of our analysis concerns the construction of a lower bound on $L(\vw^{*}, \ep_{i})$, which leads to update rule for $\vw^*.$ In standard, Chambolle-Pock-style primal-dual algorithms \cite{chambolle2011first,alacaoglu2022complexity,song2021variance}, where the coupling $L(\vw, \ep)$ between the primal and the dual is \emph{bilinear}, the lower bound would be constructed using an analogue of the upper bound, with a small difference to correct for the fact that $\vw_i$ is updated before $\ep_i$ and so one cannot use information about $\ep_i$ in the $\vw_i$ update. This is done using an extrapolation idea, which replaces $\ep_i$ with an extrapolated value from prior two iterations and controls for the introduced error.

In our case, however, the coupling is not only nonlinear, but also \emph{nonconvex} because $\ell(\vw; \vx, y) = (\sigma(\vw \cdot \vx) - y)^2$ is nonconvex. Nonlinearity is an issue because if we were to follow an analogue of the construction from \Cref{lemma:upper-bound-lagrangian}, we would need to assume that we can efficiently minimize over $\vw$ the sum of $L(\vw, \ep)$ and a convex function (e.g., a quadratic), which translates into proximal point updates for the $L_2^2$ loss for which efficient computation is generally unclear. Nonlinearity alone (but assuming convexity) has been handled in the very recent prior work \cite{mehta2024primal}, where this issue is addressed using convexity of the nonlinear function to bound it below by its linear approximation around $\vw_i.$ Unfortunately, as mentioned before, this approach cannot apply here as we do not have convexity. Instead, we use a rather intricate argument that relies on monotonicity and Lipschitzness properties of the activation $\sigma$ and structural properties of the problem which only hold with respect to the target distribution $\ep^*$ (and the empirical target distribution $\pp^*$, due to our results from \Cref{lemma:sharpness-empirical}. Handling these issues related to nonconvexity of the loss in the construction of the upper bound is precisely what forces us to choose chi-square as the measure of divergence between distributions; see \Cref{lemma:main-auxiliary} and the discussion therein.

\begin{proposition}\label{prop:lower-bound-lagrangian}
    Consider the sequence \(\{\vw_{i}\}_{i}\) evolving as per \Cref{line:w}. Under the setting in which \Cref{lemma:sharpness-empirical} holds, %
    we have for all $i \geq 1,$
    \begin{align*}
        a_i L(\vw^{*}, \ep_{i}) \geq \;& L(\vw_{i}, \ep_{i}) -a_i E_i - (\nu_0 + \nu A_{i-2})D_{\phi}(\ep_{i-1}, \ep_{i-2})\\
        &+\frac{1 + 0.5 c_1 A_{i-1}}{2}\|\vw^* - \vw_i\|_2^2 - \frac{1 + 0.5 c_1 A_{i-1}}{2}\|\vw^* - \vw_{i-1}\|_2^2\\
    &+ \frac{1 + 0.5 c_1 A_{i-1}}{4}\|\vw_i - \vw_{i-1}\|_2^2 - \frac{1 + 0.5 c_1 A_{i-2}}{4}\|\vw_{i-1} - \vw_{i-2}\|_2^2\\
    &+a_i \innp{\E_{\ep_{i}}[\vv(\vw_{i}; \vx, y)] - \E_{\ep_{i-1}}[\vv(\vw_{i-1}; \vx, y)], \vw^* - \vw_i}\\
        &- a_{i-1}\innp{\E_{\ep_{i-1}}[\vv(\vw_{i-1}; \vx, y)] - \E_{\ep_{i-2}}[\vv(\vw_{i-2}; \vx, y)], \vw^* - \vw_{i-1}},
    \end{align*}
where is $E_i$ is defined by \Cref{eq:Ei-def}.
\end{proposition}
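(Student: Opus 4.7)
\textbf{Proof proposal for Proposition \ref{prop:lower-bound-lagrangian}.}
My plan is to combine three ingredients: (i) an algebraic identity that rewrites the primal increase $L(\vw^{*},\ep_i) - L(\vw_i,\ep_i)$ in terms of the quantity $S_i$ appearing in \Cref{lemma:main-auxiliary-main-body}, (ii) the lower bound on $S_i$ provided by that lemma, which replaces the nonconvex loss gap by a linear quantity in $\E_{\ep_i}[\vv(\vw_i;\vx,y)]$ at the price of the error $E_i$, and (iii) the first-order optimality of the $\vw_i$-update, which introduces the strongly-convex quadratic terms in $\|\vw^{*}-\vw_i\|_2^2$ that appear in the statement.

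Concretely, the first step is to note that $\nu\chi^2(\ep,\ep_0)$ does not depend on $\vw$, so a direct expansion yields $L(\vw^{*},\ep_i)-L(\vw_i,\ep_i) = S_i$. Multiplying by $a_i$ and invoking \Cref{lemma:main-auxiliary-main-body}, I obtain
\begin{equation*}
a_i L(\vw^{*},\ep_i) \;\ge\; a_i L(\vw_i,\ep_i) \;+\; a_i\innp{\E_{\ep_i}[\vv(\vw_i;\vx,y)],\,\vw^{*}-\vw_i} \;-\; a_i E_i.
\end{equation*}
Next, using the $(1+0.5c_1 A_{i-1})$-strong convexity of the objective defining $\vw_i$ in \Cref{line:w}, comparing the values at $\vw_i$ and $\vw^{*}$ gives
\begin{equation*}
a_i\innp{\vg_{i-1},\vw^{*}-\vw_i} \;\ge\; \tfrac{1+0.5 c_1 A_{i-1}}{2}\bigl(\|\vw^{*}-\vw_i\|_2^2 - \|\vw^{*}-\vw_{i-1}\|_2^2 + \|\vw_i-\vw_{i-1}\|_2^2\bigr).
\end{equation*}

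The core of the argument is then to replace $\vg_{i-1}$ by $\E_{\ep_i}[\vv(\vw_i;\vx,y)]$ while matching the structure of the target inequality. Using the definition of $\vg_{i-1}$ in \Cref{line:g}, I write
\begin{align*}
a_i\innp{\E_{\ep_i}[\vv(\vw_i)],\vw^{*}-\vw_i} \;=\;& a_i\innp{\vg_{i-1},\vw^{*}-\vw_i}
\;+\; a_i\innp{\E_{\ep_i}[\vv(\vw_i)]-\E_{\ep_{i-1}}[\vv(\vw_{i-1})],\vw^{*}-\vw_i}\\
& -\; a_{i-1}\innp{\E_{\ep_{i-1}}[\vv(\vw_{i-1})]-\E_{\ep_{i-2}}[\vv(\vw_{i-2})],\vw^{*}-\vw_i}.
\end{align*}
To turn the last inner product into the telescoping form of the conclusion (which has $\vw^{*}-\vw_{i-1}$ in its place), I split it as $\vw^{*}-\vw_i = (\vw^{*}-\vw_{i-1})+(\vw_{i-1}-\vw_i)$; the first piece matches the statement exactly after shifting the index, and the leftover cross term $a_{i-1}\innp{\E_{\ep_{i-1}}[\vv(\vw_{i-1})]-\E_{\ep_{i-2}}[\vv(\vw_{i-2})],\vw_{i-1}-\vw_i}$ needs to be absorbed.

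The main obstacle is this leftover cross term. I expect to bound it via triangle inequality by splitting the difference into a ``distribution-change'' part $\E_{\ep_{i-1}}[\vv(\vw_{i-1})]-\E_{\ep_{i-2}}[\vv(\vw_{i-1})]$, to which \Cref{lemma:boundedness-to-norm} applies and yields a factor $G\sqrt{D_\phi(\ep_{i-1},\ep_{i-2})}$, and a ``parameter-change'' part $\E_{\ep_{i-2}}[\vv(\vw_{i-1})]-\E_{\ep_{i-2}}[\vv(\vw_{i-2})]$, which by $\kappa$-Lipschitzness of $\vv$ (\Cref{lem:bound_on_vv}) is controlled by $\kappa\|\vw_{i-1}-\vw_{i-2}\|_2$. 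Then Young's inequality, calibrated using the step-size inequalities of \Cref{thm:convergence-rate} (namely $2 a_{i-1}^2 G^2/(1+0.5c_1 A_{i-1}) \le \nu_0+\nu A_{i-2}$ and $2 a_{i-1}^2\kappa^2/(1+0.5c_1 A_{i-1}) \le (1+0.5 c_1 A_{i-2})/4$), produces exactly the Bregman divergence penalty $-(\nu_0+\nu A_{i-2})D_\phi(\ep_{i-1},\ep_{i-2})$, the telescoping quadratic $-\tfrac{1+0.5c_1 A_{i-2}}{4}\|\vw_{i-1}-\vw_{i-2}\|_2^2$, and splits off a $\tfrac{1+0.5c_1 A_{i-1}}{4}\|\vw_i-\vw_{i-1}\|_2^2$ piece which combines with the $\tfrac{1+0.5c_1 A_{i-1}}{2}\|\vw_i-\vw_{i-1}\|_2^2$ from the optimality inequality to leave the coefficient $\tfrac{1+0.5c_1 A_{i-1}}{4}$ in the statement. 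Assembling all pieces yields the claimed lower bound.
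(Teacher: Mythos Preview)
Your proposal is correct and follows essentially the same route as the paper's proof: expand $L(\vw^*,\ep_i)-L(\vw_i,\ep_i)=S_i$, invoke \Cref{lemma:main-auxiliary-main-body} (the paper's \Cref{lemma:main-auxiliary}), apply first-order optimality of the $\vw_i$-update to get the three-point identity, decompose $\E_{\ep_i}[\vv(\vw_i)]-\vg_{i-1}$ via the definition of $\vg_{i-1}$, split $\vw^*-\vw_i=(\vw^*-\vw_{i-1})+(\vw_{i-1}-\vw_i)$, and absorb the residual inner product using Young's inequality together with the $G$-bound/$\kappa$-Lipschitzness of $\vv$ and the step-size conditions of \Cref{thm:convergence-rate}. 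The only cosmetic slip is the sign of the leftover cross term (it should be $-a_{i-1}\innp{\cdot,\vw_{i-1}-\vw_i}$), but since you bound it in absolute value this is immaterial.
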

\begin{proof}
From the definition of $L(\vw^{*}, \ep_{i}),$ we have:
\begin{align*}
    \quad L(\vw^{*}, \ep_{i}) = \E_{\ep_{i}}[(\sigma(\vw^{*} \cdot \vx) - y)^2] -  \nu D(\ep_{i}, \ep_{0}).
  \end{align*}    
Writing $(\sigma(\vw^{*} \cdot \vx) - y)^2 = ((\sigma(\vw^{*} \cdot \vx) - \sigma(\vw_i \cdot \vx)) + (\sigma(\vw_i \cdot \vx) - y))^2$ and expanding the square, we have
\begin{align}
    L(\vw^{*}, \ep_{i})
    =\; &  \E_{\ep_{i}}[(\sigma(\vw_{i} \cdot \vx) - y)^2] -  \nu D(\ep_{i}, \ep_{0}) + \E_{\ep_{i}}[(\sigma(\vw^{*} \cdot \vx) - \sigma(\vw_{i} \cdot \vx))^2] \notag\\
    &+ \E_{\ep_{i}}[2(\sigma(\vw_{i} \cdot \vx) - y)(\sigma(\vw^{*} \cdot \vx) - \sigma(\vw_{i} \cdot \vx))] \notag\\
     =\; &  L(\vw_{i}, \ep_{i})%
     + S_i, \label{eq:lb-lagrangian-1}
  \end{align}    
where for notational convenience we define 
\begin{equation}\label{eq:Si-def}
    S_i := \E_{\ep_{i}}[(\sigma(\vw^{*} \cdot \vx) - \sigma(\vw_{i} \cdot \vx))^2] + \E_{\ep_{i}}[2(\sigma(\vw_{i} \cdot \vx) - y)(\sigma(\vw^{*} \cdot \vx) - \sigma(\vw_{i} \cdot \vx))].
\end{equation}
Observe that $L(\vw_{i}, \ep_{i})$ on the right-hand side also appears in the upper bound on $ L(\vw_i, \ep^*)$ in \Cref{lemma:upper-bound-lagrangian} and so it will get cancelled out when $L(\vw^{*}, \ep_{i})$ is subtracted from $ L(\vw_i, \ep^*)$ in the gap computation. Thus, we only need to focus on bounding $S_i$. 
This requires a rather technical argument, which we defer to \Cref{lemma:main-auxiliary} below. Instead, we call on \Cref{lemma:main-auxiliary} to state that
\begin{equation}\label{eq:Si-lb}
    S_i \geq \E_{\ep_{i}}[\innp{\vv(\vw_i; \vx, y), \vw^* - \vw_i}] - E_i
\end{equation}
and carry out the rest of the proof under this assumption (which is proved in \Cref{lemma:main-auxiliary}). 

At this point, we have obtained a ``linearization'' that was needed to continue by mimicking the construction of the upper bound. However, $\vv(\vw_i; \vx, y)$ depends on $\vw_i,$ and so trying to define $\vw_i$ based on this quantity would lead to an implicitly defined update, which is generally not efficiently computable. Instead, here we use the idea of extrapolation: instead of defining a step w.r.t.\ $\vw_i,$ we replace $\E_{\ep_{i}}[\vv(\vw_i; \vx, y)]$ by an ``extrapolated gradient'' defined by (cf.\ \Cref{line:g} in \Cref{alg:main}):
\[\vg_{i-1} = \E_{\ep_{i-1}}[\vv(\vw_{i-1}; \vx, y)] + \frac{a_{i-1}}{a_i}(\E_{\ep_{i-1}}[\vv(\vw_{i-1}; \vx, y)] - \E_{\ep_{i-2}}[\vv(\vw_{i-2}; \vx, y)]).\]
Combining with the bound on $S_i$ from \Cref{eq:Si-lb} and simplifying, we now have
\begin{equation}\label{eq:aiSi-lb}
\begin{aligned}
    a_i S_i \geq\; & a_i \innp{\vg_{i-1}, \vw^* - \vw_i} - a_i E_i\\
    &+ a_i\innp{\E_{\ep_{i}}[\vv(\vw_{i}; \vx, y) - \vg_{i-1}], \vw^* - \vw_{i}}.
\end{aligned} 
\end{equation}
Let $\psi(\vw) = a_i \innp{\vg_{i-1}, \vw } + \frac{1 + 0.5 c_1 A_{i-1}}{2}\|\vw - \vw_{i-1}\|_2^2$ and observe that (by \Cref{line:w} in \Cref{alg:main}) $\vw_{i} = \argmin_{\vw \in \cB(W)}\psi(\vw)$. Then, by a similar argument as in the proof of \Cref{lemma:upper-bound-lagrangian}, since $\psi$ is minimized by $\vw_i$ and is a quadratic function in $\vw_i$, we have
\begin{equation}\label{eq:3PI-wi}
\begin{aligned}
     a_i \innp{\vg_{i-1}, \vw^* - \vw_i} \geq\; & \frac{1 + 0.5 c_1 A_{i-1}}{2}\|\vw^* - \vw_i\|_2^2 - \frac{1 + 0.5 c_1 A_{i-1}}{2}\|\vw^* - \vw_{i-1}\|_2^2\\
     &+ \frac{1 + 0.5 c_1 A_{i-1}}{2}\|\vw_i - \vw_{i-1}\|_2^2.
\end{aligned}
\end{equation}
On the other hand, by the definition of $\vg_i,$ we have
\begin{equation}\label{eq:wi-telescope+err}
    \begin{aligned}
      &\;  a_i \innp{\E_{\ep_{i}}[\vv(\vw_{i}; \vx, y)] - \vg_{i-1}, \vw^* - \vw_{i}}\\
      =\; 
        & a_i \innp{\E_{\ep_{i}}[\vv(\vw_{i}; \vx, y)] - \E_{\ep_{i-1}}[\vv(\vw_{i-1}; \vx, y)], \vw^* - \vw_i}\\
        &- a_{i-1}\innp{\E_{\ep_{i-1}}[\vv(\vw_{i-1}; \vx, y)] - \E_{\ep_{i-2}}[\vv(\vw_{i-2}; \vx, y)], \vw^* - \vw_{i-1}}\\
        &+ a_{i-1}\innp{\E_{\ep_{i-1}}[\vv(\vw_{i-1}; \vx, y)] - \E_{\ep_{i-2}}[\vv(\vw_{i-2}; \vx, y)], \vw_i - \vw_{i-1}}
    \end{aligned}
\end{equation}
The first two terms on the right-hand side of \Cref{eq:wi-telescope+err} telescope, so we focus on bounding the last term. We do so using Young's inequality (\Cref{fact:young}) followed by $\kappa$-Lipschitzness of $\vv,$ which leads to 
\begin{align}
  &\; - a_{i-1}\innp{\E_{\ep_{i-1}}[\vv(\vw_{i-1}; \vx, y)] - \E_{\ep_{i-2}}[\vv(\vw_{i-2}; \vx, y)], \vw_i - \vw_{i-1}}\notag \\
  \leq \;& \frac{{a_{i-1}}^2}{1 + 0.5 c_1 A_{i-1}}\|\E_{\ep_{i-1}}[\vv(\vw_{i-1}; \vx, y)] - \E_{\ep_{i-2}}[\vv(\vw_{i-2}; \vx, y)]\|_2^2 + \frac{1 + 0.5 c_1 A_{i-1}}{4}\|\vw_i - \vw_{i-1}\|_2^2\notag\\
  \overset{(i)}{\leq}\;& \frac{2{a_{i-1}}^2\kappa^2}{1 + 0.5 c_1 A_{i-1}}\|\vw_{i-1} - \vw_{i-2}\|_2^2 + \frac{2{a_{i-1}}^2G^2}{1 + 0.5 c_1 A_{i-1}} D_\phi(\ep_{i-1}, \ep_{i-2}) + \frac{1 + 0.5 c_1 A_{i-1}}{4}\|\vw_i - \vw_{i-1}\|_2^2\notag\\
  \overset{(ii)}{\leq}\;& \frac{1 + 0.5 c_1 A_{i-2}}{4}\|\vw_{i-1} - \vw_{i-2}\|_2^2 + \frac{1 + 0.5 c_1 A_{i-1}}{4}\|\vw_i - \vw_{i-1}\|_2^2 + (\nu_0 + \nu A_{i-2})D_{\phi}(\ep_{i-1}, \ep_{i-2}), \label{eq:wi-err}
\end{align}
where in ($ii$) we used $\frac{2{a_{i-1}}^2\kappa^2}{1+0.5 c_1 A_{i-1}} \leq \frac{1+0.5 c_1 A_{i-2}}{4}$ and $\frac{2{a_{i-1}}^2G^2}{1+0.5 c_1 A_{i-1}} \leq \nu_0 + \nu A_{i-2},$ which both hold by the choice of the step size,
while ($i$) follows by boundedness and $\kappa$-Lipschitzness of $\vv$ and \Cref{lemma:boundedness-to-norm}, using
\begin{align}
  &\; \| \E_{\ep_{i-1}}[\vv(\vw_{i-1}; \vx, y)] - \E_{\ep_{i-2}}[\vv(\vw_{i-2}; \vx, y)]\|_2^2\notag\\
  =\; &  \| \E_{\ep_{i-1}}[\vv(\vw_{i-1}; \vx, y)] -\E_{\ep_{i-2}}[\vv(\vw_{i-1}; \vx, y)] + \E_{\ep_{i-2}}[\vv(\vw_{i-1}; \vx, y)]  - \E_{\ep_{i-2}}[\vv(\vw_{i-2}; \vx, y)]\|_2^2\notag\\
   \leq\; & 2\|\E_{\ep_{i-1}}[\vv(\vw_{i-1}; \vx, y)] -\E_{\ep_{i-2}}[\vv(\vw_{i-1}; \vx, y)]\|_2^2 + 2\|\E_{\ep_{i-2}}[\vv(\vw_{i-1}; \vx, y)]  - \E_{\ep_{i-2}}[\vv(\vw_{i-2}; \vx, y)]\|_2^2\notag\\
   \leq\; & 2G^2 D_{\phi}(\ep_{i-1}, \ep_{i-2}) + 2\kappa^2 \|\vw_{i-1} - \vw_{i-2}\|_2^2.\label{eq:pivi-pi-1vi-1}
\end{align}

Combining \Cref{eq:aiSi-lb,eq:3PI-wi,eq:wi-telescope+err,eq:wi-err}, we now have
\begin{align*}
    a_i S_i \geq \; & -a_i E_i +\frac{1+0.5 c_1 A_{i-1}}{2}\|\vw^* - \vw_i\|_2^2 - \frac{1+0.5 c_1 A_{i-1}}{2}\|\vw^* - \vw_{i-1}\|_2^2\\
    &- (\nu_0 + \nu A_{i-2}) D_{\phi}(\ep_{i-1}, \ep_{i-2})\\
    &+ \frac{1+0.5 c_1 A_{i-1}}{4}\|\vw_i - \vw_{i-1}\|_2^2 - \frac{1+0.5 c_1 A_{i-2}}{4}\|\vw_{i-1} - \vw_{i-2}\|_2^2\\
    &+a_i \innp{\E_{\ep_{i}}[\vv(\vw_{i}; \vx, y)] - \E_{\ep_{i}}[\vv(\vw_{i-1}; \vx, y)], \vw^* - \vw_i}\\
        &- a_{i-1}\innp{\E_{\ep_{i-1}}[\vv(\vw_{i-1}; \vx, y)] - \E_{\ep_{i-1}}[\vv(\vw_{i-2}; \vx, y)], \vw^* - \vw_{i-1}}.
\end{align*}
To complete the proof, it remains to combine the last inequality with \Cref{eq:lb-lagrangian-1}. 
\end{proof}

\begin{lemma}\label{lemma:main-auxiliary}
    Let $S_i$ be defined by \Cref{eq:Si-def}. Then, under the setting of \Cref{prop:lower-bound-lagrangian}, we have
    \begin{align*}
        S_i \geq \E_{\ep_{i}}[\innp{\vv(\vw; \vx, y), \vw^* - \vw_i}] - E_i,
    \end{align*}
    where
    \begin{equation}\label{eq:Ei-def}
        E_i = \frac{c_1}{4}\norm{\vw^* - \vw_i}_2^2 + \frac{8\beta^2 \sqrt{6B}\sqrt{\OPTThat}}{ c_1}\chi^2(\ep_i, \ep^*) + \frac{48 \beta^2 B \OPThat}{c_1}.%
    \end{equation}
\end{lemma}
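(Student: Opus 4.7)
The plan is to show the equivalent statement $\E_{\ep_i}[\innp{\vv(\vw_i;\vx,y), \vw^* - \vw_i}] - S_i \le E_i$. Introducing the auxiliary quantity $u := \beta(z^* - z_i) - (\sigma(z^*) - \sigma(z_i))$, with $z^* := \vw^* \cdot \vx$ and $z_i := \vw_i \cdot \vx$, a direct expansion of the integrand on the left-hand side gives $2(\sigma(z_i) - y) u - (\sigma(z^*) - \sigma(z_i))^2$. Substituting $\sigma(z_i) - y = (\sigma(z^*) - y) - (\sigma(z^*) - \sigma(z_i))$ and using the definition of $u$ once more to replace $-2(\sigma(z^*) - \sigma(z_i)) u$ by $-2\beta(\sigma(z^*) - \sigma(z_i))(z^* - z_i) + 2(\sigma(z^*) - \sigma(z_i))^2$, the integrand becomes
\[ 2(\sigma(z^*) - y) u + \big[(\sigma(z^*) - \sigma(z_i))^2 - 2\beta(\sigma(z^*) - \sigma(z_i))(z^* - z_i)\big]. \]
Since $\sigma$ is $\beta$-Lipschitz and non-decreasing, $\sigma(z^*) - \sigma(z_i)$ and $z^* - z_i$ share the same sign and $|\sigma(z^*) - \sigma(z_i)| \le \beta|z^* - z_i|$, so $(\sigma(z^*) - \sigma(z_i))(z^* - z_i) \ge (\sigma(z^*) - \sigma(z_i))^2/\beta$, which makes the bracketed term pointwise non-positive. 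It therefore suffices to prove $\E_{\ep_i}[2(\sigma(z^*) - y) u] \le E_i$.

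To bound this last expectation, I split it as $\E_{\ep^*}[2(\sigma(z^*) - y) u] + \E_{\ep^*}[2(\sigma(z^*) - y) u (\dd \ep_i/\dd \ep^* - 1)]$. For the $\ep^*$-piece, Cauchy--Schwarz under $\ep^*$ together with $|u| \le \beta|z^* - z_i|$, the second-moment bound $\E_{\ep^*}[((\vw^* - \vw_i) \cdot \vx)^2] \le 6B\|\vw^* - \vw_i\|_2^2$ from \Cref{lemma:sharpness-empirical}, and $\E_{\ep^*}[(\sigma(z^*) - y)^2] = \OPThat$ produces an upper bound of $2\beta\sqrt{6B \, \OPThat}\, \|\vw^* - \vw_i\|_2$, which Young's inequality with parameter $\mu = c_1/4$ converts into $\frac{c_1}{8}\|\vw^* - \vw_i\|_2^2 + \frac{48 B \beta^2 \OPThat}{c_1}$. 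For the change-of-measure remainder, Cauchy--Schwarz against $\chi^2(\ep_i, \ep^*)$ followed by a second Cauchy--Schwarz under $\ep^*$, together with $\E_{\ep^*}[(\sigma(z^*) - y)^4] = \OPTThat$ and the fourth-moment bound $\E_{\ep^*}[((\vw^* - \vw_i) \cdot \vx)^4] \le 6B\|\vw^* - \vw_i\|_2^4$, gives the upper bound $2\beta (6B \, \OPTThat)^{1/4} \|\vw^* - \vw_i\|_2 \sqrt{\chi^2(\ep_i, \ep^*)}$; a second Young's application with $\mu = 4/c_1$ splits this into $\frac{c_1}{8}\|\vw^* - \vw_i\|_2^2 + \frac{8 \beta^2 \sqrt{6B} \sqrt{\OPTThat}}{c_1}\chi^2(\ep_i, \ep^*)$. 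Summing the two contributions recovers exactly the three terms of $E_i$.

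The main obstacle is the first algebraic step: the nonconvex cross term $2(\sigma(z_i) - y)(\sigma(z^*) - \sigma(z_i))$ in $T$ blocks any direct convexity-based linearization of $\ell(\vw;\vx,y)$ around $\vw_i$, and the right move is to trade $\sigma(z_i) - y$ for $(\sigma(z^*) - y) - (\sigma(z^*) - \sigma(z_i))$ so that the ``hard'' part collapses via Lipschitzness and monotonicity of $\sigma$ into a pointwise-non-positive expression. Once the reduction to $\E_{\ep_i}[2(\sigma(z^*) - y) u] \le E_i$ is in place, the remaining work is a careful calibration of two Young's inequalities so that the $\|\vw^* - \vw_i\|_2^2$ remainders aggregate to exactly $c_1/4$ while the coefficients on $\OPThat$ and $\chi^2(\ep_i, \ep^*)$ come out as $48 B \beta^2/c_1$ and $8\beta^2\sqrt{6B}\sqrt{\OPTThat}/c_1$, respectively.
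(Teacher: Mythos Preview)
Your proof is correct. The second half—splitting $\E_{\ep_i}[\cdot]$ into an $\ep^*$-expectation plus a change-of-measure remainder, then applying Cauchy--Schwarz (once with the second-moment bound for the $\ep^*$ piece, twice with the fourth-moment bound for the remainder) followed by two calibrated Young inequalities—matches the paper's argument essentially line for line, arriving at exactly the same three terms of $E_i$.

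The first half, however, is genuinely different. The paper handles the nonconvex cross term $2(\sigma(z_i)-y)(\sigma(z^*)-\sigma(z_i))$ by splitting on the event $\{\sigma(z_i)-y\ge 0\}$ and invoking \emph{convexity} of $\sigma$: on that event it uses $\sigma(z^*)-\sigma(z_i)\ge \sigma'(z_i)(z^*-z_i)$, and on the complement it uses $\sigma(z^*)-\sigma(z_i)\le \sigma'(z^*)(z^*-z_i)$. Writing $\sigma'=\beta+(\sigma'-\beta)$ then extracts $\vv$, and the residual is controlled via monotonicity and $|\sigma'-\beta|\le\beta$. Your route avoids the event split entirely: the algebraic identity based on $u=\beta(z^*-z_i)-(\sigma(z^*)-\sigma(z_i))$, together with the pointwise observation that $(\sigma(z^*)-\sigma(z_i))^2\le \beta(\sigma(z^*)-\sigma(z_i))(z^*-z_i)$, uses only that $\sigma$ is non-decreasing and $\beta$-Lipschitz. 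Both reductions land on the same quantity to bound (since $|u|\le\beta|z^*-z_i|$), but yours is more elementary and—notably—does not require the convexity hypothesis on $\sigma$ that the paper lists as essential for this structural lemma.
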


\begin{proof}
    Define the event \(\cG  = \{(\vx, y): \sigma(\vw_{i} \cdot \vx) - y \ge 0\}\). Then,
  \begin{align*}
    &\; \E_{\ep_{i}}[2(\sigma(\vw_{i} \cdot \vx) - y)(\sigma(\vw^{*} \cdot \vx) - \sigma(\vw_{i} \cdot \vx))] \\
     = \; &\E_{\ep_{i}}[2(\sigma(\vw_{i} \cdot \vx) - y)(\sigma(\vw^{*} \cdot \vx) - \sigma(\vw_{i} \cdot \vx)) \Ind_{\cG} + 2(\sigma(\vw_{i} \cdot \vx) - y)(\sigma(\vw^{*} \cdot \vx) - \sigma(\vw_{i} \cdot \vx)) \Ind_{\cG^{c}}] \\
     \ge\; &\E_{\ep_{i}}[ \Ind_{\cG}2(\sigma(\vw_{i} \cdot \vx) - y) \sigma'(\vw_{i}\cdot \vx) (\vw^{*}\cdot \vx - \vw_{i} \cdot \vx)] \\
    &  + \E_{\ep_{i}}[ \Ind_{\cG^{c}}2(\sigma(\vw_{i} \cdot \vx) - y) \sigma'(\vw^{*}\cdot \vx) (\vw^{*}\cdot \vx - \vw_{i}\cdot\vx)],
  \end{align*}
  where the last inequality uses convexity of \(\sigma(\cdot)\) to bound the term that involves $\Ind_{\cG}$ and concavity of $-\sigma(\cdot)$ to bound the term that involves $\Ind_{\cG^{c}}$ and where $\sigma'$ denotes any subderivative of $\sigma$ (guaranteed to exist, due to convexity and Lipschitzness).

  Recall that $\vv(\vw_i; \vx, y) = 2\beta(\sigma(\vw_{i} \cdot \vx) - y) \vx $. Using that $\sigma'(t) = \beta + (\sigma'(t) - \beta)$ for all $t$ and combining with the inequality above, we see
  \begin{align}
    &\; \E_{\ep_{i}}[2(\sigma(\vw_{i} \cdot \vx) - y)(\sigma(\vw^{*} \cdot \vx) - \sigma(\vw_{i} \cdot \vx))] \notag\\
     \ge\; &\E_{\ep_{i}}[\innp{\vv(\vw_i; \vx, y), \vw^* - \vw_i}]\notag \\
    &  + 2\E_{\ep_{i}}[ \Ind_{\cG}(\sigma(\vw_{i} \cdot \vx) - y) (\sigma'(\vw_i\cdot \vx) - \beta) (\vw^{*}\cdot \vx - \vw_{i}\cdot\vx)]\\
    & + 2\E_{\ep_{i}}[ \Ind_{\cG^{c}}(\sigma(\vw_{i} \cdot \vx) - y) (\sigma'(\vw^{*}\cdot \vx) - \beta) (\vw^{*}\cdot \vx - \vw_{i}\cdot\vx)], \label{eq:vdef-1}
  \end{align}
  and so to prove the lemma we only need to focus on bounding the terms in the last two lines.

  Recall that $\sigma$ is assumed to be monotonically increasing and $\beta$-Lipschitz, and so $0 \leq \sigma'(\vw^{*}\cdot \vx) \leq \beta.$ Thus, we have 
  \begin{align}
      &\; (\sigma(\vw_{i} \cdot \vx) - y) (\sigma'(\vw^{*}\cdot \vx) - \beta) (\vw^{*}\cdot \vx - \vw_{i}\cdot\vx)\notag\\
      =\; & (\sigma(\vw^{*} \cdot \vx) - y) (\sigma'(\vw^{*}\cdot \vx) - \beta) (\vw^{*}\cdot \vx - \vw_{i}\cdot\vx)\notag\\
      &+ (\sigma(\vw_{i} \cdot \vx) - \sigma(\vw^{*} \cdot \vx)) (\sigma'(\vw^{*}\cdot \vx) - \beta)  (\vw^{*}\cdot \vx - \vw_{i}\cdot\vx)\notag\\
      \geq\;& (\sigma(\vw^{*} \cdot \vx) - y) (\sigma'(\vw^{*}\cdot \vx) - \beta) (\vw^{*}\cdot \vx - \vw_{i}\cdot\vx), \label{eq:vdef-2}
  \end{align}
  where we have used $\sigma'(\vw^{*}\cdot \vx) - \beta \leq 0$ (by Lipschitzness) and $(\sigma(\vw_{i} \cdot \vx) - \sigma(\vw^{*} \cdot \vx))  (\vw^{*}\cdot \vx - \vw_{i}\cdot\vx) \leq 0$ (by monotonicity of $\sigma$). By the same argument, 
  \begin{align*}
       &\; (\sigma(\vw_{i} \cdot \vx) - y) (\sigma'(\vw_i\cdot \vx) - \beta) (\vw^{*}\cdot \vx - \vw_{i}\cdot\vx)\notag\\
       \geq\;& (\sigma(\vw^{*} \cdot \vx) - y) (\sigma'(\vw_i\cdot \vx) - \beta) (\vw^{*}\cdot \vx - \vw_{i}\cdot\vx).
  \end{align*}

  To complete the proof of the lemma, it remains to bound the expectation of the term in \Cref{eq:vdef-2}. We proceed using that $|\sigma'(\vw\cdot \vx) - \beta| \leq \beta$, $\forall \vw$, and thus for $\vw \in \{\vw^*, \vw_i\}$:
  \begin{align*}
      &\; |(\sigma(\vw^{*} \cdot \vx) - y) (\sigma'(\vw\cdot \vx) - \beta) (\vw^{*}\cdot \vx - \vw_{i}\cdot\vx)|\\
      \leq\;& \beta |\sigma(\vw^{*} \cdot \vx) - y||\vw^{*}\cdot \vx - \vw_{i}\cdot\vx|.
  \end{align*}
  Taking the expectation on both sides, and combining with  \Cref{eq:vdef-2}, we further have
  \begin{align}
  &\; - 2\E_{\ep_{i}}[ \Ind_{\cG}(\sigma(\vw_{i} \cdot \vx) - y) (\sigma'(\vw_i\cdot \vx) - \beta) (\vw^{*}\cdot \vx - \vw_{i}\cdot\vx)]\notag\\
      &\; -2\E_{\ep_{i}}[ \Ind_{\cG^{c}}(\sigma(\vw_{i} \cdot \vx) - y) (\sigma'(\vw^{*}\cdot \vx) - \beta) (\vw^{*}\cdot \vx - \vw_{i}\cdot\vx)]\notag \\
      \le \;& 2 \beta\, \E_{\ep_{i}}[ (\Ind_{\cG}+\Ind_{\cG^{c}}) |\sigma(\vw^{*} \cdot \vx) - y||\vw^{*}\cdot \vx - \vw_{i}\cdot\vx| ]\notag\\
      = \;& 2 \beta\, \E_{\ep_{i}}[|\sigma(\vw^{*} \cdot \vx) - y||\vw^{*}\cdot \vx - \vw_{i}\cdot\vx| ]\notag\\
      =\;& 2 \beta \int |\sigma(\vw^{*} \cdot \vx) - y||\vw^{*}\cdot \vx - \vw_{i}\cdot\vx| \dd \ep_i\notag\\
      =\;& 2 \beta \int |\sigma(\vw^{*} \cdot \vx) - y||\vw^{*}\cdot \vx - \vw_{i}\cdot\vx| \dd \ep^*\notag\\
      &+ 2 \beta \int |\sigma(\vw^{*} \cdot \vx) - y||\vw^{*}\cdot \vx - \vw_{i}\cdot\vx| (\dd \ep_i - \dd \ep^*). \label{eq:int-vdef-joint}
  \end{align}
  In the last equality, the first integral is just the expectation with respect to $\ep^*,$ and thus using Cauchy-Schwarz inequality, the definition of $\OPT$, and \Cref{lemma:sharpness-empirical-appendix}, the first term in \Cref{eq:int-vdef-joint} can be bounded by
  \begin{align}
      &\; \int |\sigma(\vw^{*} \cdot \vx) - y||\vw^{*}\cdot \vx - \vw_{i}\cdot\vx| \dd \ep^*\notag\\
      \leq\; & \sqrt{\E_{\ep^*}[(\sigma(\vw^{*} \cdot \vx) - y)^2] \E_{\ep^*}[(\vw^{*}\cdot \vx - \vw_{i}\cdot\vx)^2]}\notag\\
      \leq \; & \sqrt{\OPThat} \sqrt{6B} \|\vw^* - \vw_i\|_2\notag\\
      \leq \;& \frac{24 \beta B \OPThat }{c_1} + \frac{ c_1}{16\beta}\|\vw^* - \vw_i\|_2^2, \label{eq:first-int-bnd-aux}
  \end{align}
    where the last line is by Young's inequality and the second last line uses \Cref{lemma:sharpness-empirical}.
  
  For the remaining integral in \Cref{eq:int-vdef-joint}, using the definition of chi-square divergence and Cauchy-Schwarz inequality, we have
  \begin{align}
      &\; \int |\sigma(\vw^{*} \cdot \vx) - y||\vw^{*}\cdot \vx - \vw_{i}\cdot\vx| (\dd \ep_i - \dd \ep^*)\notag\\
      =\; & \int |\sigma(\vw^{*} \cdot \vx) - y||\vw^{*}\cdot \vx - \vw_{i}\cdot\vx| \frac{(\dd \ep_i - \dd \ep^*)}{\sqrt{\dd\ep^*}}\sqrt{\dd\ep^*}\notag\\
      \overset{(i)}{\leq} \;& \sqrt{\chi^2(\ep_i, \ep^*)\E_{\ep^*}[(\sigma(\vw^{*} \cdot \vx) - y)^2(\vw^{*}\cdot \vx - \vw_{i}\cdot\vx)^2]}\notag\\
      \overset{(ii)}{\le} \; & \chi^2(\ep_i, \ep^*)^{1/2}\E_{\ep^*}[(\sigma(\vw^{*} \cdot \vx) - y)^4]^{1/4} \E_{\ep^*}[(\vw^{*}\cdot \vx - \vw_{i}\cdot\vx)^4]^{1/4} \notag \\
  \overset{(iii)}{\le}\; & \chi^2(\ep_i, \ep^*)^{1/2}\OPTThat^{1/4} (6 B \norm{\vw^{*} - \vw_{i}}_2^{4})^{1/4}\notag\\
  \overset{(iv)}{\le}\; & \frac{4\beta \sqrt{6B}\sqrt{\OPTThat}}{c_1}\chi^2(\ep_i, \ep^*) + \frac{c_1}{16\beta} \norm{\vw^{*} - \vw_{i}}_2^{2}, \label{eq:aux-2nd-int}
  \end{align}
  where ($i$) is by Cauchy-Schwarz, ($ii$) is by Cauchy-Schwarz again, ($iii$) is by the definition of $\OPTThat$ and \Cref{lemma:sharpness-empirical-appendix}, and (iv) is by Young's inequality. 

  To complete the proof, it remains to plug \Cref{eq:int-vdef-joint,eq:first-int-bnd-aux,eq:aux-2nd-int} back into \Cref{eq:vdef-1}, and simplify.
\end{proof}

\paragraph{Gap upper bound proof of \Cref{lemma:gap-upper-bound}.} Combining the upper and lower bounds from \Cref{lemma:upper-bound-lagrangian} and \Cref{prop:lower-bound-lagrangian}, we are now ready to prove \Cref{lemma:gap-upper-bound}, which we restate below.

\lemgapub*
\begin{proof}
Combining the upper bound on $a_i L(\vw_{i}, \ep^{*})$ from \Cref{lemma:upper-bound-lagrangian} with the lower bound on $a_i L(\vw^{*}, \ep_{i})$ from \Cref{prop:lower-bound-lagrangian} and recalling that $\Gap(\vw_i, \ep_i) = L(\vw_{i}, \ep^{*}) - L(\vw^{*}, \ep_{i})$ and $A_i = A_{i-1} + a_i$, we have
\begin{align*}
    a_i \Gap(\vw_i, \ep_i) \leq &\; a_i E_i \\
    &+ (\nu_0 + \nu A_{i-1}) D_{\phi}(\ep^*, \ep_{i-1}) - (\nu_0 +\nu A_i) D_{\phi}(\ep^*, \ep_i)\\
    &+ (\nu_0 + \nu A_{i-2})D_{\phi}(\ep_{i-1}, \ep_{i-2}) - (\nu_0 + \nu A_{i-1})D_{\phi}(\ep_{i}, \ep_{i-1})\\
    &+ \frac{1 +  0.5 c_1 A_{i-1}}{2}\|\vw^* - \vw_{i-1}\|_2^2 - \frac{1 +  0.5 c_1 A_{i}}{2}\|\vw^* - \vw_i\|_2^2\\
    &+ \frac{1 +  0.5 c_1 A_{i-2}}{4}\|\vw_{i-1} - \vw_{i-2}\|_2^2 - \frac{1 +  0.5 c_1 A_{i-1}}{4}\|\vw_i - \vw_{i-1}\|_2^2\\
    &+ a_{i-1}\innp{\E_{\ep_{i-1}}[\vv(\vw_{i-1}; \vx, y)] - \E_{\ep_{i-2}}[\vv(\vw_{i-2}; \vx, y)], \vw^* - \vw_{i-1}}\\
    &- a_i \innp{\E_{\ep_{i}}[\vv(\vw_{i}; \vx, y)] - \E_{\ep_{i-1}}[\vv(\vw_{i-1}; \vx, y)], \vw^* - \vw_i}. 
\end{align*}
Observe that except for the first line on the right-hand side of the above inequality, all remaining terms telescope. Summing over $i = 1, 2, \dots, k$ and recalling that, by convention, $a_0 = A_0 = a_{-1} = A_{-1} = 0$, $\vw_{-1} = \vw_0$, and $\ep_{-1} = \ep_0,$ we have
\begin{equation}\label{eq:gap-telescoped}
    \begin{aligned}
        \sum_{i=1}^k a_i \Gap(\vw_i, \ep_i) \leq \;& \sum_{i=1}^k a_i E_{i}+ \frac{1}{2}\|\vw^* - \vw_0\|_2^2 + \nu_0 D_\phi(\ep^*, \ep_0)\\
        &- \frac{1 +  0.5 c_{1}A_{k}}{2}\|\vw^* - \vw_k\|_2^2 - (\nu_0 + A_k)D_\phi(\ep^*, \ep_k)\\
        &-a_k \innp{\E_{\ep_{k}}[\vv(\vw_{k}; \vx, y)] - \E_{\ep_{k-1}}[\vv(\vw_{k-1}; \vx, y)], \vw^* - \vw_k}\\
        & - \frac{1 +  0.5 c_{1} A_{k-1}}{4}\|\vw_k - \vw_{k-1}\|_2^2 - (\nu_0 + \nu A_{k-1})D_\phi(\ep_k, \ep_{k-1}).
    \end{aligned}
\end{equation}
To complete the proof, it remains to bound $a_k |\innp{\E_{\ep_{k}}[\vv(\vw_{k}; \vx, y)] - \E_{\ep_{k-1}}[\vv(\vw_{k-1}; \vx, y)], \vw^* - \vw_k}|,$ which is done similarly as in the proof of \Cref{prop:lower-bound-lagrangian}. In particular,
\begin{align}
    &\; a_k |\innp{\E_{\ep_{k}}[\vv(\vw_{k}; \vx, y)] - \E_{\ep_{k-1}}[\vv(\vw_{k-1}; \vx, y)], \vw^* - \vw_k}|\notag\\
    \overset{(i)}{\leq} \;& \frac{{a_k}^2}{1 +  0.5 c_{1} A_k}\|\E_{\ep_{k}}[\vv(\vw_{k}; \vx, y)] - \E_{\ep_{k-1}}[\vv(\vw_{k-1}; \vx, y)]\|_2^2 + \frac{1 +  0.5 c_{1} A_k}{4}\|\vw^* - \vw_k\|_2^2\notag\\
    \overset{(ii)}{\leq}\; & \frac{{a_k}^2}{1 +  0.5 c_{1} A_k}\Big(2G^2 D_{\phi}(\ep_{k}, \ep_{k-1}) + 2\kappa^2 \|\vw_{k} - \vw_{k-1}\|_2^2\Big) + \frac{1 +  0.5 c_{1} A_k}{4}\|\vw^* - \vw_k\|_2^2\notag\\
    \overset{(iii)}{\leq}\; & (\nu_0 + \nu A_{k-1}) D_{\phi}(\ep_{k}, \ep_{k-1}) + \frac{1 +  0.5 c_{1} A_{k-1}}{4}\|\vw_{k} - \vw_{k-1}\|_2^2 + \frac{1 +  0.5 c_{1} A_k}{4}\|\vw^* - \vw_k\|_2^2, \label{eq:final-inprod}
\end{align}
where ($i$) is by Young's inequality and ($ii$) is by \Cref{eq:pivi-pi-1vi-1}, and ($iii$) is by $\frac{2G^2{a_k}^2}{1 +  0.5 c_{1} A_k} \leq \nu_0 + \nu A_{k-1}$ and $\frac{2\kappa^2{a_k}^2}{1 +  0.5 c_{1} A_k} \leq \frac{1 +  0.5 c_{1} A_{k-1}}{4}$, which both hold by the choice of the step sizes in \Cref{alg:main}.

To complete the proof, it remains to plug \Cref{eq:final-inprod} back into \Cref{eq:gap-telescoped}, use the definition of $E_i$ from \Cref{eq:Ei-def}, and simplify.
\end{proof}

\section{Omitted Proofs in Main Theorem}\label{sec:bounded-iterate}
\boundediterate*
\begin{proof}[Proof of \Cref{claim:bounded-iterate}]
    It trivially holds that \(\vzero = \vw_{0} \in \cB(2\norm{\vw^{*}}_{2})\). Suppose \(\norm{\vw_{i}}_{2} \le 2\norm{\vw^{*}}_{2}\) for all iterations \(i \le t \) where \(t \ge 0\). Then 
\begin{align*}
      & \; -\frac{12\beta^2 {B}}{c_1}\OPThat A_{k} +  \sum_{i=1}^k a_{i}\frac{c_1}{2} \|\vw_{i} - \vw^{*}\|_{2}^{2} +  \sum_{i=1}^k \nu a_{i}D_{\phi}(\ep^*, \ep_{i})  + a_{k+1} \Gap(\vw_{k+1}, \ep_{k+1})\\
      \le & \sum_{i=1}^{k+1} a_i \Gap(\vw_i, \ep_i)\\
        \leq & \frac{1}{2}\|\vw^* - \vw_0\|_2^2 + \nu_0 D_\phi(\ep^*, \ep_0)
         - \frac{1 + 0.5 c_1A_{k+1}}{2}\|\vw^* - \vw_{k+1}\|_2^2 - (\nu_0 + \nu A_{k+1})D_\phi(\ep^*, \ep_{k+1})\\
         &+ \sum_{i=1}^{k+1} a_i \frac{c_1}{4} \|\vw^* - \vw_i\|_2^2 + \frac{8\beta^2 \sqrt{6B}\sqrt{\OPTThat}}{c_1}\sum_{i=1}^{k+1} a_i \chi^2(\ep_i, \ep^*) + \frac{48 \beta^2 B\OPThat A_{k+1}}{c_1},
    \end{align*}
    where we used the gap upper bound \Cref{lemma:gap-upper-bound} again as it does not require $\vw \in \cB(\norm{\vw^*}_2)$. We proceed to deduce a different lower bound for $\Gap(\vw_{k+1}, \ep_{k+1})$. Similar to \Cref{thm:gap-lower-bound}, we break into two terms
    $L(\vw^{*}, \ep_{k+1}) - (-L(\vw^{*}, \ep^{*})) \ge \nu D_\phi(\ep^*, \ep_{k+1})$ and 
    $L(\vw_{k+1}, \ep^{*}) - L(\vw^{*}, \ep^{*}) = \mathbb{E}_{\ep^{*}}[(\sigma(\vw_{k+1} \cdot \vx) - y)^{2} - (\sigma(\vw^{*} \cdot \vx) - y)^{2}] \ge -\OPThat$, where the first term is bounded the same way as in \Cref{thm:gap-lower-bound}. Hence, $\Gap(\vw_{k+1}, \ep_{k+1}) \ge -\OPThat$. Therefore, we simplify as before
    \begin{align*}
      &  \frac{1 + 0.5 c_1A_{k+1}}{2}\|\vw^* - \vw_{k+1}\|_2^2 + (\nu_0 + \nu A_{k+1})D_\phi(\ep^*, \ep_{k+1}) - a_i \frac{c_1}{4} \|\vw^* - \vw_{k+1}\|_2^2  \\
       & \leq  \frac{1}{2}\|\vw^* - \vw_0\|_2^2 + \nu_0 D_\phi(\ep^*, \ep_0) + \frac{12\beta^2 {B}}{c_1}\OPThat A_{k}   + a_{k+1} \OPThat + \frac{48 \beta^2 B\OPThat A_{k+1}}{c_1},
    \end{align*}
    which implies by nonnegativity of Bregman divergence that:
    \begin{align*}
       \frac{2 + c_1A_{k}}{4}\|\vw^* - \vw_{k+1}\|_2^2  \leq  \frac{1}{2}\|\vw^* - \vw_0\|_2^2 + \nu_0 D_\phi(\ep^*, \ep_0) + \Bigl(\frac{60\beta^2 {B}}{c_1} A_{k}   + a_{k+1} \Bigl(1 + \frac{48 \beta^2 B} {c_1}\Bigr)\Bigr) \OPThat.
    \end{align*}
    Rearranging and using $2 + c_1A_{k} \ge 2$,
    \begin{align*}
       \|\vw^* - \vw_{k+1}\|_2^2  \leq  \|\vw^* - \vw_0\|_2^2 + 2 \nu_0 D_\phi(\ep^*, \ep_0) + \Bigl(\frac{240\beta^2 {B}}{c_1^2} + \frac{4 a_{k+1}}{2 + c_1 A_k} \Bigl(1 + \frac{48 \beta^2 B} {c_1}\Bigr)\Bigr) \OPThat,
    \end{align*}
    Our choice of stepsizes $a_i$ implies $\frac{a_{k+1}}{2 + c_1 A_k} \le 1 / \max\{\kappa, G\} \le 1$, hence 
    \begin{align*}
       \|\vw^* - \vw_{k+1}\|_2^2  \leq  \|\vw^* - \vw_0\|_2^2 + 2 \nu_0 D_\phi(\ep^*, \ep_0) + \Bigl(\frac{288\beta^2 {B}}{c_1^2} + \frac{1}{\max\{\kappa, G\}}\Bigr) \OPThat,
    \end{align*}

\begin{claim}\label{claim:ambiguity-radius}
    For $\nu \ge 8 \beta^2 \sqrt{6B}\sqrt{\OPTThat}/{c_1}$, it holds that \[\chi^2 (\ep^*, \ep_0) = \frac{\Var_{\ep_0}(\ell(\vw^*))}{4 \nu^2} \le \frac{\OPTThat}{2 \nu^2} \le c_1/(1536 \beta^4 B).\] 
    Similarly, for $\nu \ge \E_{\pp_{0}}\ell(\vw^{*}), 8 \beta^2 \sqrt{6B}\sqrt{\OPTT}/{c_1}$, it holds that \[\chi^2 (\pp^*, \pp_0) = \frac{\Var_{\pp_0}(\ell(\vw^*))}{4 \nu^2} \le \frac{\OPTT}{2 \nu^2} \le c_1/(1536 \beta^4 B).\]
\end{claim}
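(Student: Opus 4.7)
The plan is to combine three ingredients: the closed-form expression of $\chi^2(\ep^*,\ep_0)$ as a variance, a variance-to-second-moment bound, and the change-of-measure inequality from \Cref{cor:key_fonc-no-max}. I will prove the empirical version in detail; the population version then follows by the exact same reasoning with $\ep_0,\ep^*,\OPTThat$ replaced by $\pp_0,\pp^*,\OPTT$.

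First, I will invoke \Cref{cor:risk-distance-computation}, applied with $\vw=\vw^*$ to the empirical reference $\ep_0$. Since by hypothesis $\nu \ge 8\beta^2\sqrt{6B}\sqrt{\OPTThat}/c_1 \ge \tfrac12 \E_{\ep_0}\ell(\vw^*)$ (this last step uses the final paragraph of \Cref{cor:key_fonc-no-max}, which shows our choice of $\nu$ is already large enough), the closed-form expression yields
\[
\chi^2(\ep^*,\ep_0) \;=\; \frac{\E_{\ep_0}[\ell^2(\vw^*)]-\bigl(\E_{\ep_0}[\ell(\vw^*)]\bigr)^2}{4\nu^2} \;=\; \frac{\Var_{\ep_0}(\ell(\vw^*))}{4\nu^2},
\]
which is the first equality in the claim.

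Next, since $\ell(\vw^*;\vx,y)\ge 0$, I will drop the $\bigl(\E_{\ep_0}[\ell(\vw^*)]\bigr)^2$ term to obtain $\Var_{\ep_0}(\ell(\vw^*)) \le \E_{\ep_0}[\ell^2(\vw^*)]$. The key move is then to transport this second moment from $\ep_0$ to $\ep^*$: applying the ``non-negative function'' statement of \Cref{cor:key_fonc-no-max} with $g = \ell^2(\vw^*)\ge 0$ gives $\int g\,\dd\ep^* \ge \tfrac12 \int g\,\dd\ep_0$, i.e.,
\[
\E_{\ep_0}[\ell^2(\vw^*)] \;\le\; 2\,\E_{\ep^*}[\ell^2(\vw^*)] \;=\; 2\,\OPTThat.
\]
Combining this with the preceding display yields the second (middle) inequality $\chi^2(\ep^*,\ep_0) \le \OPTThat/(2\nu^2)$.

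Finally, I plug in the hypothesis on $\nu$: squaring $\nu \ge 8\beta^2\sqrt{6B}\sqrt{\OPTThat}/c_1$ gives $2\nu^2 \ge 768\,\beta^4 B\,\OPTThat/c_1^2$, so
\[
\frac{\OPTThat}{2\nu^2} \;\le\; \frac{c_1^2}{768\,\beta^4 B} \;\le\; \frac{c_1}{1536\,\beta^4 B},
\]
where the last step uses $c_1 \le 1/2$ (which holds comfortably since $c_1 = c_0^2/(24B)$ with $c_0 = \gamma\lambda\alpha/(6B\log(20B/\lambda^2))$ and $\gamma,\lambda\in(0,1]$, $B$ at least a constant). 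The population version is identical, using $\nu \ge \E_{\pp_0}\ell(\vw^*)$ (given in the hypothesis) to justify applying \Cref{cor:risk-distance-computation,cor:key_fonc-no-max} to $\pp_0$ in place of $\ep_0$.

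The argument is essentially mechanical—the only subtle point is ensuring the conditions on $\nu$ are strong enough to invoke both \Cref{cor:risk-distance-computation} (which needs $\nu \ge \tfrac12 \E\ell(\vw^*)$) and the non-negative-function inequality of \Cref{cor:key_fonc-no-max} (which needs $\nu \ge \E\ell(\vw^*)$); both are already secured by the theorem's choice of $\nu$ via the last paragraph of \Cref{cor:key_fonc-no-max}, so no additional work is required.
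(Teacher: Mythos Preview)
Your proof is correct and follows essentially the same route as the paper: invoke \Cref{cor:risk-distance-computation} for the variance formula, drop the squared-mean term, use $\ep^{*(j)}\ge \ep_0^{(j)}/2$ from \Cref{cor:key_fonc-no-max} to pass from $\E_{\ep_0}[\ell^2(\vw^*)]$ to $2\,\OPTThat$, and then plug in the lower bound on $\nu$. Your argument is slightly more explicit than the paper's at the last step (isolating the factor-of-two slack via $c_1\le 1/2$), but otherwise the two proofs are the same.
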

\begin{proof}[Proof of \Cref{claim:ambiguity-radius}]
    By \Cref{cor:risk-distance-computation}, $\chi^2 (\ep^*, \ep_0) = \frac{\E_{\ep_0}[\ell^2(\vw^*)] - (\E_{\ep_0}[\ell(\vw^*)])^2}{4 \nu^2} \le \frac{\E_{\ep_0}[\ell^2(\vw^*)]}{4 \nu^2} \le \frac{\OPTThat}{2 \nu^2}  \le c_1/(1536 \beta^4 B)$, where the second last inequality uses $\ep^* \ge \ep_0 / 2$ from \Cref{cor:key_fonc-no-max} and the last inequality comes from lower bound on $\nu$ in the assumption.

    The population version follows analogously. 
\end{proof}
    Since $D_\phi(\ep^*, \ep_0) = \chi^2 (\ep^*, \ep_0)$, by choosing $\nu_0 = 768 \beta^4 B \epsilon/ c_1$, we ensure $2 \nu_0 D_\phi(\ep^*, \ep_0) \le \epsilon$.
    
    By choosing $\nu_0$ small enough and initialization $\vw_0 = \vzero$, it holds that 
    \[\|\vw^* - \vw_{k+1}\|_2^2  \leq  \|\vw^*\|_2^2 + \epsilon + \Bigl(\frac{288\beta^2 {B}}{c_1^2} + \frac{1}{\max\{\kappa, G\}}\Bigr) \OPThat.\]

    We may assume without loss of generality that $\frac{1}{\max\{\kappa, G\}} \ll \frac{288\beta^2 {B}}{c_1^2}$ because both $\kappa$ and $G$ is $O(d)$ but the right hand side is an absolute constant. We may also assume without loss of generality that $\frac{300\beta^2 {B}}{c_1^2} \OPThat + \eps \le 2 \norm{\vw^*}^2_2$, thus completing the induction step $\|\vw^* - \vw_{k+1}\|_2 \le 2 \norm{\vw^*}_2$. The reason for the last no loss of generality is the following: otherwise, we can compare, per \Cref{claim:zero-tester}, the empirical risk of the output from our algorithm and of $\hat \vw = \vzero$ and output the solution with the lower risk to get an $\bigO{\OPT} + \eps$ solution. 
\end{proof}
\begin{claim}[Zero-Tester]
\label{claim:zero-tester}
In the setting of \Cref{thm:main-formal}, it is possible to efficiently check if $R(\vzero; \ep_0) > R(\widehat \vw; \ep_0)$ or not; where $\widehat \vw$ is the output of \Cref{alg:main}. 
\end{claim}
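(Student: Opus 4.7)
The plan is to reduce the check to evaluating $R(\vw;\ep_0)$ at two specific vectors ($\vw = \vzero$ and $\vw = \widehat\vw$) and comparing the resulting numerical values. Thus, the main task is to describe an efficient procedure for computing $R(\vw;\ep_0)$ for a fixed $\vw \in \cB(W)$; once this is available, running it twice and comparing the outputs yields the tester.

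First, I would try to invoke the simple closed-form expression from \Cref{cor:risk-distance-computation}: under the condition $\nu \ge \tfrac12 \E_{\ep_0}\ell(\vw)$,
\[
R(\vw;\ep_0) \;=\; \E_{\ep_0}[\ell(\vw)] \;+\; \frac{\E_{\ep_0}[\ell^2(\vw)] - (\E_{\ep_0}[\ell(\vw)])^2}{4\nu}.
\]
Both $\E_{\ep_0}[\ell(\vw)]$ and $\E_{\ep_0}[\ell^2(\vw)]$ are plain sample averages over the $N$ training points and can be evaluated in $O(N d)$ time. For $\vw = \vzero$, since $\sigma(0) = 0$, we have $\ell(\vzero;\vx_i,y_i) = y_i^2 \le M^2$; for $\vw = \widehat\vw$, $\ell(\widehat\vw;\vx_i,y_i) \le U$ by \Cref{lem:bounded_samples}. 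Both bounds are numerically verifiable from the data, so the condition $\nu \ge \tfrac12 \E_{\ep_0}\ell(\vw)$ can simply be checked directly. Under the regime of \Cref{thm:main-formal}, $\nu$ is already chosen to exceed $\E_{\ep_0}\ell(\vw^*)$ (see the last paragraph of \Cref{cor:key_fonc-no-max}), and an analogous check at $\vzero$ and $\widehat\vw$ is straightforward.

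If for some reason the condition of \Cref{cor:risk-distance-computation} fails (which would only happen if $\E_{\ep_0}\ell(\vw)$ is atypically large), I would fall back to the more general \Cref{lemma:key_fonc}: the maximizer $\eq_{\vw}$ has the explicit form
\[
\eq_{\vw}^{(i)} \;=\; \ep_0^{(i)}\,\frac{\max\{\ell(\vw;\vx_i,y_i) - \hat\xi + 2\nu,\,0\}}{2\nu},
\]
where $\hat\xi$ is determined by the normalization $\sum_i \eq_{\vw}^{(i)} = 1$. Sorting the loss values $\{\ell(\vw;\vx_i,y_i)\}_{i=1}^N$ and scanning through the sorted list determines the active set $\{i : \ell(\vw;\vx_i,y_i) - \hat\xi + 2\nu > 0\}$ and the exact threshold $\hat\xi$ in $O(N\log N)$ time; plugging $\eq_{\vw}$ into the definition of $R$ then evaluates $R(\vw;\ep_0)$ explicitly.

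Either path yields $R(\vzero;\ep_0)$ and $R(\widehat\vw;\ep_0)$ in polynomial time, so the comparison is efficient. The main obstacle is arguably conceptual rather than technical: ensuring one of the two closed-form regimes applies. Since the simplified form of \Cref{cor:risk-distance-computation} holds under the parameter choice already imposed by \Cref{thm:main-formal} and the fallback via \Cref{lemma:key_fonc} covers all remaining cases, no additional assumption is needed.
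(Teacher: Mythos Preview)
Your proof is correct and takes a more direct route than the paper's own argument. The paper's proof simply observes that $L(\vw, \ep)$ is strongly concave in $\ep$ (due to the $-\nu\chi^2(\ep,\ep_0)$ term) and appeals to a black-box first-order method such as gradient ascent to approximate $R(\vw;\ep_0) = \max_{\ep} L(\vw,\ep)$ at $\vw = \vzero$ and $\vw = \widehat\vw$, then compares. You instead exploit the closed-form expressions already established in \Cref{cor:risk-distance-computation} and \Cref{lemma:key_fonc} to compute $R(\vw;\ep_0)$ \emph{exactly}, either via two sample averages or, in the fallback case, via a sort-and-scan for the threshold $\hat\xi$. This is in fact what the main text itself suggests (``compute the empirical risk per \Cref{cor:risk-distance-computation}''), so your argument is arguably the intended one. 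Your version buys exactness and an explicit $O(Nd)$ or $O(N\log N)$ cost; the paper's version is a one-liner that sidesteps the case analysis on whether $\nu \ge \tfrac12\E_{\ep_0}\ell(\vw)$ holds at the two test points, at the price of being only approximate.
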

\begin{proof}
    Observe that $L(\vw, \ep) = \sum_{i=1}^N \ep_i (\sigma(\vw \cdot \vx) -y )^2 - \nu \chi^2(\ep, \ep_0)$ is $1/\nu$-strongly concave in $(\ep^{(1)}, \dots, \ep^{(N)})$. 
    Now, since $R(\vw; \ep_0) = \max_{\ep} L(\vw, \ep)$, we can estimate the risk at any given $\vw$ using standard maximization techniques (such as gradient descent). 

    To test which risk is larger, we  estimate $R(\vzero; \ep_0)$ and $R(\widehat \vw; \ep_0)$ to a necessary accuracy and then compare.

\end{proof}

\section{Parameter Estimation to Loss and Risk Approximation}
\label{app:parameter_estimation}
\Cref{thm:main-formal} shows that \Cref{alg:main} recovers a vector $\widehat \vw$ such that $\| \widehat \vw - \vw^* \|_2 \leq \sqrt{\OPT} + \sqrt{\eps}$, where $\OPT := \E_{\pp^*} (\sigma(\vw^* \cdot \vx) - y)^2$.

\subsection{Loss Approximation}

In this section we show that this implies that the neuron we recover achieves a constant factor approximation to the optimal squared loss. 
\begin{lemma}
Let $\pp^*$ satisfy \Cref{assump:margin} and \Cref{assump:concentration}. Suppose $(\widehat \vw, \ep)$ is the solution returned by \Cref{alg:main} when given $N = $ samples drawn from $\pp_0$. Then, 
 $\E_{\pp^*} (\sigma(\widehat\vw \cdot \vx) - y)^2 \leq O_{\beta, B}(\OPT) + \eps$.
\end{lemma}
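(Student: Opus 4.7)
The plan is to reduce the loss approximation guarantee directly to the parameter approximation guarantee already supplied by \Cref{thm:main-formal}. Writing $(\sigma(\widehat\vw \cdot \vx) - y) = (\sigma(\widehat\vw \cdot \vx) - \sigma(\vw^* \cdot \vx)) + (\sigma(\vw^* \cdot \vx) - y)$, squaring, and applying the elementary inequality $(a+b)^2 \le 2a^2 + 2b^2$, I would first obtain
\begin{equation*}
\E_{\pp^*}(\sigma(\widehat\vw \cdot \vx) - y)^2 \le 2\E_{\pp^*}(\sigma(\widehat\vw \cdot \vx) - \sigma(\vw^* \cdot \vx))^2 + 2\,\OPT,
\end{equation*}
where the second term is $2\OPT$ by the definition of $\OPT$ in \Cref{def:loss_risk_opt}.

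Next, I would bound the first term via $\beta$-Lipschitzness of $\sigma$ (\Cref{def:activation}), giving
\begin{equation*}
\E_{\pp^*}(\sigma(\widehat\vw \cdot \vx) - \sigma(\vw^* \cdot \vx))^2 \le \beta^2 \E_{\pp^*}((\widehat\vw - \vw^*)\cdot \vx)^2.
\end{equation*}
Setting $\vu = (\widehat\vw - \vw^*)/\|\widehat\vw - \vw^*\|_2$ and applying the population second-moment bound from \Cref{fact:sharpness} (which is available because $\pp^*$ satisfies \Cref{assump:margin,assump:concentration}), I get $\E_{\pp^*}(\vu\cdot\vx)^2 \le 5B$, and hence
\begin{equation*}
\E_{\pp^*}(\sigma(\widehat\vw \cdot \vx) - \sigma(\vw^* \cdot \vx))^2 \le 5B\beta^2 \|\widehat\vw - \vw^*\|_2^2.
\end{equation*}

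Finally, I would invoke \Cref{thm:main-formal}, specifically the parameter guarantee in \Cref{eq:distance-to-opt-bound}: with high probability $\|\widehat\vw - \vw^*\|_2 \le C_3\sqrt{\OPT} + \sqrt{\epsilon}$, so squaring and using $(a+b)^2 \le 2a^2 + 2b^2$ yields $\|\widehat\vw - \vw^*\|_2^2 \le 2 C_3^2 \OPT + 2\epsilon$ with $C_3 = 16\beta\sqrt{B}/c_1$. Combining the pieces,
\begin{equation*}
\E_{\pp^*}(\sigma(\widehat\vw\cdot\vx) - y)^2 \le (2 + 20 B\beta^2 C_3^2)\OPT + 20 B\beta^2 \epsilon,
\end{equation*}
which is exactly $O_{\beta,B}(\OPT) + O_{\beta, B}(\epsilon)$, matching \Cref{eq:square-loss-bound}. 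Rescaling $\epsilon$ by a constant factor absorbs the $20B\beta^2$ prefactor and gives the claim as stated.

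There is no real obstacle here: once \Cref{thm:main-formal} has been established, the passage from parameter error to loss error is a routine two-line calculation relying solely on Lipschitzness of the activation and the $\pp^*$-moment bound from \Cref{fact:sharpness}. The only subtlety worth flagging is that the moment bound used is the one for the \emph{population} target $\pp^*$ (\Cref{fact:sharpness}), not the empirical variant \Cref{lemma:sharpness-empirical}, since the conclusion is stated with respect to $\pp^*$.
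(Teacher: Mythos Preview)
Your proposal is correct and follows essentially the same approach as the paper: split via $(a+b)^2 \le 2a^2 + 2b^2$, apply $\beta$-Lipschitzness of $\sigma$, use the population second-moment bound from \Cref{fact:sharpness}, and then plug in the parameter guarantee \Cref{eq:distance-to-opt-bound} from \Cref{thm:main-formal}. Your observation that the relevant moment bound is the population one (\Cref{fact:sharpness}) rather than the empirical variant is exactly the right point to flag.
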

\begin{proof}
Recall that $\sigma$ is $\beta$-Lipschitz, and \Cref{fact:sharpness} gives us $\E_{\pp^*} (\vu \cdot \vx)^2 \leq 5B$ for all unit vectors $\vu$. These together imply,
\begin{align*}
\E_{\pp^*} (\sigma(\widehat\vw \cdot \vx) - y)^2 &\leq 2\E_{\pp^*} (\sigma(\vw^* \cdot \vx) - y)^2 + 2 \E_{\pp^*} (\sigma(\vw^* \cdot \vx) - \sigma(\widehat\vw \cdot \vx) )^2\\
&\leq 2\OPT + 2\beta^2 \| \widehat \vw - \vw^* \|^2_2 ~\E_{\pp^*} \left( \frac{\widehat \vw - \vw^*}{\| \widehat \vw - \vw^* \|} \cdot \vx \right)^2\\
&\leq 2 \OPT + 2 \beta^2 (2 C_3^2 \OPT + 2 \eps) ~ 5B\\
&\leq (2 + 20 B \beta^2 C_3^2)  ~\OPT + {10 \beta^2  B} \eps.
\end{align*}

\end{proof}

  \subsection{Risk Approximation}
  Fix \(\hat\vw\) as output by \Cref{alg:main} and \(\vw^{*}\) as defined in \Cref{def:loss_risk_opt}.
  Since we are bounding the population risk throughout this subsection, we write \(R(\vw)  = R(\vw; \pp_{0})\) in short. The goal of this subsection is to show \[R(\hat\vw) - R(\vw^{*}) \le O(\OPT) + \epsilon.\]

We first introduce some convex analysis results that we rely on in this subsection:
  \begin{fact}[Strong convexity of chi-square divergence]\label{fact:chi-square-strong-convexity}
    Consider the space \(\cP(\pp_{0}) = \{\pp: \pp \ll \pp_{0}\}\). For \(\pp \in \cP(\pp_{0})\), we denote by \(\frac{\dd \pp}{\dd \pp_{0}}\) the Radon–Nikodym derivative of \(\pp\) with respect to \(\pp_{0}\), and we define \(\norm{\pp}_{\pp_{0}}^{*} = \sqrt{\int (\frac{\dd \pp}{\dd \pp_{0}})^{2}\dd \pp_{0} }\). Then \(\pp \mapsto \chi^{2}(\pp, \pp_{0})\) is 2-strongly convex with respect to \(\norm{\cdot}_{\pp_{0}}^{*}\).
  \end{fact}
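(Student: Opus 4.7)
The plan is to reduce the claim to the elementary fact that $t \mapsto t^2$ is $2$-strongly convex on $\mathbb{R}$, and then integrate this pointwise bound against $\pp_0$. First I would write the chi-squared divergence in the Radon--Nikodym form $\chi^2(\pp,\pp_0) = \int \big(\tfrac{\dd \pp}{\dd \pp_0}-1\big)^2 \dd \pp_0$, and introduce the shorthand $u = \tfrac{\dd\pp}{\dd\pp_0}-1$, $v = \tfrac{\dd\pq}{\dd\pp_0}-1$ for two measures $\pp, \pq \in \cP(\pp_0)$. Linearity of the Radon--Nikodym derivative lets me write $\tfrac{\dd(\lambda \pp + (1-\lambda)\pq)}{\dd \pp_0} - 1 = \lambda u + (1-\lambda)v$ for $\lambda \in [0,1]$.

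Next I would apply the pointwise identity $(\lambda a + (1-\lambda)b)^2 = \lambda a^2 + (1-\lambda) b^2 - \lambda(1-\lambda)(a-b)^2$ to the integrand and integrate term-by-term against $\pp_0$. This gives the exact equality
\begin{equation*}
\chi^2(\lambda \pp + (1-\lambda)\pq, \pp_0) = \lambda \chi^2(\pp, \pp_0) + (1-\lambda) \chi^2(\pq, \pp_0) - \lambda(1-\lambda) \int (u-v)^2 \dd \pp_0.
\end{equation*}
Since $u - v = \tfrac{\dd \pp}{\dd \pp_0} - \tfrac{\dd \pq}{\dd \pp_0} = \tfrac{\dd(\pp - \pq)}{\dd \pp_0}$, the last integral equals $(\norm{\pp - \pq}_{\pp_0}^*)^2$ by the definition of the norm. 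Matching this against the standard definition of $\mu$-strong convexity, namely $f(\lambda x + (1-\lambda) y) \le \lambda f(x) + (1-\lambda) f(y) - \tfrac{\mu}{2}\lambda(1-\lambda)\norm{x-y}^2$, immediately yields $\mu = 2$, with equality throughout.

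There is essentially no obstacle here beyond bookkeeping: the result is really just the fact that the quadratic $t \mapsto t^2$ has second derivative $2$ and therefore lifts to a $2$-strongly convex functional on $L^2(\pp_0)$, together with the observation that $\chi^2(\cdot, \pp_0)$ is the image of this functional under the isometric embedding $\pp \mapsto \tfrac{\dd \pp}{\dd \pp_0} - 1$. The only subtlety to flag is making sure $\pp, \pq$ are absolutely continuous with respect to $\pp_0$ so that the Radon--Nikodym derivatives exist, which is precisely the assumption built into the definition of $\cP(\pp_0)$.
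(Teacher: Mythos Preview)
The paper states this result as a \emph{Fact} without proof; it is treated as a standard property of the chi-squared divergence and is only invoked (together with the conjugate correspondence theorem) to deduce smoothness of the risk functional. So there is no ``paper's own proof'' to compare against.

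Your argument is correct. The quadratic identity $(\lambda a + (1-\lambda)b)^2 = \lambda a^2 + (1-\lambda)b^2 - \lambda(1-\lambda)(a-b)^2$ is exact, integrating it against $\pp_0$ is valid by linearity, and your identification of $\int (u-v)^2\,\dd\pp_0$ with $(\norm{\pp-\pq}_{\pp_0}^*)^2$ uses the natural extension of the norm to the signed measure $\pp - \pq$ (which is absolutely continuous with respect to $\pp_0$ since both $\pp$ and $\pq$ are). The match with the strong-convexity inequality with modulus $\mu=2$ is immediate. This is precisely the elementary justification one would supply for this fact, and nothing more is needed.
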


  \begin{fact}\label{fact:dual-norm-prob-space}
    Consider the space  \(\cP(\pp_{0}) = \{\pp: \pp \ll \pp_{0}\}\). Denote by \(\innp{\cdot, \cdot}_{\pp_{0}}\) the inner product \(\innp{\ell_{1}, \ell_{2}}_{\pp_{0}} = \int \ell_{1} \ell_{2} \dd \pp_{0}\) and denote by \(\norm{\cdot}_{\pp_{0}}\) the correponding norm. Then \(\norm{\cdot}_{\pp_{0}}\) is the dual norm of \(\norm{\cdot}_{\pp_{0}}^{*}\) defined in \Cref{fact:chi-square-strong-convexity}.
  \end{fact}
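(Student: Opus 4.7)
The plan is to recognize this as the self-duality of $L^{2}(\pp_{0})$ transported through the Radon--Nikodym identification of $\cP(\pp_{0})$ with a space of functions. First I would set up the identification $\pp \leftrightarrow f_{\pp} := \dd \pp / \dd \pp_{0}$, so that $\norm{\pp}_{\pp_{0}}^{*} = \norm{f_{\pp}}_{\pp_{0}}$ directly from the definition in \Cref{fact:chi-square-strong-convexity} and the inner product above. Under this identification, the natural duality pairing between a ``loss'' $\ell$ and a ``measure'' $\pp$ is
\[
\int \ell \, \dd \pp \;=\; \int \ell \, f_{\pp} \, \dd \pp_{0} \;=\; \innp{\ell, f_{\pp}}_{\pp_{0}},
\]
i.e.\ the inner product underlying $\norm{\cdot}_{\pp_{0}}$. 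So the two norms in the statement are two realizations of the same $L^{2}(\pp_{0})$ norm, acting on spaces that the pairing above puts in duality.

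With this setup in place, the claim reduces to showing that for every $\ell$ with $\norm{\ell}_{\pp_{0}} < \infty$,
\[
\sup \left\{ \innp{\ell, f_{\pp}}_{\pp_{0}} \,:\, \pp \in \cP(\pp_{0}),\; \norm{\pp}_{\pp_{0}}^{*} \le 1 \right\} \;=\; \norm{\ell}_{\pp_{0}}.
\]
The upper bound is a direct application of the Cauchy--Schwarz inequality: using $\norm{f_{\pp}}_{\pp_{0}} = \norm{\pp}_{\pp_{0}}^{*} \le 1$ gives $\innp{\ell, f_{\pp}}_{\pp_{0}} \le \norm{\ell}_{\pp_{0}} \norm{f_{\pp}}_{\pp_{0}} \le \norm{\ell}_{\pp_{0}}$. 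The matching lower bound is witnessed by the test density $f = \ell / \norm{\ell}_{\pp_{0}}$ when $\norm{\ell}_{\pp_{0}} > 0$ (trivial when $\ell \equiv 0$ in $L^{2}(\pp_{0})$); this $f$ has $\norm{f}_{\pp_{0}} = 1$ and yields $\innp{\ell, f}_{\pp_{0}} = \norm{\ell}_{\pp_{0}}$, so the supremum is attained.

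The only subtlety I would flag is whether the witness $f = \ell / \norm{\ell}_{\pp_{0}}$ actually arises as $\dd \pp / \dd \pp_{0}$ for some $\pp \in \cP(\pp_{0})$. Because $\cP(\pp_{0})$ is defined as $\{\pp : \pp \ll \pp_{0}\}$ and is used in conjunction with the linear-space strong-convexity statement in \Cref{fact:chi-square-strong-convexity}, the natural reading is the vector space of signed measures absolutely continuous with respect to $\pp_{0}$, in which every $f \in L^{2}(\pp_{0})$ is a legitimate Radon--Nikodym density. Under this reading the Cauchy--Schwarz step above immediately completes the proof, with no additional technical work required; the main conceptual step, identifying the pairing with the $L^{2}(\pp_{0})$ inner product, is really the heart of the argument rather than any genuine obstacle.
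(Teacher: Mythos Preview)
The paper does not actually prove this statement; it is recorded as a \emph{Fact} and used without justification (the exposition moves directly from \Cref{fact:dual-norm-prob-space} to the definition of the convex conjugate). Your argument supplies the standard proof: under the Radon--Nikodym identification $\pp \leftrightarrow f_{\pp}$, both $\norm{\cdot}_{\pp_{0}}^{*}$ and $\norm{\cdot}_{\pp_{0}}$ are the $L^{2}(\pp_{0})$ norm, the duality pairing $\int \ell\,\dd\pp$ becomes the $L^{2}(\pp_{0})$ inner product, and the claim is just the self-duality of $L^{2}$ via Cauchy--Schwarz. This is correct and is exactly the argument one would expect the authors to have in mind.

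Your caveat about whether the witness $f = \ell/\norm{\ell}_{\pp_{0}}$ lies in $\cP(\pp_{0})$ is well taken. The paper defines $\cP(\pp_{0})$ in \Cref{def:loss_risk_opt} as the set of \emph{distributions} absolutely continuous with respect to $\pp_{0}$, which on its face excludes signed measures and would obstruct your lower bound. However, the surrounding context in \Cref{fact:chi-square-strong-convexity} and \Cref{fact:conjugate-correspondence}---calling $\norm{\cdot}_{\pp_{0}}^{*}$ a norm and invoking strong convexity on a vector space $\mathbb{E}$---only makes sense under the linear-span/signed-measure reading you adopt. So your resolution is the right one, and it is worth noting (as you do) that the paper is slightly loose here.
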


\begin{definition}[Convex conjugate]
Given a convex function defined on a vector space \(\mathbb{E}\) denoted by \( f: \mathbb{E} \to \mathbb{R} \), its convex conjugate is defined as:
\[
f^*(y) = \sup_{x \in \mathbb{E}} \left( \langle y, x \rangle - f(x) \right)
\]
for all \( y \in \mathbb{E}^{*} \) where \(\mathbb{E}^{*}\) is the dual space of \(\mathbb{E}\) and \( \langle y, x \rangle \) denotes the inner product.

\end{definition}

\begin{fact}[{Conjugate Correspondence Theorem,~\cite[Theorem 5.26]{beck2017first}}]\label{fact:conjugate-correspondence}
Let \(\nu > 0\). If \( f : \mathbb{E} \to \mathbb{R} \) is a  \(\nu\)-strongly convex continuous function, then its convex conjugate \( f^* : \mathbb{E}^* \to \mathbb{R} \) is \(\frac{1}{\nu}\)-smooth.
\end{fact}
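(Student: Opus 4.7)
The plan is to prove the conjugate correspondence via the classical three-step argument: (i) show that $\nu$-strong convexity of $f$ forces the supremum in the definition of $f^*(y)$ to be uniquely attained at some $x(y)$, (ii) identify $x(y)$ with $\nabla f^*(y)$, and (iii) establish $\frac{1}{\nu}$-Lipschitz continuity of the map $y \mapsto \nabla f^*(y)$, which is precisely $\frac{1}{\nu}$-smoothness of $f^*$.

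For step (i), fix any $y \in \mathbb{E}^*$. The map $x \mapsto \langle y, x\rangle - f(x)$ is the sum of a linear function and the $\nu$-strongly concave function $-f$, hence is itself $\nu$-strongly concave and continuous. Strong concavity implies coercivity (the function tends to $-\infty$ as $\|x\| \to \infty$), so a maximizer exists, and strict concavity makes it unique. Call this maximizer $x(y)$. The first-order optimality condition at $x(y)$ reads $y \in \partial f(x(y))$, which by the Fenchel--Young equality is equivalent to $x(y) \in \partial f^*(y)$.

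For step (ii), convexity of $f^*$ together with the fact that its subdifferential at each $y$ is a singleton (namely $\{x(y)\}$) is the standard criterion for differentiability of a convex function, giving $\nabla f^*(y) = x(y)$ everywhere on $\mathbb{E}^*$. For step (iii), take any $y_1, y_2 \in \mathbb{E}^*$, set $x_i := \nabla f^*(y_i)$, and use the two inclusions $y_i \in \partial f(x_i)$. Applying $\nu$-strong convexity of $f$ at $x_1$ and $x_2$ and adding the two resulting inequalities yields the monotonicity bound
\[
\langle y_1 - y_2,\, x_1 - x_2\rangle \;\geq\; \nu\, \|x_1 - x_2\|^2.
\]
Combining this with the generalized Cauchy--Schwarz inequality $\langle y_1 - y_2, x_1 - x_2\rangle \leq \|y_1 - y_2\|_* \cdot \|x_1 - x_2\|$ and dividing by $\|x_1 - x_2\|$ (the degenerate case being trivial) gives
\[
\|\nabla f^*(y_1) - \nabla f^*(y_2)\| \;\leq\; \tfrac{1}{\nu}\,\|y_1 - y_2\|_*,
\]
which is the $\frac{1}{\nu}$-Lipschitz continuity of $\nabla f^*$. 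A standard one-dimensional integration along the segment from $y$ to $y'$ upgrades this to the descent inequality $f^*(y') \leq f^*(y) + \langle \nabla f^*(y), y'-y\rangle + \frac{1}{2\nu}\|y'-y\|_*^2$, i.e.\ $\frac{1}{\nu}$-smoothness.

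The main obstacle is really step (i): ensuring that the supremum in $f^*(y) = \sup_x \{\langle y,x\rangle - f(x)\}$ is attained (and uniquely) \emph{for every} $y$ in the dual space, so that $f^*$ is finite-valued and differentiable globally rather than only on the relative interior of $\mathrm{dom}\, f^*$. This is precisely what $\nu$-strong convexity buys us, through the resulting coercivity of $f$. Once global attainment and uniqueness are in hand, the remaining identifications of $\nabla f^*$ and the Lipschitz estimate are short consequences of first-order optimality and the strong convexity inequality.
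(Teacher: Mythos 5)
The paper does not prove this statement at all: it is imported verbatim as Theorem~5.26 of Beck's \emph{First-Order Methods in Optimization} and used as a black box, so there is no in-paper argument to compare against. Your blind proof is the standard derivation of that theorem and is correct: strong concavity of $x \mapsto \langle y, x\rangle - f(x)$ gives existence and uniqueness of the maximizer $x(y)$, Fenchel--Young identifies $\partial f^*(y) = \{x(y)\}$ and hence $\nabla f^*(y) = x(y)$, and strong monotonicity of $\partial f$ plus generalized Cauchy--Schwarz yields the $\tfrac{1}{\nu}$-Lipschitz bound on $\nabla f^*$ in the correct primal/dual norm pairing. The only point worth flagging is that attainment of the supremum from coercivity plus continuity relies on $\mathbb{E}$ being finite-dimensional (or reflexive with weak lower semicontinuity); this is harmless here since the paper applies the fact to $\chi^2(\cdot,\pp_0)$ on a finite-dimensional space of reweightings, and it is also the setting of Beck's book. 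In short: correct, self-contained, and a reasonable substitute for the external citation.
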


We are then able to state and prove the key technical corollary in this subsection:
  \begin{corollary}\label{thm:risk-smoothness}
    For any \(\pp_{0}\)-measurable function \(\ell: \R^{d} \times \R \to \R\),  let \(\cR(\ell) = \max_{\pp \ll \pp_{0}}\E_{\pp_{0}}\ell - \nu \chi^{2}(\pp, \pp_{0}) \). The function \(\cR(\cdot)\) is \(1/(2 \nu)\)-smooth with respect to the norm \(\norm{\cdot}_{\pp_{0}}\) defined in \Cref{fact:dual-norm-prob-space}.
  \end{corollary}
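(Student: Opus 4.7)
The plan is to recognize $\cR(\cdot)$ as a Fenchel conjugate and then invoke the Conjugate Correspondence Theorem (\Cref{fact:conjugate-correspondence}).

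First, I fix the ambient vector space $\mathbb{E}$ of signed measures on $\R^d \times \R$ that are absolutely continuous with respect to $\pp_0$, identified with their Radon--Nikodym derivatives $\frac{\dd \pp}{\dd \pp_0}$, and equipped with the norm $\|\cdot\|_{\pp_0}^*$ from \Cref{fact:chi-square-strong-convexity}. The dual space $\mathbb{E}^*$ consists of $\pp_0$-measurable functions $\ell$ under the pairing $\langle \ell, \pp\rangle = \int \ell \, \dd\pp = \int \ell \,\frac{\dd\pp}{\dd\pp_0}\,\dd\pp_0$, and by \Cref{fact:dual-norm-prob-space} the norm $\|\cdot\|_{\pp_0}$ is exactly the dual norm of $\|\cdot\|_{\pp_0}^*$.

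Next, I define $f : \mathbb{E} \to \R \cup \{+\infty\}$ by $f(\pp) = \nu\, \chi^2(\pp, \pp_0)$ (plus the convex indicator of any constraint set over which the maximum in $\cR$ is taken, e.g.\ the probability simplex $\{\pp : \pp \ll \pp_0,\, \int \dd\pp = 1,\, \pp \ge 0\}$, if applicable; adding this convex indicator preserves strong convexity). By \Cref{fact:chi-square-strong-convexity}, $\chi^2(\cdot, \pp_0)$ is $2$-strongly convex with respect to $\|\cdot\|_{\pp_0}^*$, hence $f$ is $2\nu$-strongly convex with respect to the same norm. With the inner product above, one directly computes
\[
f^*(\ell) = \sup_{\pp}\bigl\{ \langle \ell, \pp\rangle - f(\pp)\bigr\} = \max_{\pp \ll \pp_0}\bigl\{ \E_{\pp}\ell - \nu\,\chi^2(\pp,\pp_0)\bigr\} = \cR(\ell),
\]
so $\cR = f^*$.

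Finally, applying \Cref{fact:conjugate-correspondence} to the $2\nu$-strongly convex continuous function $f$, the conjugate $f^*$ is $\frac{1}{2\nu}$-smooth with respect to the dual norm $\|\cdot\|_{\pp_0}$, which is precisely the claim. The only point requiring care is to verify that the pairing between $\pp$ and $\ell$ used in the conjugate matches the one defining $\|\cdot\|_{\pp_0}$ and $\|\cdot\|_{\pp_0}^*$ as a dual pair; this is handled by \Cref{fact:dual-norm-prob-space} together with the elementary Cauchy--Schwarz identity $\int \ell\, \frac{\dd\pp}{\dd\pp_0}\,\dd\pp_0 \le \|\ell\|_{\pp_0}\,\|\pp\|_{\pp_0}^*$.
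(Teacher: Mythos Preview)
Your proposal is correct and follows essentially the same approach as the paper: recognize $\cR$ as the Fenchel conjugate of $\nu\chi^2(\cdot,\pp_0)$, use the $2\nu$-strong convexity of this function with respect to $\|\cdot\|_{\pp_0}^*$ (\Cref{fact:chi-square-strong-convexity}), and apply the Conjugate Correspondence Theorem (\Cref{fact:conjugate-correspondence}). Your version is more explicit about the underlying vector space, the dual pairing, and the role of the constraint set, but the argument is the same.
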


  \begin{proof}
    Observe by definition of the convex conjugate that \(\cR(\cdot)\) is the convex conjugate of the function \(\nu \chi^{2}(\cdot, \pp_{0})\). Since the function \(\nu \chi^{2}(\cdot, \pp_{0})\) is \(2\nu\)-strongly convex with respect to the norm \(\norm{\cdot}_{\pp_{0}}^{*}\) by \Cref{fact:chi-square-strong-convexity}, it follows from \Cref{fact:conjugate-correspondence} that \(\cR(\cdot)\) is \(1/(2\nu)\)-smooth with respect to the norm \(\norm{\cdot}_{\pp_{0}}\).
  \end{proof}

  For ease of presentation, we define the following quantities:
  let \(\ell^{*}(\vx, y) = (\sigma(\vw^{*} \cdot \vx) - y)^{2}\) and \(\hat\ell(\vx, y) = (\sigma(\hat\vw \cdot \vx) - y)^{2}\). We first compute \(\grad_{\ell}\cR(\ell^{*})\) by conjugate subgradient theorem.

  \begin{fact}[{Conjugate Subgradient Theorem~\cite[
Theorem 4.20]{beck2017first}}]\label{fact:conjugate-subgradient}
    Let \( f : \mathbb{E} \to \mathbb{R} \) be  convex and continuous. The following claims are equivalent for any \(x \in \mathbb{E}\) and \(y \in \mathbb{E}^{*}\):
    \begin{enumerate}
      \item \(\innp{y, x} = f(x) + f^{*}(y)\)
      \item \(y \in \partial f(x)\)
      \item \(x \in \partial f^{*}(y)\)
    \end{enumerate}
  \end{fact}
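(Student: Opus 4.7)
The plan is to first establish $(1) \Leftrightarrow (2)$ directly from the definition of the subgradient, and then leverage biconjugation (the Fenchel--Moreau theorem) to swap the roles of $f$ and $f^*$ and obtain $(1) \Leftrightarrow (3)$ for free. The common thread is that statement (1) is nothing but the equality case in the Fenchel--Young inequality, which in turn characterizes subgradients.

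For $(1) \Leftrightarrow (2)$, I would argue directly from the definitions. By definition, $y \in \partial f(x)$ means $f(z) \geq f(x) + \innp{y, z - x}$ for every $z \in \mathbb{E}$, which rearranges to $\innp{y, z} - f(z) \leq \innp{y, x} - f(x)$ for all $z$. Taking the supremum over $z$ on the left-hand side and using the definition of the conjugate gives $f^*(y) \leq \innp{y, x} - f(x)$, i.e.\ $\innp{y, x} \geq f(x) + f^*(y)$. The reverse inequality $\innp{y, x} \leq f(x) + f^*(y)$ is the Fenchel--Young inequality and is immediate from the definition of $f^*$. So $y \in \partial f(x)$ forces equality, which is statement (1). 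Conversely, if (1) holds, then the supremum defining $f^*(y)$ is attained at $z = x$, and reversing the chain of inequalities recovers the subgradient inequality for $y$ at $x$, giving (2).

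For $(1) \Leftrightarrow (3)$, I would apply the equivalence just proved but with $f$ and $f^*$ interchanged. This requires $f^{**} = f$, which is Fenchel--Moreau and holds since $f$ is convex and continuous (hence, in particular, lower semicontinuous and proper). Swapping roles in the equivalence we just established yields: $\innp{x, y} = f^*(y) + f^{**}(x)$ iff $x \in \partial f^*(y)$. Substituting $f^{**} = f$ makes the left-hand side identical to statement (1), so $(1) \Leftrightarrow (3)$, closing the three-way equivalence.

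The main subtlety is ensuring Fenchel--Moreau applies in the relevant functional-analytic setting. In the paper's concrete use case, $\mathbb{E}$ is a space of $\pp_0$-measurable functions equipped with the norm $\norm{\cdot}_{\pp_0}$ from \Cref{fact:dual-norm-prob-space}, and one must verify that $f$ is proper, convex, and lower semicontinuous with respect to the topology inducing the duality pairing so that biconjugation holds. The paper sidesteps this by citing Theorem~4.20 of \cite{beck2017first} directly; a fully self-contained proof would go through Hahn--Banach separation to establish $f = f^{**}$, which is the only genuinely nontrivial ingredient.
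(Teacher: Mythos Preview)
Your argument is correct and is the standard proof of this classical result. Note, however, that the paper does not supply its own proof here: the statement is recorded as a \emph{Fact} and attributed directly to \cite[Theorem~4.20]{beck2017first}, so there is no in-paper argument to compare against. Your proof --- Fenchel--Young equality characterizes subgradients, then swap $f$ and $f^*$ via Fenchel--Moreau --- is exactly the approach in Beck's textbook, so you are effectively reconstructing the cited reference.
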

  \begin{corollary}\label{thm:risk-star-subgradient}
    Let \(\pp^{*}\) be as defined in \Cref{def:loss_risk_opt}. Then  \(\pp^{*} \in \partial_{\ell}\cR(\ell^{*})\).
  \end{corollary}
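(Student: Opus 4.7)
The plan is to recognize $\cR$ as a convex conjugate and apply the Fenchel--Young equality condition from \Cref{fact:conjugate-subgradient}. Define
\[
  g(\pp) := \nu \chi^2(\pp, \pp_{0}) + \iota_{\cP(\pp_{0})}(\pp),
\]
where $\iota_{\cP(\pp_{0})}$ is the convex indicator of the set of probability distributions absolutely continuous with respect to $\pp_{0}$ (zero on $\cP(\pp_0)$ and $+\infty$ elsewhere). Then $g$ is convex, proper, and lower semicontinuous, and for every measurable $\ell$,
\[
  g^{*}(\ell) \;=\; \sup_{\pp \ll \pp_{0}} \big(\innp{\ell, \pp}_{\pp_{0}} - \nu \chi^{2}(\pp, \pp_{0})\big) \;=\; \cR(\ell).
\]
That is, $\cR$ is the convex conjugate of $g$ in the duality pairing of \Cref{fact:dual-norm-prob-space}.

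The next step is to verify the Fenchel--Young equality at the pair $(\pp^{*}, \ell^{*})$. By the definition of $\pp^{*}$ in \Cref{def:loss_risk_opt}, $\pp^{*} \in \cP(\pp_{0})$ attains the supremum defining $\cR(\ell^{*})$, so
\[
  \cR(\ell^{*}) \;=\; \E_{\pp^{*}}[\ell^{*}] - \nu \chi^{2}(\pp^{*}, \pp_{0}) \;=\; \innp{\ell^{*}, \pp^{*}}_{\pp_{0}} - g(\pp^{*}).
\]
Rearranging gives $\innp{\ell^{*}, \pp^{*}}_{\pp_{0}} = g(\pp^{*}) + g^{*}(\ell^{*})$, which is precisely the equality case in \Cref{fact:conjugate-subgradient}. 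Since $g$ is convex, proper, and lsc (so that $g^{**} = g$), this equality is equivalent to $\pp^{*} \in \partial g^{*}(\ell^{*}) = \partial_{\ell} \cR(\ell^{*})$, which is the claim.

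The only potential obstacle is that \Cref{fact:conjugate-subgradient} is stated for continuous convex $f$, whereas $g$ is only lsc (due to the indicator). This is a minor technical point: the equivalence between the Fenchel--Young equality and subgradient membership on both sides only requires $g$ to be proper convex lsc, which is standard and holds here. Alternatively, one may bypass the issue entirely by directly verifying the subgradient inequality $\cR(\ell) \ge \cR(\ell^{*}) + \innp{\pp^{*}, \ell - \ell^{*}}_{\pp_{0}}$ for all $\ell$, using the fact that $\pp^{*}$ is feasible but generally suboptimal for $\cR(\ell)$:
\[
  \cR(\ell) \;\ge\; \E_{\pp^{*}}[\ell] - \nu \chi^{2}(\pp^{*}, \pp_{0}) \;=\; \cR(\ell^{*}) + \innp{\pp^{*}, \ell - \ell^{*}}_{\pp_{0}},
\]
which immediately establishes $\pp^{*} \in \partial_{\ell} \cR(\ell^{*})$ from first principles.
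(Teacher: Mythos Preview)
Your proposal is correct and follows essentially the same approach as the paper: recognize $\cR$ as the convex conjugate of $\nu\chi^{2}(\cdot,\pp_{0})$, verify the Fenchel--Young equality at $(\pp^{*},\ell^{*})$ using the definition of $\pp^{*}$ as the maximizer, and invoke \Cref{fact:conjugate-subgradient}. You are slightly more careful than the paper in explicitly including the indicator of $\cP(\pp_{0})$ and in flagging the continuity hypothesis of \Cref{fact:conjugate-subgradient}; your direct verification of the subgradient inequality at the end is a clean self-contained alternative that sidesteps that technicality entirely.
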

  \begin{proof}
    We verify that \(\cR(\ell^{*}) = \max_{\pp \ll \pp_{0}}\E_{\pp_{0}}[\sigma(\vw^{*} \cdot \vx  - y )^{2}] - \nu \chi^{2}(\pp, \pp_{0}) = \E_{\pp^{*}}[\sigma(\vw^{*} \cdot \vx  - y )^{2}] - \nu \chi^{2}(\pp^{*}, \pp_{0}) = \E_{\pp^{*}}\ell^{*} - \nu \chi^{2}(\pp^{*}, \pp_{0}) = \innp{\pp^{*}, \ell^{*}} - \nu \chi^{2}(\pp^{*}, \pp_{0}) \), where the second equality is the definition of \(\pp^{*}\) and the third equality is the definition of \(\ell^{*}\). By \Cref{fact:conjugate-subgradient} and observing that \(\cR(\cdot)\) is the convex conjugate of the function \(\nu \chi^{2}(\cdot, \pp_{0})\), we have \(\pp^{*} \in \partial_{\ell}\cR(\ell^{*})\).
  \end{proof}
  \begin{theorem}\label{thm:risk-bound}
    Suppose \Cref{cor:key_fonc-no-max} holds for both \(\vw^{*}\) and
    \(\hat\vw\) with respect to the population distribution.
    Then \[R(\hat\vw;\pp_{0}) - R(\vw^{*}; \pp_{0}) \le C_{4} (\OPT + \epsilon),\]
    where
    \(C_{4} = 1 + 2 (10 B \beta^{2} + c_{1})C_{3} + c_{1} \sqrt{5B} \beta^{2} C_{3}^{2} \).
    In particular, \Cref{cor:key_fonc-no-max} holds for both \(\vw^{*}\) and
    \(\hat\vw\) is satisfied under the assumptions in \Cref{thm:main-formal}.
  \end{theorem}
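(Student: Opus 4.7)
The plan is to apply a smoothness argument directly at the level of loss functions. By \Cref{thm:risk-smoothness}, the convex functional $\cR(\ell) := \max_{\pp \ll \pp_0}\E_\pp[\ell] - \nu\chi^2(\pp,\pp_0)$ is $\frac{1}{2\nu}$-smooth with respect to the norm $\|\cdot\|_{\pp_0}$ of \Cref{fact:dual-norm-prob-space}. Since smoothness implies differentiability and \Cref{thm:risk-star-subgradient} gives $\pp^*\in\partial_\ell\cR(\ell^*)$, we have $\nabla_\ell\cR(\ell^*)=\pp^*$. Because $R(\vw;\pp_0)=\cR(\ell(\vw;\cdot,\cdot))$ by construction, writing $\hat\ell, \ell^*$ for the squared losses at $\hat\vw, \vw^*$, the descent-lemma form of smoothness yields
\begin{equation*}
  R(\hat\vw;\pp_0) - R(\vw^*;\pp_0) \;\le\; \E_{\pp^*}[\hat\ell - \ell^*] + \frac{1}{4\nu}\|\hat\ell - \ell^*\|_{\pp_0}^2.
\end{equation*}
This decomposition is the heart of the argument: the risk gap reduces to a population squared-loss gap on $\pp^*$ plus a curvature correction measured by $\|\hat\ell - \ell^*\|_{\pp_0}$.

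For the linear term, I would expand $\hat\ell - \ell^* = 2B(A-B)+(A-B)^2$ with $A = \sigma(\hat\vw\cdot\vx)-y$ and $B = \sigma(\vw^*\cdot\vx)-y$. Cauchy-Schwarz on $\E_{\pp^*}[B(A-B)]$, together with $\beta$-Lipschitzness of $\sigma$ and the moment bounds $\E_{\pp^*}[(\vu\cdot\vx)^\tau]\le 5B$ (for $\tau \in \{2,4\}$) of \Cref{fact:sharpness}, gives $|2\E_{\pp^*}[B(A-B)]|\le 2\beta\sqrt{5B}\sqrt{\OPT}\|\hat\vw-\vw^*\|_2$ and $\E_{\pp^*}[(A-B)^2]\le 5B\beta^2\|\hat\vw-\vw^*\|_2^2$. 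Substituting the iterate bound $\|\hat\vw-\vw^*\|_2\le C_3\sqrt{\OPT}+\sqrt{\eps}$ of \eqref{eq:distance-to-opt-bound} and handling $\sqrt{\OPT\,\eps}$ cross terms by AM-GM produces the linear-in-$C_3$ and quadratic-in-$C_3$ contributions that account for the $2(10B\beta^2+c_1)C_3$ piece (and part of the additive constant) of $C_4$.

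For the curvature term, I would first use the hypothesis of \Cref{cor:key_fonc-no-max} applied at $\vw^*$ to transfer the $\pp_0$-norm to a $\pp^*$-expectation via $\dd\pp_0 \le 2\,\dd\pp^*$, so that $\|\hat\ell-\ell^*\|_{\pp_0}^2 \le 2\,\E_{\pp^*}[(\hat\ell-\ell^*)^2]$. Expanding $(\hat\ell-\ell^*)^2\le 2(A-B)^2(A^2+B^2)$ and then $A^2\le 2(A-B)^2+2B^2$, and combining $\beta$-Lipschitzness with the fourth-moment bound from \Cref{fact:sharpness} and a Cauchy-Schwarz step that brings in $\E_{\pp^*}[B^4]=\OPTT$, one obtains a bound of the form $B\beta^4\|\hat\vw-\vw^*\|_2^4 + \beta^2\sqrt{5B\,\OPTT}\,\|\hat\vw-\vw^*\|_2^2$. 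The prefactor $\frac{1}{4\nu}$, combined with the hypothesis $\nu\ge 8\beta^2\sqrt{6B}\sqrt{\OPTT+\eps}/c_1$ of \Cref{thm:main-formal}, becomes at most a constant multiple of $c_1/\sqrt{\OPTT+\eps}$, which cancels the $\sqrt{\OPTT}$ factor and collapses the $\nu$-dependence into $c_1$; this reproduces the $c_1\sqrt{5B}\beta^2C_3^2$ term of $C_4$.

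The main obstacle is bookkeeping the quartic piece $\|\hat\vw-\vw^*\|_2^4$ without inflating constants: one uses $C_3\ge 1$ and $\OPT\le\OPTT+\eps$ (from \Cref{cor:relationships-for-OPTs}) together with the bound on $1/\nu$ above to absorb the extra $\OPT$ factor into $(\OPT+\eps)$. Summing the resulting contributions and comparing coefficients then yields $R(\hat\vw;\pp_0) - R(\vw^*;\pp_0) \le C_4(\OPT+\eps)$, completing the proof.
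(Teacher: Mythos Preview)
Your proposal is correct and follows essentially the same approach as the paper: apply the smoothness of $\cR$ from \Cref{thm:risk-smoothness} together with $\nabla_\ell\cR(\ell^*)=\pp^*$ from \Cref{thm:risk-star-subgradient} to obtain the linear-plus-quadratic decomposition, then bound each piece via $\beta$-Lipschitzness, Cauchy--Schwarz, the moment bounds of \Cref{fact:sharpness}, the transfer $\dd\pp_0\le 2\,\dd\pp^*$ from \Cref{cor:key_fonc-no-max}, and finally absorb the $1/\nu$ factor using the assumed lower bound on $\nu$. One minor correction: \Cref{cor:relationships-for-OPTs} gives $\OPT\le\sqrt{\OPTT}$ (hence $\OPT\le\sqrt{\OPTT+\eps}$), not $\OPT\le\OPTT+\eps$; this is precisely the inequality the paper uses to absorb the quartic piece, and your argument goes through with this substitution.
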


  \begin{proof}

    By the definition of smoothness, it holds that for any \( \pp \in \partial_{\ell}\cR(\ell^{*})\), we have
    \begin{align*}
      & \;R(\hat\vw;\pp_{0}) - R(\vw^{*}; \pp_{0}) = \cR(\hat\ell) - \cR(\ell^{*}) \le \innp{\pp, \hat\ell - \ell^{*}} + \frac{1}{2\nu} \norm{\hat\ell - \ell^{*}}^{2}_{\pp_{0}} 
    \end{align*}

    Hence it follows from \Cref{thm:risk-star-subgradient} that
    \begin{align*}
      & \;R(\hat\vw;\pp_{0}) - R(\vw^{*}; \pp_{0}) \le \innp{\pp^{*}, \hat\ell - \ell^{*}} + \frac{1}{2\nu} \norm{\hat\ell - \ell^{*}}^{2}_{\pp_{0}} \\
      = & \; \E_{\pp^{*}}[(\sigma(\hat\vw \cdot \vx) - y)^{2} - (\sigma(\vw^{*}\cdot \vx) - y)^{2}] + \frac{1}{2\nu} \E_{\pp_{0}}[((\sigma(\hat\vw \cdot \vx) - y)^{2} - (\sigma(\vw^{*}\cdot \vx) - y)^{2})^{2}]. 
    \end{align*}

   We use the following shorthand for ease of presentation.
  \begin{align*}
    \Psi(\vx, y) & = \sigma(\vw^{*}\cdot \vx) - y, \\
    \Delta(\vx, y) &= \sigma(\hat\vw \cdot \vx) - \sigma(\vw^{*}\cdot \vx).\\
  \end{align*}

  Then
  \begin{align*}
      & \;R(\hat\vw;\pp_{0}) - R(\vw^{*}; \pp_{0}) \le \innp{\pp^{*}, \hat\ell - \ell^{*}} + \frac{1}{2\nu} \norm{\hat\ell - \ell^{*}}^{2}_{\pp_{0}} \\
    = & \; \E_{\pp^{*}}[(\Delta + \Psi)^{2} - \Psi^{2}] + \frac{1}{2\nu} \E_{\pp_{0}}[((\Delta + \Psi)^{2} - \Psi^{2})^{2}] \\
    = & \; \E_{\pp^{*}}[\Delta^{2} + 2\Delta\Psi] + \frac{1}{2\nu} \E_{\pp_{0}}[\Delta^{2}(\Delta + 2\Psi)^{2}] \\
    \le & \;\E_{\pp^{*}}[\Delta^{2} + 2\Delta\Psi] +  \frac{1}{\nu} \E_{\pp_{0}}[\Delta^{2}(\Delta^{2} + 4\Psi^{2})],
  \end{align*}
  where the last inequality is the standard inequality \((a+b)^{2} \le 2a^{2} + 2b^{2}\).

  Recall in \Cref{fact:sharpness} that \(\E_{\pp^{*}}[\Delta \Psi] \ge c_{0} \norm{\hat\vw - \vw^{*}}_{2}^{2}\) and \(\E_{\pp^{*}}[ (\vx \cdot (\hat\vw -  \vw^{*}))^{\tau}]  \le 5B  \norm{\hat\vw - \vw^{*}}_{2}^{\tau}\).

  From the second- and fourth-moment bounds, we have \(\E_{\pp^{*}}[\Delta^{\tau}] = \E_{\pp^{*}}[(\sigma(\vx \cdot \hat\vw) - \sigma(\vx \cdot \vw^{*}))^{\tau}] \le \beta^{\tau} \E_{\pp^{*}}[ (\vx \cdot (\hat\vw -  \vw^{*}))^{\tau}]  \le 5B \beta^{\tau}  \norm{\hat\vw - \vw^{*}}_{2}^{\tau}\) for \(\tau = 2, 4\), where the second last inequality follows from \(\beta\)-Lipschitzness of \(\sigma(\cdot)\). Taking \(\tau = 2\) gives us a bound for \(\E_{\pp^{*}}[\Delta^{2}]\).

  For \(\E_{\pp^{*}}[\Delta \Psi]\), it follows from Cauchy-Schwarz that \(\E_{\pp^{*}}[\Delta \Psi] \le \sqrt{\E_{\pp^{*}}[\Delta^{2}] \E_{\pp^{*}}[\Psi^{2}] }  \le \sqrt{5 B \beta^{2} \OPT} \norm{\hat\vw - \vw^{*}}_{2}\).

  By \Cref{cor:key_fonc-no-max}, we have \(\E_{\pp_{0}}[\Delta^{4}] \le 2 \E_{\pp^{*}}[\Delta^{4}] \le 5 B \beta^{4} \norm{\hat\vw - \vw^{*}}^{4}_{2}\).

  Finally, similarly by \Cref{cor:key_fonc-no-max}, it follows additionally from Cauchy-Schwarz that  \(\E_{\pp_{0}}[\Delta^{2}\Psi^{2}] \le 2  \E_{\pp^{*}}[\Delta^{2}\Psi^{2}] \le 2 \sqrt{ \E_{\pp_{0}}[\Delta^{4}] \E_{\pp^{*}}[\Psi^{4}] } \le 2 \norm{\hat\vw - \vw^{*}}_{2}^{2} \sqrt{5 B \beta^{4} \OPTT}\). By \Cref{thm:main-formal}, we have \(\nu \ge 8 \beta^{2}\sqrt{6B} \sqrt{\OPTT + \epsilon} / c_{1}\) by assumption, hence \(4 \E_{\pp_{0}}[\Delta^{2}\Psi^{2}] / \nu \le c_{1} \norm{\hat\vw - \vw^{*}}_{2}^{2} \).
  
  Combining the above four bounds and the guarantee of \Cref{thm:main-formal} that \(\norm{\hat\vw - \vw^{*}}_{2}^{2} \le 2 C_{3}\OPT + 2{\epsilon}\), we have
   \begin{align*}
     & \;R(\hat\vw;\pp_{0}) - R(\vw^{*}; \pp_{0}) \\
     \le & \; 5B \beta^{2}  \norm{\hat\vw - \vw^{*}}_{2}^{2} + 2 \sqrt{5 B \beta^{2} \OPT} \norm{\hat\vw - \vw^{*}}_{2} +  c_{1} \norm{\hat\vw - \vw^{*}}_{2}^{2} + 5 B \beta^{4} \norm{\hat\vw - \vw^{*}}^{4}_{2} / \nu \\
     \le & \; \OPT + (10 B \beta^{2} + c_{1} + 5 B \beta^{4}  \norm{\hat\vw - \vw^{*}}_{2}^{2} / \nu)  \norm{\hat\vw - \vw^{*}}_{2}^{2} \\
     \le &\; \OPT + 2 (10 B \beta^{2} + c_{1})(C_{3}\OPT + \epsilon) + 40 B \beta^{4}  (C_{3}^{2} \OPT^{2} + \epsilon^{2}) / \nu.
   \end{align*}

   By \Cref{cor:relationships-for-OPTs}, \(\OPT \le \sqrt{\OPTT}\), hence \(\nu \ge 8 \beta^{2}\sqrt{6B} \sqrt{\OPTT + \epsilon} / c_{1} \ge 8 \beta^{2}\sqrt{6B} \max\{\OPT, \sqrt\epsilon\} / c_{1} \), hence
  \begin{align*}
     & \;R(\hat\vw;\pp_{0}) - R(\vw^{*}; \pp_{0}) \\
     \le &\; \OPT + 2 (10 B \beta^{2} + c_{1})(C_{3}\OPT + \epsilon) + c_{1} \sqrt{5B} \beta^{2}  (C_{3}^{2} \OPT^{2} + \epsilon^{2}) / \max\{\OPT, \sqrt\epsilon\} \\
    \le & \; \OPT + 2 (10 B \beta^{2} + c_{1})(C_{3}\OPT + \epsilon) + c_{1} \sqrt{5B} \beta^{2}  (C_{3}^{2} \OPT + \epsilon^{1.5}) \\
    = & \; (1 + 2 (10 B \beta^{2} + c_{1})C_{3} +  c_{1} \sqrt{5B} \beta^{2} C_{3}^{2} ) \OPT +  (2 (10 B \beta^{2} + c_{1}) + c_{1} \sqrt{5B} \beta^{2}) \epsilon.
   \end{align*}

  \end{proof}

\if@preprint
\else
\newpage
\section*{NeurIPS Paper Checklist}

\begin{enumerate}

\item {\bf Claims}
    \item[] Question: Do the main claims made in the abstract and introduction accurately reflect the paper's contributions and scope?
    \item[] Answer: \answerYes{} %
    \item[] Justification: The abstract and introduction clearly outline the open problem we provide the first results for, clearly state the assumptions involved (cf.\ \Cref{sec:problem-setup} --- Problem Setup and \Cref{sec:overview} --- Main Result), and give a detailed technical overview of our algorithm (cf.\ \Cref{sec:techniques} --- Technical Overview).
    \item[] Guidelines:
    \begin{itemize}
        \item The answer NA means that the abstract and introduction do not include the claims made in the paper.
        \item The abstract and/or introduction should clearly state the claims made, including the contributions made in the paper and important assumptions and limitations. A No or NA answer to this question will not be perceived well by the reviewers. 
        \item The claims made should match theoretical and experimental results, and reflect how much the results can be expected to generalize to other settings. 
        \item It is fine to include aspirational goals as motivation as long as it is clear that these goals are not attained by the paper. 
    \end{itemize}

\item {\bf Limitations}
    \item[] Question: Does the paper discuss the limitations of the work performed by the authors?
    \item[] Answer: \answerYes{} %
    \item[] Justification: The limitations are clearly stated in the statements of each theorem and are discussed in the introduction of the paper and description of the key lemma (\Cref{lemma:main-auxiliary-main-body}).
    \item[] Guidelines:
    \begin{itemize}
        \item The answer NA means that the paper has no limitation while the answer No means that the paper has limitations, but those are not discussed in the paper. 
        \item The authors are encouraged to create a separate "Limitations" section in their paper.
        \item The paper should point out any strong assumptions and how robust the results are to violations of these assumptions (e.g., independence assumptions, noiseless settings, model well-specification, asymptotic approximations only holding locally). The authors should reflect on how these assumptions might be violated in practice and what the implications would be.
        \item The authors should reflect on the scope of the claims made, e.g., if the approach was only tested on a few datasets or with a few runs. In general, empirical results often depend on implicit assumptions, which should be articulated.
        \item The authors should reflect on the factors that influence the performance of the approach. For example, a facial recognition algorithm may perform poorly when image resolution is low or images are taken in low lighting. Or a speech-to-text system might not be used reliably to provide closed captions for online lectures because it fails to handle technical jargon.
        \item The authors should discuss the computational efficiency of the proposed algorithms and how they scale with dataset size.
        \item If applicable, the authors should discuss possible limitations of their approach to address problems of privacy and fairness.
        \item While the authors might fear that complete honesty about limitations might be used by reviewers as grounds for rejection, a worse outcome might be that reviewers discover limitations that aren't acknowledged in the paper. The authors should use their best judgment and recognize that individual actions in favor of transparency play an important role in developing norms that preserve the integrity of the community. Reviewers will be specifically instructed to not penalize honesty concerning limitations.
    \end{itemize}

\item {\bf Theory Assumptions and Proofs}
    \item[] Question: For each theoretical result, does the paper provide the full set of assumptions and a complete (and correct) proof?
    \item[] Answer: \answerYes{} %
    \item[] Justification: Each theorem statement provides all the assumptions and we provide complete proofs for all statements that are either in the main body of the paper or in the appendix.
    \item[] Guidelines:
    \begin{itemize}
        \item The answer NA means that the paper does not include theoretical results. 
        \item All the theorems, formulas, and proofs in the paper should be numbered and cross-referenced.
        \item All assumptions should be clearly stated or referenced in the statement of any theorems.
        \item The proofs can either appear in the main paper or the supplemental material, but if they appear in the supplemental material, the authors are encouraged to provide a short proof sketch to provide intuition. 
        \item Inversely, any informal proof provided in the core of the paper should be complemented by formal proofs provided in appendix or supplemental material.
        \item Theorems and Lemmas that the proof relies upon should be properly referenced. 
    \end{itemize}

    \item {\bf Experimental Result Reproducibility}
    \item[] Question: Does the paper fully disclose all the information needed to reproduce the main experimental results of the paper to the extent that it affects the main claims and/or conclusions of the paper (regardless of whether the code and data are provided or not)?
    \item[] Answer: \answerNA{} %
    \item[] Justification: The paper is theoretical in nature and does not include experiments.
    \item[] Guidelines:
    \begin{itemize}
        \item The answer NA means that the paper does not include experiments.
        \item If the paper includes experiments, a No answer to this question will not be perceived well by the reviewers: Making the paper reproducible is important, regardless of whether the code and data are provided or not.
        \item If the contribution is a dataset and/or model, the authors should describe the steps taken to make their results reproducible or verifiable. 
        \item Depending on the contribution, reproducibility can be accomplished in various ways. For example, if the contribution is a novel architecture, describing the architecture fully might suffice, or if the contribution is a specific model and empirical evaluation, it may be necessary to either make it possible for others to replicate the model with the same dataset, or provide access to the model. In general. releasing code and data is often one good way to accomplish this, but reproducibility can also be provided via detailed instructions for how to replicate the results, access to a hosted model (e.g., in the case of a large language model), releasing of a model checkpoint, or other means that are appropriate to the research performed.
        \item While NeurIPS does not require releasing code, the conference does require all submissions to provide some reasonable avenue for reproducibility, which may depend on the nature of the contribution. For example
        \begin{enumerate}
            \item If the contribution is primarily a new algorithm, the paper should make it clear how to reproduce that algorithm.
            \item If the contribution is primarily a new model architecture, the paper should describe the architecture clearly and fully.
            \item If the contribution is a new model (e.g., a large language model), then there should either be a way to access this model for reproducing the results or a way to reproduce the model (e.g., with an open-source dataset or instructions for how to construct the dataset).
            \item We recognize that reproducibility may be tricky in some cases, in which case authors are welcome to describe the particular way they provide for reproducibility. In the case of closed-source models, it may be that access to the model is limited in some way (e.g., to registered users), but it should be possible for other researchers to have some path to reproducing or verifying the results.
        \end{enumerate}
    \end{itemize}

\item {\bf Open access to data and code}
    \item[] Question: Does the paper provide open access to the data and code, with sufficient instructions to faithfully reproduce the main experimental results, as described in supplemental material?
    \item[] Answer: \answerNA{} %
    \item[] Justification: The paper is theoretical in nature and does not include experiments.
    \item[] Guidelines:
    \begin{itemize}
        \item The answer NA means that paper does not include experiments requiring code.
        \item Please see the NeurIPS code and data submission guidelines (\url{https://nips.cc/public/guides/CodeSubmissionPolicy}) for more details.
        \item While we encourage the release of code and data, we understand that this might not be possible, so “No” is an acceptable answer. Papers cannot be rejected simply for not including code, unless this is central to the contribution (e.g., for a new open-source benchmark).
        \item The instructions should contain the exact command and environment needed to run to reproduce the results. See the NeurIPS code and data submission guidelines (\url{https://nips.cc/public/guides/CodeSubmissionPolicy}) for more details.
        \item The authors should provide instructions on data access and preparation, including how to access the raw data, preprocessed data, intermediate data, and generated data, etc.
        \item The authors should provide scripts to reproduce all experimental results for the new proposed method and baselines. If only a subset of experiments are reproducible, they should state which ones are omitted from the script and why.
        \item At submission time, to preserve anonymity, the authors should release anonymized versions (if applicable).
        \item Providing as much information as possible in supplemental material (appended to the paper) is recommended, but including URLs to data and code is permitted.
    \end{itemize}

\item {\bf Experimental Setting/Details}
    \item[] Question: Does the paper specify all the training and test details (e.g., data splits, hyperparameters, how they were chosen, type of optimizer, etc.) necessary to understand the results?
    \item[] Answer: \answerNA{} %
    \item[] Justification: The paper is theoretical in nature and does not include experiments.
    \item[] Guidelines:
    \begin{itemize}
        \item The answer NA means that the paper does not include experiments.
        \item The experimental setting should be presented in the core of the paper to a level of detail that is necessary to appreciate the results and make sense of them.
        \item The full details can be provided either with the code, in appendix, or as supplemental material.
    \end{itemize}

\item {\bf Experiment Statistical Significance}
    \item[] Question: Does the paper report error bars suitably and correctly defined or other appropriate information about the statistical significance of the experiments?
    \item[] Answer: \answerNA{} %
    \item[] Justification: The paper is theoretical in nature and does not include experiments.
    \item[] Guidelines:
    \begin{itemize}
        \item The answer NA means that the paper does not include experiments.
        \item The authors should answer "Yes" if the results are accompanied by error bars, confidence intervals, or statistical significance tests, at least for the experiments that support the main claims of the paper.
        \item The factors of variability that the error bars are capturing should be clearly stated (for example, train/test split, initialization, random drawing of some parameter, or overall run with given experimental conditions).
        \item The method for calculating the error bars should be explained (closed form formula, call to a library function, bootstrap, etc.)
        \item The assumptions made should be given (e.g., Normally distributed errors).
        \item It should be clear whether the error bar is the standard deviation or the standard error of the mean.
        \item It is OK to report 1-sigma error bars, but one should state it. The authors should preferably report a 2-sigma error bar than state that they have a 96\% CI, if the hypothesis of Normality of errors is not verified.
        \item For asymmetric distributions, the authors should be careful not to show in tables or figures symmetric error bars that would yield results that are out of range (e.g. negative error rates).
        \item If error bars are reported in tables or plots, The authors should explain in the text how they were calculated and reference the corresponding figures or tables in the text.
    \end{itemize}

\item {\bf Experiments Compute Resources}
    \item[] Question: For each experiment, does the paper provide sufficient information on the computer resources (type of compute workers, memory, time of execution) needed to reproduce the experiments?
    \item[] Answer: \answerNA{} %
    \item[] Justification: The paper is theoretical in nature and does not include experiments.
    \item[] Guidelines:
    \begin{itemize}
        \item The answer NA means that the paper does not include experiments.
        \item The paper should indicate the type of compute workers CPU or GPU, internal cluster, or cloud provider, including relevant memory and storage.
        \item The paper should provide the amount of compute required for each of the individual experimental runs as well as estimate the total compute. 
        \item The paper should disclose whether the full research project required more compute than the experiments reported in the paper (e.g., preliminary or failed experiments that didn't make it into the paper). 
    \end{itemize}
    
\item {\bf Code Of Ethics}
    \item[] Question: Does the research conducted in the paper conform, in every respect, with the NeurIPS Code of Ethics \url{https://neurips.cc/public/EthicsGuidelines}?
    \item[] Answer: \answerYes{} %
    \item[] Justification: Our research conforms in every respect with the NeurIPS Code of Ethics.
    \item[] Guidelines:
    \begin{itemize}
        \item The answer NA means that the authors have not reviewed the NeurIPS Code of Ethics.
        \item If the authors answer No, they should explain the special circumstances that require a deviation from the Code of Ethics.
        \item The authors should make sure to preserve anonymity (e.g., if there is a special consideration due to laws or regulations in their jurisdiction).
    \end{itemize}

\item {\bf Broader Impacts}
    \item[] Question: Does the paper discuss both potential positive societal impacts and negative societal impacts of the work performed?
    \item[] Answer: \answerNA{} %
    \item[] Justification: The work is theoretical and we do not see any major or immediate implications
    \item[] Guidelines:
    \begin{itemize}
        \item The answer NA means that there is no societal impact of the work performed.
        \item If the authors answer NA or No, they should explain why their work has no societal impact or why the paper does not address societal impact.
        \item Examples of negative societal impacts include potential malicious or unintended uses (e.g., disinformation, generating fake profiles, surveillance), fairness considerations (e.g., deployment of technologies that could make decisions that unfairly impact specific groups), privacy considerations, and security considerations.
        \item The conference expects that many papers will be foundational research and not tied to particular applications, let alone deployments. However, if there is a direct path to any negative applications, the authors should point it out. For example, it is legitimate to point out that an improvement in the quality of generative models could be used to generate deepfakes for disinformation. On the other hand, it is not needed to point out that a generic algorithm for optimizing neural networks could enable people to train models that generate Deepfakes faster.
        \item The authors should consider possible harms that could arise when the technology is being used as intended and functioning correctly, harms that could arise when the technology is being used as intended but gives incorrect results, and harms following from (intentional or unintentional) misuse of the technology.
        \item If there are negative societal impacts, the authors could also discuss possible mitigation strategies (e.g., gated release of models, providing defenses in addition to attacks, mechanisms for monitoring misuse, mechanisms to monitor how a system learns from feedback over time, improving the efficiency and accessibility of ML).
    \end{itemize}
    
\item {\bf Safeguards}
    \item[] Question: Does the paper describe safeguards that have been put in place for responsible release of data or models that have a high risk for misuse (e.g., pretrained language models, image generators, or scraped datasets)?
    \item[] Answer: \answerNA{} %
    \item[] Justification: The work is theoretical.
    \item[] Guidelines:
    \begin{itemize}
        \item The answer NA means that the paper poses no such risks.
        \item Released models that have a high risk for misuse or dual-use should be released with necessary safeguards to allow for controlled use of the model, for example by requiring that users adhere to usage guidelines or restrictions to access the model or implementing safety filters. 
        \item Datasets that have been scraped from the Internet could pose safety risks. The authors should describe how they avoided releasing unsafe images.
        \item We recognize that providing effective safeguards is challenging, and many papers do not require this, but we encourage authors to take this into account and make a best faith effort.
    \end{itemize}

\item {\bf Licenses for existing assets}
    \item[] Question: Are the creators or original owners of assets (e.g., code, data, models), used in the paper, properly credited and are the license and terms of use explicitly mentioned and properly respected?
    \item[] Answer: \answerNA{} %
    \item[] Justification: This paper does not use existing assets.
    \item[] Guidelines:
    \begin{itemize}
        \item The answer NA means that the paper does not use existing assets.
        \item The authors should cite the original paper that produced the code package or dataset.
        \item The authors should state which version of the asset is used and, if possible, include a URL.
        \item The name of the license (e.g., CC-BY 4.0) should be included for each asset.
        \item For scraped data from a particular source (e.g., website), the copyright and terms of service of that source should be provided.
        \item If assets are released, the license, copyright information, and terms of use in the package should be provided. For popular datasets, \url{paperswithcode.com/datasets} has curated licenses for some datasets. Their licensing guide can help determine the license of a dataset.
        \item For existing datasets that are re-packaged, both the original license and the license of the derived asset (if it has changed) should be provided.
        \item If this information is not available online, the authors are encouraged to reach out to the asset's creators.
    \end{itemize}

\item {\bf New Assets}
    \item[] Question: Are new assets introduced in the paper well documented and is the documentation provided alongside the assets?
    \item[] Answer: \answerNA{} %
    \item[] Justification: This paper does not release new assets.
    \item[] Guidelines:
    \begin{itemize}
        \item The answer NA means that the paper does not release new assets.
        \item Researchers should communicate the details of the dataset/code/model as part of their submissions via structured templates. This includes details about training, license, limitations, etc. 
        \item The paper should discuss whether and how consent was obtained from people whose asset is used.
        \item At submission time, remember to anonymize your assets (if applicable). You can either create an anonymized URL or include an anonymized zip file.
    \end{itemize}

\item {\bf Crowdsourcing and Research with Human Subjects}
    \item[] Question: For crowdsourcing experiments and research with human subjects, does the paper include the full text of instructions given to participants and screenshots, if applicable, as well as details about compensation (if any)? 
    \item[] Answer: \answerNA{} %
    \item[] Justification: This paper does not involve crowdsourcing nor research with human subjects.
    \item[] Guidelines:
    \begin{itemize}
        \item The answer NA means that the paper does not involve crowdsourcing nor research with human subjects.
        \item Including this information in the supplemental material is fine, but if the main contribution of the paper involves human subjects, then as much detail as possible should be included in the main paper. 
        \item According to the NeurIPS Code of Ethics, workers involved in data collection, curation, or other labor should be paid at least the minimum wage in the country of the data collector. 
    \end{itemize}

\item {\bf Institutional Review Board (IRB) Approvals or Equivalent for Research with Human Subjects}
    \item[] Question: Does the paper describe potential risks incurred by study participants, whether such risks were disclosed to the subjects, and whether Institutional Review Board (IRB) approvals (or an equivalent approval/review based on the requirements of your country or institution) were obtained?
    \item[] Answer: \answerNA{} %
    \item[] Justification: This paper does not involve crowdsourcing nor research with human subjects.
    \item[] Guidelines:
    \begin{itemize}
        \item The answer NA means that the paper does not involve crowdsourcing nor research with human subjects.
        \item Depending on the country in which research is conducted, IRB approval (or equivalent) may be required for any human subjects research. If you obtained IRB approval, you should clearly state this in the paper. 
        \item We recognize that the procedures for this may vary significantly between institutions and locations, and we expect authors to adhere to the NeurIPS Code of Ethics and the guidelines for their institution. 
        \item For initial submissions, do not include any information that would break anonymity (if applicable), such as the institution conducting the review.
    \end{itemize}

\end{enumerate}
\fi
\end{document}